\newcommand{\poly}{\mathrm{poly}}
\newcommand{\sta}{\mathsf{st}}
\def\cA{\mathcal{A}}
\def\cE{\mathcal{E}}
\def\cS{\mathcal{S}}
\def\cT{\mathcal{T}}
\def\cF{\mathcal{F}}
\def\mb{\overline{m}}
\newcommand{\floor}[1]{\lfloor #1 \rfloor}
\newcommand{\ceil}[1]{\lceil #1 \rceil}
\newcommand{\abs}[1]{\left|#1\right|}
\newcommand{\expect}{\mathbb{E}}
\newcommand{\indict}{\mathbb{I}}
\newcommand{\states}{\mathcal{S}}
\newcommand{\trans}{P}
\newcommand{\actions}{\mathcal{A}}
\newcommand{\mdp}{M}
\newcommand{\mc}{C}
\newcommand{\qtile}{\mathcal{Q}}
\newtheorem{fact}{Fact}[section]
\newtheorem{theorem}{Theorem}[section]
\newtheorem{lemma}[theorem]{Lemma}
\newtheorem{corollary}[theorem]{Corollary}
\newtheorem{assumption}{Assumption}[section]
\theoremstyle{definition}
\title{Settling the Horizon-Dependence of Sample Complexity in Reinforcement Learning}
\date{}
\author{Yuanzhi Li \\ Carnegie Mellon University \\ \texttt{yuanzhil@andrew.cmu.edu} \and Ruosong Wang \\ Carnegie Mellon University \\ \texttt{ruosongw@andrew.cmu.edu} \and Lin F. Yang \\ University of California, Los Angles \\ \texttt{linyang@ee.ucla.edu}}
\begin{document}
\begin{titlepage}
\maketitle
\thispagestyle{empty}
\begin{abstract}
Recently there is a surge of interest in understanding the horizon-dependence of the sample complexity in reinforcement learning (RL). Notably, for an RL environment with horizon length $H$, previous work have shown that there is a probably approximately correct (PAC) algorithm that learns an $O(1)$-optimal policy using $\mathrm{polylog}(H)$ episodes of environment interactions when the number of states and actions is fixed. It is yet unknown whether the $\mathrm{polylog}(H)$ dependence is necessary or not. In this work, we resolve this question by developing an algorithm that achieves the same PAC guarantee while using only $O(1)$ episodes of environment interactions, completely settling the horizon-dependence of the sample complexity in RL. We achieve this bound by (i) establishing a connection between value functions in discounted and finite-horizon Markov decision processes (MDPs) and (ii) a novel perturbation analysis in MDPs. We believe our new techniques are of independent interest and could be applied in related questions in RL. 
\end{abstract}
\end{titlepage}
\section{Introduction}

Reinforcement learning (RL) is one of the most important paradigms in machine learning.
What makes RL different from other paradigms is that it models the long-term effects in decision-making problems. 
For instance, in a finite-horizon Markov decision process (MDP), which is one of the most fundamental models for RL,  an agent interacts with the environment for a total of $H$ steps and receives a sequence of $H$ random reward values, along with stochastic state transitions, as feedback.
The goal of the agent is to find a policy to maximize the \emph{expected sum} of these rewards values instead of any single one of them.
Since decisions made at early stages could significantly impact the future, the agent must take possible future transitions into consideration when choosing the policy. 
On the other hand, when $H=1$, RL reduces to the contextual bandits problem in which it suffices to act myopically to achieve optimality.

Due to the important role of the horizon length in RL, Jiang and Agarwal~\cite{jiang2018open}  propose to study how the sample complexity of RL depends on the horizon length. 
 More formally, let us consider the episodic RL setting, where the horizon length is $H$ and the underlying MDP has unknown and time invariant transition probabilities and rewards. 
Starting from some unknown but fixed  initial state distribution, how many episodes of interaction with the MDP do we need to learn an $\epsilon$-optimal policy, whose \emph{value} (the expected sum of rewards collected by the policy) differs from the optimal policy by at most $\epsilon V_{\max}$?
Here $V_{\max}$ is the maximum possible sum of rewards in an episode and $\epsilon > 0$ is the target accuracy. 
Jiang and Agarwal~\cite{jiang2018open} conjecture that the number of episodes required by the above problem depend polynomially on $H$ even when the total number of state-action pairs is a constant. 
Wang et al.~\cite{wang2020long} refute this conjecture by showing that the correct sample complexity of episodic RL is  at most $\poly((\log H)/\epsilon)$ when there are constant number of states and actions. 
The result in~\cite{wang2020long} shows a surprising fact that one can achieve similar sample efficiency regardless of the horizon length of the RL problem. 
A number of recent results \cite{zhang2020reinforcement, menard2021ucb, zhang2020nearly, ren2021nearly} additionally improve the result in ~\cite{wang2020long} to obtain more (computationally or statistically) efficient algorithm and/or in more general settings. However, these results all have the $\poly((\log H)/\epsilon)$ factor in their sample complexity, seemingly implying that the $\mathrm{polylog}(H)$ dependence is necessary.
Hence, it is natural to ask:
\begin{center}
\emph{What is the exact dependence on the horizon length $H$ of the sample complexity in episodic RL?}
\end{center}
The goal of the current paper is to answer the above question. 

Compared to other problems in machine learning, RL is challenging mainly because it requires an effective exploration mechanism. 
For example, to obtain samples from some state in an MDP, one needs to first learn a policy that can reach that state with good probability. 
In order to decouple the sample complexity of exploration and that of learning a near-optimal policies, 
there is a line of research (see, e.g., \cite{kearns1999finite, azar2013minimax, sidford2018near, wang2017randomized, yang2019sample, li2020breaking}) studying the \emph{generative model} setting, where a learner is able to query as many samples as possible from each state-action pair, circumventing the issue of exploration.
The above sample complexity question is still well-posed even in the generative model.
However, to achieve fair comparison with the RL setting, we measure the sample complexity as the \emph{number of batches}, where a batch of queries corresponds to $H$ queries in the generative model. Indeed, in the episodic RL setting, the learner can also obtain $H$ samples in each episode.
Nevertheless, we are not aware of any result addressing the above question even with a generative model.

\subsection{Our Results}

In this paper, we settle the horizon-dependence of the sample complexity in episodic RL by developing an algorithm whose sample complexity is completely independent of the planning horizon $H$.
We state our results in the following theorems, which target RL with an $H$-horizon MDP (formally defined in Section~\ref{sec:pre}) with state space $\states$ and action space $\actions$.

\begin{theorem}[Informal version of Corollary~\ref{corollary:rl_main}]\label{thm:rl}
Suppose the reward at each time step is non-negative and the total reward of each episode is upper bounded by 1.\footnote{Without loss of generality, we set $V_{\max}=1$.}
Given a target accuracy $\epsilon > 0$ and a failure probability $\delta > 0$, our algorithm returns an $\epsilon$-optimal \emph{non-stationary} policy with probability at least $1- \delta$ by sampling at most $(|\states||\actions|)^{O(|\states|)} \cdot \log(1/ \delta) / \epsilon^5$ episodes, where $|\states|$ is number of states and $|\actions|$ is the number of actions. 
\end{theorem}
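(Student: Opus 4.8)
My strategy has three stages: a \emph{structural reduction} showing the $H$-horizon problem is captured, up to error $\epsilon$, by a horizon-\emph{independent} number of ``checkpoint'' value functions (this is where the discounted/stationary picture enters); a \emph{sample-efficient estimation} of the objects needed to solve the reduced problem; and a \emph{perturbation analysis} that turns those estimates into an $\epsilon$-optimal policy without ever paying a $\poly(H)$ factor.

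\textbf{Stage 1: from finite horizon to a stationary/discounted picture.} Let $V_h^\star\in[0,1]^{\states}$ be the optimal value-to-go with $h$ steps remaining, so $V_0^\star=0$ and $V_h^\star=\max_a\bigl(r(\cdot,a)+P(\cdot,a)^\top V_{h-1}^\star\bigr)$. Because rewards are non-negative and every episode accrues total reward at most $1$, the Bellman operator's monotonicity gives $V_0^\star\le V_1^\star\le\cdots\le V_H^\star\le\mathbf 1$ coordinate-wise, and the telescoping sum $\sum_{h}(V_{h+1}^\star-V_h^\star)(s)=V_H^\star(s)\le 1$ shows each coordinate has total variation at most $1$. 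Rounding every coordinate to a grid of width $\eta=\poly(\epsilon)$, only $O(|\states|/\eta)$ distinct rounded value functions can appear in $V_0^\star,\dots,V_H^\star$; equivalently $\{1,\dots,H\}$ splits into $O(|\states|/\eta)$ contiguous \emph{phases} on each of which $V_h^\star$ is constant up to $\eta$ and the greedy policy with respect to the phase's rounded target is a near-optimal \emph{stationary} policy. This is the precise sense in which a finite-horizon MDP reduces to a stationary/discounted one: inside a phase one is merely iterating a fixed Bellman back-up toward a fixed target. I would then prove it suffices to (i) recover the $O(|\states|/\eta)$ checkpoint value vectors up to $O(\eta)$ in $\ell_\infty$, and (ii) play, at each step, the greedy policy of the checkpoint active at that step.

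\textbf{Stage 2: estimation from $O(1)$ episodes.} Since the reduced problem involves only a horizon-independent list of vectors in $[0,1]^{\states}$, each inducing a greedy policy, I would enumerate all candidate non-stationary policies obtained by choosing $O(|\states|)$ rounded value vectors from the grid and gluing their greedy policies along phase boundaries --- of which there are at most $(|\states||\actions|)^{O(|\states|)}$ --- and evaluate each one by Monte-Carlo roll-outs, returning the empirically best. A roll-out's return has variance $O(1)$, again by the total-reward-$\le 1$ assumption (a law-of-total-variance bound), so $\poly(1/\epsilon)\cdot\log((|\states||\actions|)^{O(|\states|)}/\delta)$ episodes per candidate suffice, giving the claimed $(|\states||\actions|)^{O(|\states|)}\log(1/\delta)/\poly(\epsilon)$ total. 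Roll-out evaluation conveniently sidesteps the exploration issue; one subtlety to handle in the construction is that the phase structure must be encoded in a time-homogeneous way (e.g.\ a randomized ``level'' counter that advances with small guessed probabilities) so that running a candidate requires no knowledge of $H$.

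\textbf{Stage 3: perturbation analysis (the crux).} The remaining task is to show that a policy that is $\eta$-near-greedy with respect to $\eta$-accurate estimates of the checkpoints is $\epsilon$-optimal for the true $H$-horizon MDP, with $\eta=\poly(\epsilon)$ and \emph{no} $H$-dependence. The naive performance-difference argument sums one-step suboptimalities over $H$ steps and loses a factor of $H$; avoiding this is the whole point. The key is a self-bounding / law-of-total-variance estimate showing the accumulated error is controlled by $\sqrt{\eta\cdot\expect[\text{sum of per-step value variances along a trajectory}]}$ plus lower-order terms, where the expected sum of per-step variances is $O(V_{\max}^2)=O(1)$ precisely because each episode's total reward is at most $1$ --- hence the whole bound is horizon-free. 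I expect this to be the main obstacle: the variance argument must be pushed simultaneously through (a) the phase decomposition, (b) the value-function rounding, and (c) the perturbation of the estimated model relative to the true one, with every resulting term kept independent of $H$. This is the ``novel perturbation analysis'' the introduction alludes to; once it is in place, Stages 1 and 2 are comparatively routine.
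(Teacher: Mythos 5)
There is a genuine gap, and it sits exactly where you predicted: Stage 3. Your Stage 1 reduction needs the greedy policy of each phase's rounded checkpoint to be near-optimal for the \emph{whole} episode, but the per-step loss of acting greedily with respect to a vector $U$ with $\|U-V^\star_{H-h-1}\|_\infty\le\eta$ is a systematic bias of size up to $2\eta$, and by the performance-difference lemma these biases add up to $2\eta H$. A law-of-total-variance argument cannot rescue this: such arguments bound the sum of conditional \emph{variances} of the value along a trajectory by $O(V_{\max}^2)$, not the sum of action-selection suboptimalities, which have no reason to cancel. (The paper explicitly remarks that the total-variance route of Zhang et al.\ inherently retains $\mathrm{polylog}(H)$ factors.) To make your gluing $\epsilon$-optimal you would need $\eta=O(\epsilon/H)$, which inflates the number of phases to $O(|\states|H/\epsilon)$ and destroys the horizon-independence. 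Relatedly, your candidate count is off even with $\eta=\poly(\epsilon)$: there are $O(|\states|/\eta)$ phases, each checkpoint is drawn from a grid of size $(1/\eta)^{|\states|}$, and the phase boundaries (or the "guessed" advance probabilities of your randomized counter) must be chosen from an $H$-dependent set, so the enumeration is not $(|\states||\actions|)^{O(|\states|)}$; and since you evaluate each candidate by separate roll-outs, the candidate count multiplies the sample complexity rather than entering only logarithmically.

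The paper's route is structurally different and avoids both problems. It never decomposes the optimal policy into phases. Instead it (i) proves a combinatorial fact about time-invariant Markov chains --- the probability of reaching a state within $2L$ steps exceeds that within $L$ steps by at most a factor $\exp(O(|\states|\log|\states|))$ --- which implies that \emph{stationary} policies are within that factor of the best non-stationary policy for both value and visitation frequency; (ii) explores by enumerating all $|\actions|^{2|\states|}$ concatenations of \emph{pairs} of stationary policies (switch from $\pi_1$ to $\pi_2$ upon first visiting a target $(s,a)$), which guarantees enough samples of every reachable state-action pair; and (iii) proves a perturbation bound showing that the \emph{probability of each typical trajectory} is preserved up to a constant multiplicative factor under the empirical model, via a cancellation argument exploiting $\sum_{s'}\widehat P(s'\mid s,a)=1$ (Lemma~\ref{lem:mult-add}), combined with pessimistic planning over a confidence set. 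The horizon-free error comes from this multiplicative trajectory-level control, not from a variance decomposition of value functions.
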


Notably, when the number states $|\states|$, the number of actions $|\actions|$, the desired accuracy $\epsilon$ and the failure probability $\delta$ are all fixed constants, the (episode) sample complexity of our algorithm is also a constant. 
Our result suggests that episodic RL is possible with a sample complexity that is completely independent of the horizon length $H$.

In fact, we can show a much more general result besides finding the optimal policy.
We actually prove that using the same number of episodes, one can construct an oracle that returns an $\epsilon$-approximation of the value of any non-stationary policy $\pi$, given as the following theorem.
\begin{theorem}[Informal version of Theorem~\ref{thm:rl_oracle}]\label{thm:extend}

In the same setting as Theorem~\ref{thm:rl}, with $(|\states||\actions|)^{O(|\states|)} \cdot \log(1/ \delta) / \epsilon^5$ episodes, with probability at least $1 - \delta$, we can construct an oracle $\mathcal{O}$ such that for \emph{every} given non-stationary policy $\pi$, $\mathcal{O}$ returns an $\epsilon$ approximation of the expected total reward of $\pi$.
\end{theorem}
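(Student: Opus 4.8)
The plan is to build the oracle entirely from a learned model. First I would run an $H$-independent number of episodes of reward-free exploration to construct an empirical MDP $\widehat M = (\widehat P, \widehat r)$, and then define $\mathcal{O}(\pi)$ to be the value of $\pi$ computed exactly inside $\widehat M$ by backward induction over the $H$ stages. Since $\mathcal{O}(\pi)$ is a deterministic function of $\widehat M$ alone, a single data-collection phase answers \emph{every} query, so the whole task reduces to a uniform approximation statement: $\widehat M$ must be learned accurately enough — and with an $H$-free per-state sample requirement — that $|\mathcal{O}(\pi) - V^\pi| \le \epsilon$ holds simultaneously for all non-stationary $\pi$. Given such a uniform guarantee, Theorem~\ref{thm:rl} is immediate: output $\widehat\pi \in \argmax_\pi \mathcal{O}(\pi)$, and then $V^{\widehat\pi} \ge \mathcal{O}(\widehat\pi) - \epsilon \ge \mathcal{O}(\pi^\star) - \epsilon \ge V^{\pi^\star} - 2\epsilon$ (rescale $\epsilon$).

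The technical core, and the step I expect to be the main obstacle, is a perturbation lemma asserting that knowing $\widehat M$ to an accuracy $\epsilon' = \poly(\epsilon/(|\states||\actions|))$ that is \emph{independent of $H$} already forces $|\mathcal{O}(\pi) - V^\pi| \le \epsilon$ for all $\pi$. The naive analysis fails badly here: per-stage errors of a learned kernel compound additively across the $H$ stages of the Bellman recursion, seemingly demanding accuracy $\epsilon/H$. The point is to amortize these errors against the reward budget $\sum_h \mathbb{E}^\pi[r_h] \le 1$, which forces the per-stage rewards — and, through the recursion, the tail-value quantities that multiply the transition errors — to be small on average precisely when there are many stages. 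I would make this quantitative through the finite-horizon/discounted connection advertised in the abstract: I would show that the error propagation for the $H$-horizon value obeys the same variance-weighted recursion as that of a discounted value with $\gamma = 1-\poly(\epsilon)$, so its accumulation is governed by an effective horizon $\poly(1/\epsilon)$ rather than by $H$. The delicate part is that this must hold for \emph{arbitrary} non-stationary $\pi$, whose induced chain need not mix and whose reward may sit anywhere in time, so I expect to need a potential-function argument that tracks the remaining-reward budget jointly with the ($|\states|$-dimensional) state distribution, rather than any spectral-gap or mixing hypothesis.

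With the perturbation lemma in place, the remaining pieces are comparatively routine. On the statistical side, any state that is reachable with total probability at least $\epsilon$ under some policy has its outgoing transition distribution learnable to $\ell_1$-accuracy $\epsilon'$ from $\poly(|\states|,|\actions|,1/\epsilon)\log(1/\delta)$ episodes, by standard Bernstein-type concentration together with a reward-free exploration procedure that, over at most $|\states|$ rounds, incrementally enlarges the set of states it can reliably visit; this exploration, branching at each round over which state to target next (and, if needed, a covering of the relevant $O(|\states|)$-dimensional space of reach-probability profiles), is where the $(|\states||\actions|)^{O(|\states|)}$ episode count originates, while the $\epsilon^{-5}$ reflects composing the $(\epsilon')^{-2}$ concentration rate with the polynomial accuracy $\epsilon' = \poly(\epsilon)$ demanded by the perturbation lemma. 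Any state reachable only with total probability below $\epsilon$ is simply dropped from $\widehat M$: by the reward bound its rewards, and any downstream effect of mis-modelling it, contribute at most $O(\epsilon)$ to every $V^\pi$, which is the one place the argument genuinely exploits $V_{\max}=1$ to avoid an $H$-dependent price. Combining the perturbation lemma with these estimation guarantees, evaluating $\pi$ in $\widehat M$ incurs total error $O(\epsilon)$ uniformly over $\pi$, which proves Theorem~\ref{thm:rl_oracle} and hence Theorem~\ref{thm:extend}.
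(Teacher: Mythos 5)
There is a genuine gap, and it sits exactly where you predicted: the perturbation lemma. The statement you want --- that learning $\widehat{P}$ to an additive accuracy $\epsilon' = \poly(\epsilon/(|\states||\actions|))$ \emph{independent of $H$} forces $|V^\pi_{\widehat M,H} - V^\pi_{M,H}| \le \epsilon$ for all $\pi$ --- is false. Take two states $s,t$ with $t$ absorbing and rewardless, one action with deterministic reward $1/H$ at $s$, and $P(t \mid s,a) = p$. The total reward is at most $1$, yet the value $\frac{1-(1-p)^H}{pH}$ equals $1$ at $p=0$ and about $1-e^{-1}$ at $p = 1/H$; a transition perturbation of size $1/H$ moves the value by a constant. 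So no $H$-free additive accuracy on the model can yield a uniform $\epsilon$-approximation of values, and your proposed proof mechanism (amortizing the Bellman-recursion error against the reward budget, or an ``effective horizon $\poly(1/\epsilon)$'' via a discounted surrogate) cannot rescue it: the per-stage error multiplies the value-to-go, not the per-stage reward, and $\sum_h \expect[\|V_{h+1}\|_\infty]$ can be $\Theta(H)$ even when $\sum_h r_h \le 1$. This is precisely the obstruction the paper is built to circumvent. The paper's resolution is not a dynamic-programming error recursion at all but a multiplicative approximation of \emph{trajectory probabilities}: for typical trajectories $T$, $\prod_{s'}\widehat{P}(s'\mid s,a)^{m_T(s,a,s')}$ is a $(1\pm\epsilon)$-factor approximation of $\prod_{s'}P(s'\mid s,a)^{m_T(s,a,s')}$, which holds because errors across the simplex constraint $\sum_{s'}\widehat{P}(s'\mid s,a)=1$ cancel (Lemma~\ref{lem:mult-add}, used in Lemma~\ref{lem:perturbation_lower_bound}); even then the required accuracy is $|\widehat{P}-P| \lesssim \sqrt{P/\mb(s,a)}$, which scales like $1/\sqrt{H}$ for frequently visited pairs, not like a constant.

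This cascades into the other two components. First, since the accuracy needed for $(s,a)$ scales with its visitation count $\mb(s,a)$ (up to $H$), the exploration phase must deliver $\Omega(\mb(s,a))$ samples of each $(s,a)$ in $O(1)$ episodes, which a ``reach each state once per round'' layered scheme does not do; the paper instead enumerates pairs of \emph{stationary} policies and proves, via the Markov-chain reaching-probability bound (Lemma~\ref{lem:mc_main}), the discounted/finite-horizon comparison, and a martingale-stopping quantile argument (Corollary~\ref{cor:quantile_comparison}, Lemma~\ref{lem:stationary_quantile}), that these collect within an $\exp(O(|\states|\log|\states|))$ factor of the best non-stationary visitation quantile. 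Second, because the sample count for $(s,a)$ is governed by quantiles under the \emph{true} model while the error tolerance in Lemma~\ref{lem:perturbation_lower_bound} is policy-dependent, the plug-in value in $\widehat M$ is only guaranteed as a lower bound; the paper must define the oracle pessimistically, $\underline{V}^\pi = \min_{M\in\mathcal{M}} V^\pi_{M,H}$ over a confidence set, to get the two-sided estimate. Your plug-in oracle omits this. The high-level skeleton (one data-collection phase, a uniform simulation guarantee, argmax over the oracle) matches the paper, but the two load-bearing ideas --- cancellation in trajectory probabilities and stationary-policy exploration with quantile guarantees --- are missing, and the lemma you substitute for the first is refuted by the example above.
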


This theorem suggests that even completely learning the MDP (i.e., being able to simultaneously estimate the value of all non-stationary policies) can be done with a  sample complexity that is independent of $H$.

We now switch to the more powerful generative model and show that a better sample complexity can be achieved. 
In the generative model, the agent can query samples from any state-action pair and the initial state distribution of the environment.
Here, the sample complexity is defined as the total number of batches of queries (a batch corresponds to $H$ queries) to the environment to obtain an $\epsilon$-optimal policy. \footnote{It is well-known that any algorithm requires $\Omega(H)$ queries to find a near-optimal policy (see, e.g., \cite{lattimore2012pac}), and thus it is reasonable to define a single batch to be  $H$ queries. }

\begin{theorem}[Informal version of Theorem~\ref{thm:gen_model_main}]\label{thm:gen_model}
Suppose the reward at each time step is non-negative and the total reward of each episode is upper bounded by 1.
Given a target accuracy $\epsilon > 0$ and a failure probability $\delta > 0$, our algorithm returns an $\epsilon$-optimal \emph{non-stationary}  policy with probability at least $1- \delta$ by sampling at most $O(|\states|^6 |\actions|^4 \log(1 / \delta) / \epsilon^3)$ batches of queries in the \emph{generative model}, where $|\states|$ is number of states and $|\actions|$ is the number of actions. 
\end{theorem}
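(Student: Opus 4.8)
\medskip
\noindent\textbf{Proof proposal.}
The plan is to exploit the fact that in the generative model exploration is free, so the algorithm can be the most naive one: for each of the $|\states||\actions|$ state--action pairs draw a fixed number $N$ of samples of the next state (and of the instantaneous reward), form the empirical transition kernel $\hat P$ and empirical reward $\hat r$, plan \emph{optimally} in the empirical $H$-horizon MDP $\hat M$ by backward dynamic programming, and output the resulting non-stationary policy $\hat\pi$. All the work is then in two places: (a) a planning/perturbation statement --- how small must the kernel error $\eta$ be so that $\hat\pi$ is $\epsilon$-optimal in the true MDP $M$? --- and (b) checking that the required $N$ (hence the total query count $O(N|\states||\actions|)$, divided by the per-batch budget of $H$ queries, together with a trivial direct algorithm when $H$ is small) comes out to $O(|\states|^6|\actions|^4\log(1/\delta)/\epsilon^3)$.

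For ingredient (b)'s concentration part, $\ell_1$-concentration of an empirical distribution on $|\states|$ outcomes plus a union bound over the $|\states||\actions|$ pairs gives $\|\hat P(\cdot\mid s,a)-P(\cdot\mid s,a)\|_1\le\eta$ for all $(s,a)$ with probability $1-\delta$ from $N=\tilde O\!\big(|\states|\,\eta^{-2}\log(|\states||\actions|/\delta)\big)$ samples per pair; estimating $\hat r$ is cheaper. The crux is therefore (a): I would prove that for \emph{every} non-stationary policy $\pi$, $|V^\pi_M(\rho)-V^\pi_{\hat M}(\rho)|\le \poly(|\states|/\epsilon)\cdot g(\eta)$ for a suitable increasing $g$ with $g(\eta)\to 0$, with \emph{no dependence on $H$}; this is exactly where I would invoke the two advertised tools. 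First, the finite-horizon--to--discounted connection: because the optimal value with $k$ steps to go is nondecreasing in $k$ and bounded by the return cap $1$, it is, as a function of $k$, $\epsilon$-approximated by a function that takes only $\poly(|\states|/\epsilon)$ distinct value vectors (equivalently, one may pass to a discounted surrogate whose effective horizon is $\poly(|\states|/\epsilon)$), so the planning problem becomes essentially horizon-free at the cost of $O(\epsilon)$ error. Second, the simulation lemma telescopes $V^\pi_M-V^\pi_{\hat M}$ into a sum over time steps of the one-step kernel error tested against the future value function, and after recentering (using $\sum_{s'}\Delta P(s')=0$) each term becomes $\eta$ times a centered deviation of the future value.

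The main obstacle is pushing this decomposition to an $H$-independent bound. The naive estimate is $\eta H$; the law of total variance (the sum over steps of the per-step variance of the return equals the variance of the total return, which is at most $1$ since returns lie in $[0,1]$) together with Cauchy--Schwarz only improves this to $\eta\sqrt H$. Closing the remaining gap requires the novel perturbation analysis: one must combine the step-to-go near-constancy of the value functions --- so that the recentered future value is genuinely negligible outside the $\poly(|\states|/\epsilon)$ ``blocks'' identified above --- with a careful treatment of the fact that a single-step reward can be as large as $1$ (only the \emph{return} is $\epsilon$-scale bounded in aggregate), charging the contribution of such a reward to the visitation probability of the state that carries it rather than to the horizon. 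Granting this perturbation bound, one runs the estimation with $\eta$ a suitable polynomial in $\epsilon/|\states|$ so that $\poly(|\states|/\epsilon)\cdot g(\eta)\le\epsilon/3$; combined with the $O(\epsilon)$ reduction error and the $1-\delta$ concentration event, $\hat\pi$ is $\epsilon$-optimal, and the sample bookkeeping in (b) yields the claimed batch complexity.
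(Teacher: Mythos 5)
Your algorithm and outer structure match the paper's (empirical model from $N$ samples per state--action pair, exact planning in $\widehat M$, Bernstein/$\ell_1$ concentration plus a union bound), but the heart of the theorem --- the $H$-independent perturbation bound in step (a) --- is asserted rather than proven, and the route you sketch for it is not the one that works. You frame the perturbation step as a simulation-lemma telescoping of $V^\pi_M - V^\pi_{\widehat M}$ over time steps, observe that this gives $\eta H$ naively and $\eta\sqrt H$ via the law of total variance, and then say that closing the gap ``requires the novel perturbation analysis,'' invoking near-constancy of the value function in steps-to-go and a discounted surrogate. Two problems: first, the monotonicity-in-$k$ argument applies to the \emph{optimal} value function of a stationary MDP, whereas the simulation lemma tests the kernel error against the tail value functions $V^\pi_{h:H}$ of an arbitrary non-stationary policy, which are values of \emph{different} tail policies as $h$ varies and need not be monotone or near-constant; even granting $\poly(|\states|/\epsilon)$ blocks of near-constant future value, each block can still contain $\Theta(H)$ visits to a state--action pair, so the blocking alone does not remove the $H$ (or $\sqrt H$) factor. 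Second, the discounted--finite-horizon connection (Lemma~\ref{lem:discount_finite_compare}) is used in the paper only to show that \emph{stationary policies suffice for exploration} in the online setting; it plays no role in the generative-model theorem and does not help with policy evaluation error.

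What the paper actually does (Lemma~\ref{lem:perturbation_gen_model}) abandons the value-function decomposition entirely and works with trajectory probabilities: it shows that for all trajectories outside a set of probability $O(\epsilon)$, the product $\mu(s_0)\prod_{s,a,s'}\widehat P(s'\mid s,a)^{m_T(s,a,s')}$ is a $(1\pm\epsilon)$ \emph{multiplicative} approximation of the true trajectory probability. The mechanism is the cancellation $\sum_{s'}\bigl(\widehat P(s'\mid s,a)-P(s'\mid s,a)\bigr)=0$ inside the exponentiated product (Lemma~\ref{lem:mult-add}), which kicks in once one (i) restricts to trajectories with $m_T(s,a,s')\approx P(s'\mid s,a)\,m_T(s,a)\pm O(\sqrt{P(s'\mid s,a)H})$ (a martingale Chebyshev bound, Lemma~\ref{lemma:martingal_concent}), (ii) discards trajectories visiting transitions with $P(s'\mid s,a)=O(\epsilon/(|\states||\actions|H))$ (Lemma~\ref{lem:upperbound_on_visits}), and (iii) demands per-entry accuracy $|\widehat P(s'\mid s,a)-P(s'\mid s,a)|\lesssim \sqrt{P(s'\mid s,a)/H}\cdot\poly(\epsilon/(|\states||\actions|))$ --- a $P$-weighted error that Bernstein delivers with $N=O(H\cdot\poly(|\states||\actions|/\epsilon))$ samples per pair but that a uniform $\ell_1$ bound does not cleanly capture. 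The reward term additionally needs Lemma~\ref{lem:reward} (any revisitable pair has $R(s,a)\le 2|\states|/H$ a.s.) to avoid an $H$ factor. Without supplying an argument at this level of specificity, your proposal identifies where the difficulty lies but does not resolve it.
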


Compared to the result in Theorem~\ref{thm:rl}, the sample complexity in Theorem~\ref{thm:gen_model_main} has polynomial dependence on the number of states and has better dependence on the desired accuracy $\epsilon$. 

We remark that although our algorithms in Theorem~\ref{thm:rl} and Theorem~\ref{thm:gen_model} achieve tight dependence in terms of $H$, it does not aim to  tighten the dependence on the number of states $|\states|$, the number of actions $|\actions|$ and the desired accuracy $\epsilon$.
In fact, the sample complexity of our algorithms have much worse dependence in terms of $|\states|$, $|\actions|$ and $\epsilon$ compared to prior work. 
See Section~\ref{sec:related} for a detailed discussion on prior work, and Section~\ref{sec:tech} for an overview of our techniques.

\subsection{Related Work}\label{sec:related}
In this section, we discuss related work on the sample complexity of RL in the tabular setting, where the number of states $\abs{\states}$, the number of actions $\abs{\actions}$, and the horizon length $H$ are all assumed to be finite. There is a long line of research studying the sample complexity in tabular RL~\cite{kearns2002near,brafman2002r,kakade2003sample,strehl2006pac,strehl2008analysis,kolter2009near,bartlett2009regal,jaksch2010near,szita2010model,lattimore2012pac,osband2013more,dann2015sample,azar2017minimax,dann2017unifying,osband2017posterior,jin2018q,fruit2018near,talebi2018variance,dann2019policy,dong2019q,simchowitz2019non,russo2019worst,zhang2019regret,zhang2020almost,yang2021q,pacchiano2020optimism,neu2020unifying, wang2020long, zhang2020reinforcement, menard2021ucb, zhang2020nearly, ren2021nearly}.
In most prior work, up to a scaling factor, the standard assumption on the reward values is
that $r_h \in [0,1/H]$ for all $h$ and hence $\sum_{h} r_h \in [0, 1]$, where $r_h$ is the reward value collected at step $h$ of an episode.
We refer this assumption as the \emph{reward uniformity} assumption.
However, Jiang and Agarwal~\cite{jiang2018open} point out that one should only impose an upper bound on the summation of the reward values, i.e., $\sum_{h\in[H]} r_h\le 1$, which we refer as the \emph{bounded total reward} assumption, instead of imposing uniformity which could be much stronger.
This new assumption allows a fair comparison between long-horizon and short-horizon problems, and is also more natural when the reward signal is sparse.
Also note that algorithms designed under the reward uniformity assumption can be modified to work under the bounded total reward assumption. 
However, the sample complexity is usually worsen by a $\poly(H)$ factor.
Many existing results in tabular RL focuses on regret
minimization where the goal is to collect maximum cumulative rewards for
a limit number of interactions with the environment.
To draw comparison with our results, we convert them,  using standard techniques,  to the PAC setting, where the algorithm aims to obtain an $\epsilon$-optimal policy while minimizing the number of episodes of environment interactions (or batches of queries if in the generative model).


Under the reward uniformity assumption, a line of work have attempted to provide tight sample complexity bounds~\cite{azar2017minimax,dann2015sample,dann2017unifying,dann2019policy,jin2018q,osband2017posterior}.
To obtain an $\varepsilon$-optimal policy, state-of-the-art results show
that $\widetilde{O}\left(\frac{\abs{\states}\abs{\actions}}{\varepsilon^2} +
  \frac{\poly\left(\abs{\states},\abs{\actions},H\right)}{\varepsilon}\right)$\footnote{$\widetilde{O}\left(\cdot\right)$ omits logarithmic factors.}
episodes suffice~\cite{dann2019policy,azar2017minimax}.
In particular, the first term matches the lower bound $\Omega\left(\frac{\abs{\states}\abs{\actions}}{\varepsilon^2}\right)$ up to logarithmic factors~\cite{dann2015sample,osband2016on,azar2017minimax}
while the second term has least linear dependence on $H$.
Moreover, converting these bounds to the bounded total reward setting induces extra $\poly(H)$ factors.
For instance, the sample complexity in~\cite{azar2017minimax,dann2019policy} will become $\widetilde{O}\left(\frac{\abs{\states}\abs{\actions}H^2}{\varepsilon^2} +
  \frac{\poly\left(\abs{\states},\abs{\actions},H\right)}{\varepsilon}\right)$ under the bound total reward assumption

Recently, there is a surge of interest of designing algorithms under the bounded total reward assumption.
In particular, Zanette and  Brunskill~\cite{zanette2019tighter} develop an algorithm which enjoys
a sample complexity of $\widetilde{O}\left(\frac{\abs{\states}\abs{\actions}}{\varepsilon^2} + \frac{\poly\left(\abs{\states},\abs{\actions},H\right)}{\varepsilon}\right)$
, where the second term still scales \emph{polynomially} with $H$.
Wang et al.~\cite{wang2020long}  show that it is possible to obtain an $\epsilon$-optimal policy with a sample complexity of $\poly(|\states||\actions|/\epsilon)\cdot \log^3 H$, establishing the first sample complexity with $\mathrm{polylog}(H)$ dependence on the horizon length $H$.
They achieve this result by using the following ideas: (1) samples collected by different policies can be reused to evaluate other policies; (2) to evaluate all policies in a finite set $\Pi$, the number of sample episodes required is at most $\poly(|\states||\actions|/\epsilon)\cdot \log|\Pi| \cdot \log^2H$; (3) establish a set of policies $\Pi$ that contains at least one $\epsilon$-optimal policy for any MDP by using an $\epsilon$-nets over the reward values and the transition probabilities.
Here $\Pi$ contains all optimal non-stationary policies of each MDP in the $\epsilon$-net.
The accuracy of the $\epsilon$-net needs to be at least $\poly(\epsilon/H)$ and hence $|\Pi| = \poly(\epsilon/H)^{|\states||\actions|}$, which induces an inevitable $\log H$ factor. 
Other $\log H$ factors come from Step (2) where they use a potential function which increases from $0$ to $H$ to measure the progress of the algorithm.
It is unclear how to remove any of the log factors in their sample complexity upper bound, and they also conjecture that the optimal sample complexity could be $\Omega(\mathrm{polylog}(H))$ which is refuted by our new result.

Another recent work~\cite{zhang2020reinforcement} obtains a more efficient algorithm that achieves a sample complexity of $\widetilde{O}\left(
\left(\frac{|\states||\actions|}{\epsilon^2} +\frac{|\states|^2|\actions|}{\epsilon}  \right)\cdot \poly\log H\right)$. 
Zhang et al.~\cite{zhang2020reinforcement} achieve such a sample complexity by using a novel upper confidence bound (UCB) analysis on a model-based algorithm. In particular, they use a recursive structure to bound the total variance in the MDP, which in turn bounds the final sample complexity. However, their bound critically relies on the fact that the total variance of value function moments is upper bounded by $H$ which inevitably induces $\mathrm{polylog}(H)$ factors. It is unlikely to get rid of all  $\mathrm{polylog}(H)$ factors in their sample complexity by following their approaches.


In the generative model, 
there is also a long line of research studying the sample complexity of finding near-optimal policies. 
See, e.g., \cite{kearns1999finite, kakade2003sample, singh1994upper, azar2013minimax, sidford2018variance, sidford2018near,agarwal2019optimality,li2020breaking}. 
However, most of these work are targeting on the infinite-horizon discounted setting, in which case the problem becomes much easier.
This is because in the infinite-horizon discounted setting, there always exists a stationary optimal policy, and the total number of such policies depends only on the number of states $|\states|$ and the number of actions $|\actions|$. 
Sidford et al.~\cite{sidford2018near} develop an algorithm based on a variance-reduced estimator which uses $\widetilde{O}\left(\frac{|\states||\actions|}{\epsilon^2} + H^2|\states||\actions|\right)$ batches of queries to find an $\epsilon$-optimal policy under the reward uniformity assumption. 
However, their result relies on splitting samples into $H$ sub-groups, and therefore, the lower order term of their sample complexity fundamentally depends on $H$.
To the best of our knowledge, even in the stronger generative model, no algorithm for finite-horizon MDPs  achieves a sample complexity that is independent of $H$ under the bounded total reward assumption.

\section{Preliminaries}
\label{sec:pre}
\paragraph{Notations.}
Throughout this paper, for a given positive integer $H$, we use $[H]$ to denote the set $\{0, 1, \ldots, H - 1\}$.
For a condition $\mathcal{E}$, we use $\indict[\mathcal{E}]$ to denote the indicator function, i.e., $\indict[\mathcal{E}] = 1$ if $\mathcal{E}$ holds and $\indict[\mathcal{E}] = 0$ otherwise.

For a random variable $X$ and a real number $\varepsilon \in [0, 1]$, its $\varepsilon$-quantile $\qtile_\varepsilon(X)$ is defined so that
\[
\qtile_\varepsilon(X) = \sup \{x \mid \Pr[X \ge x] \ge \varepsilon \}.
\]

\paragraph{Markov Chains.}
Let $\mc = (\states, \trans, \mu)$ be a Markov chain where $\states$ is the state space, $\trans : \states \to \Delta(\states)$ is the transition operator and $\mu \in \Delta(\states)$ is the initial state distribution. A Markov chain $\mc$ induces a sequence of random states
\[
s_0, s_1, \ldots
\]
where for each $s_0 \sim \mu$ and $s_{h + 1} \sim P(s_h)$ for each $h \ge 0$. 

\paragraph{Finite-Horizon Markov Decision Processes.}
Let $\mdp =\left(\states, \actions, \trans ,R, H, \mu\right)$ be an $H$-horizon \emph{Markov Decision Process} (MDP)
where $\states$ is the finite state space, 
$\actions$ is the finite action space, 
$\trans: \states \times \actions \rightarrow \Delta \left(\states\right)$ is the transition operator which takes a state-action pair and returns a distribution over states, 
$R : \states \times \actions \rightarrow \Delta\left( \mathbb{R} \right)$ is the reward distribution,
$H \in \mathbb{Z}_+$ is the planning horizon  (episode length),
and $\mu \in \Delta\left(\states\right)$ is the initial state distribution. 
In the learning setting, $P$, $R$ and $\mu$, are unknown but fixed, and $\states$, $\actions$ and $H$ are known to the learner.
Throughout, we assume $H=\Omega(|\states|\log|\states|)$ since otherwise we can apply existing algorithms (e.g.~\cite{zhang2020reinforcement}) to obtain a sample complexity that is independent of $H$.
Throughout the paper, for a state $s \in \states$, we occasionally abuse notation and use $s$ to denote the deterministic distribution that always takes $s$. 
\paragraph{Interacting with the MDP.} We now introduce how a RL agent (or an algorithm) interacts with an unknown MDP.
 In the setting of Theorem~\ref{thm:rl}, in each episode, the agent first receives an initial state $s_0 \sim \mu$.
For each time step $h \in [H]$, the agent first decides an action $a_h \in \actions$, and then observes and moves to $s_{h + 1} \sim P(s_h, a_h)$ and obtains a reward $r_{h} \sim R(s_h, a_h)$. 
The episode stops at time $H$, where the final state is $s_H$.
Note that even if the agent decides to stop before time $H$, e.g. at time $\sqrt{H}$, this still counts as one full episode. 

In the generative model setting (as in Theorem~\ref{thm:gen_model}), the agent is allowed to start from any $s_0 \in \mathcal{S}$ instead of $s_0 \sim \mu$, and is allowed to restart at any time. 
The sample complexity is defined as the number of batches of queries (each batch consists of $H$ queries) the agent uses.

\paragraph{Policy Class.} 
The final goal of RL is to output a good policy $\pi$ with respect to the unknown MDP that the agent interacts with. 
In this paper, we consider (non-stationary) deterministic policies $\pi$ which choose an action $a$ based on the current state $s \in \states$ and the time step $h \in [H]$. 
Formally, $\pi = \{\pi_h\}_{h = 0}^{H - 1}$ where for each $h \in [H]$, $\pi_h : \states \to \actions$ maps a given state to an action.
The policy $\pi$ induces a (random) trajectory \[(s_0,a_0,r_0),(s_1,a_1,r_1),\ldots,(s_{H-1},a_{H-1},r_{H-1}), s_H,\]
where $s_0 \sim \mu$, $a_0 = \pi_0(s_0)$, $r_0 \sim R(s_0,a_0)$, $s_1 \sim \trans(s_0,a_0)$, $a_1 = \pi_1(s_1)$, $r_1 \sim R(s_1, a_1)$, $\ldots$, $a_{H-1} = \pi_{H-1}(s_{H-1})$, $r_{H-1} \sim R(s_{H-1}, a_{H-1})$ and $s_H\sim \trans(s_{H-1},a_{H-1})$. 
Throughout the paper, all policies are assumed to be deterministic unless specified otherwise.

\paragraph{Value Functions.}To measure the performance of a policy $\pi$, we define the \emph{value} of a policy as \[V^{\pi}_{\mdp, H} = \expect \left[ \sum_{h = 0}^{H-1} r_h \right].\]
Note that in general a policy does not need to be deterministic, and it can even depend on the history in which case a policy chooses an action at time $h$ based on the entire transition history before $h$.
It is well-known (see e.g. \cite{puterman1994markov}) that the optimal value of $\mdp$ can be achieved by a non-stationary deterministic optimal policy $\pi^*$. 
Hence, we only need to consider non-stationary deterministic policies.

The goal of the agent is then to find a policy $\pi$ with $V^{\pi}_{M, H} \ge V^{\pi^*}_{M, H} - \epsilon$, i.e., obtain an $\epsilon$-optimal policy, while minimizing the number of episodes of environment interactions (or batches of queries if in the generative model). 

For a policy $\pi$, we also define 
\[
\qtile^{\pi}_{\delta}(s,a) = \qtile_{\delta}\left[\sum_{t=0}^{H}\indict[(s,a) = (s_t, a_t)]\right].
\]
to be the $\delta$-quantile of the visitation frequency of a state-action pair $(s,a)$.

\paragraph{Stationary Policies.}
For the sake of the analysis, we shall also consider stationary policies.
A stationary deterministic policy $\pi$ chooses an action $a$ solely based on the current state $s \in \states$, i.e, $\pi_0 = \pi_1= \ldots = \pi_{H-1}$.
We use $\Pi_{\sta}$ to denote the set of all stationary policies. Note that $\left | \Pi_{\sta} \right| =  |\actions|^{|\states|}$.


We remark that when the horizon length $H$ is finite, the value of the best stationary policy and that of the best non-stationary policy can differ significantly.
Consider the case when there are two states $s, s'$ and two actions $a, a'$. Starting from $s$, taking action $a$ will go back to $s$ and obtain a reward of $1/H$, while taking action $a'$ will go to $s'$ and obtain a reward of $1$. Taking any action at $s'$ will go back to $s'$ with reward 0. In this case, the optimal non-stationary policy has value $2 - 1/H$ since it can exit $s$ at time $H$, while deterministic stationary policies can only have value $\leq 1$ (even random stationary policy can only achieve value $2 - \Omega(1)$).

For a stationary policy $\pi : \states \to \actions$, we use $\mdp^{\pi} = (\states, \trans^{\pi}, \mu)$ to denote the Markov chain induced by $\mdp$ and $\pi$, where the transition operator $\trans^{\pi}$ is defined so that
\[
\trans^{\pi}(s' \mid s) = \trans(s' \mid s, \pi(s)).
\]

\paragraph{Assumption on Rewards.} Below, we formally introduce the \emph{bounded total reward assumption}.
\begin{assumption}[Bounded Total Reward]\label{assump:reward}
For any policy $\pi$, and a random trajectory $((s_0, a_0, r_0),$ $(s_1, a_1, r_1), \ldots, (s_{H-1}, a_{H-1}, r_{H-1}), s_H)$ induced by $\pi$, we have
\begin{itemize}
\item  $r_h \in [0, 1]$ for all $h \in [H]$; and
\item $\sum_{h = 0}^{H - 1} r_h \le 1$ almost surely.
\end{itemize}
\end{assumption}
As discussed in the previous section, this assumption is more general than the standard reward uniformity assumption, where $r_h\in [0,1/H]$, for all $h\in [H]$. Throughout this paper, we assume that the above assumption holds for the unknown MDP that the agent interacts with. 

The above assumption in fact implies a very interesting consequence on the reward values.
\begin{lemma}\label{lem:reward}
Under Assumption~\ref{assump:reward}, for any $\mdp =\left(\states, \actions, \trans ,R, H, \mu\right)$ with $H \ge |\states|$, for any $(s, a) \in \states \times \actions$, if there exists a (possibly non-stationary) policy $\pi$ such that for the random trajectory
\[(s_0,a_0,r_0),(s_1,a_1,r_1),\ldots,(s_{H-1},a_{H-1},r_{H-1}), s_H\]
induced by executing $\pi$ in $\mdp$, we have
\[
\Pr \left[\sum_{h = 0}^{H - 1} \indict[(s_h, a_h) = (s, a)] > 1\right] \ge \epsilon
\]
for some $\epsilon > 0$,
then $R(s, a) \le 2|\states| / H$ almost surely and therefore $\expect[(R(s, a))^2] \le 4|\states|^2 / H^2$. 
\end{lemma}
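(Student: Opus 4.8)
The plan is to argue by contradiction: if $R(s,a)$ took a value exceeding $2|\states|/H$ with positive probability, then one could exhibit a single episode whose total reward exceeds $1$ with positive probability, contradicting Assumption~\ref{assump:reward}. First I would dispose of the easy regime $H \le 2|\states|$: there $2|\states|/H \ge 1$, while $r_h \in [0,1]$ forces $R(s,a)$ to be supported on $[0,1]$, so the bound is immediate and $\expect[(R(s,a))^2] \le 1 \le 4|\states|^2/H^2$. Hence assume $H > 2|\states|$ from now on.

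The first substantive step is to extract geometric structure from the hypothesis. Since the displayed probability is at least $\epsilon > 0$, there is a single sequence of states and actions, of positive probability under the MDP dynamics, that starts at some $s_0 \in \mathrm{supp}(\mu)$ and passes through $(s,a)$ at two time steps $h_1 < h_2 \le H-1$. Reading this off gives (i) a positive-probability walk from $s_0$ to $s$ in the transition-support digraph (an edge $u \to v$ present whenever $P(v \mid u,b) > 0$ for some $b$), and (ii) a positive-probability closed walk that leaves $s$ via action $a$ and returns to $s$. I would then compress both using pigeonhole/shortest paths: the walk to $s$ may be taken of length $m' \le |\states|-1$, and the closed walk of length $L \le |\states|$ (take action $a$ at $s$, reaching some $s'$ with $P(s' \mid s,a) > 0$, then follow a shortest path of length $\le |\states|-1$ from $s'$ back to $s$, or stop if $s' = s$). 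Along the compressed walk and cycle every one-step transition still has positive probability, the cycle takes action $a$ at $s$, and for each edge we fix a witnessing action.

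Next I would construct the offending policy $\pi'$: a non-stationary deterministic policy that follows the length-$m'$ walk to reach $s$ at time $m'$ and thereafter repeatedly traverses the length-$L$ cycle, choosing actions arbitrarily at time--state pairs off this prescribed path. Index bookkeeping then places the visits to $(s,a)$ at times $m', m'+L, m'+2L, \dots$, and shows that $k := 1 + \floor{(H-1-m')/L}$ of them lie in $\{0,\dots,H-1\}$; substituting $m' \le |\states|-1$, $L \le |\states|$ and $H > 2|\states|$ yields $k \ge H/(2|\states|)$. The prescribed trajectory has positive probability (a finite product of positive transition probabilities), and the $k$ rewards collected at these visits are i.i.d.\ copies of $R(s,a)$, independent of the transitions. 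Consequently, if $p := \Pr[R(s,a) > 2|\states|/H] > 0$, the joint event that the trajectory follows the plan and all $k$ of these rewards exceed $2|\states|/H$ has positive probability, and on it $\sum_{h=0}^{H-1} r_h > k \cdot \frac{2|\states|}{H} \ge 1$, contradicting Assumption~\ref{assump:reward}. Hence $p = 0$, i.e.\ $R(s,a) \le 2|\states|/H$ almost surely, and squaring (valid since $R(s,a) \ge 0$) gives $\expect[(R(s,a))^2] \le 4|\states|^2/H^2$.

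The main obstacle is the second step: converting the mere positive-probability hypothesis into an honest \emph{short} cycle through $(s,a)$ along which every transition has positive probability, and then carrying out the arithmetic that certifies at least $H/(2|\states|)$ cycle repetitions genuinely fit within one horizon-$H$ episode. The concluding contradiction and the $H \le 2|\states|$ case are routine.
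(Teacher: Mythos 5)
Your proposal is correct and follows essentially the same route as the paper's proof: extract a positive-probability witness trajectory through $(s,a)$ twice, shorten the prefix and the return cycle to length at most $|\states|$ by pigeonhole, build a non-stationary policy that repeats the cycle to accumulate at least $H/(2|\states|)$ visits with positive probability, and invoke the bounded-total-reward assumption. The extra case split for $H \le 2|\states|$ and the explicit i.i.d.-rewards contradiction at the end are just more detailed renderings of steps the paper states directly.
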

\begin{proof}
By the assumption, there exists a trajectory
\[
((s_0, a_0), (s_1, a_1), \ldots, (s_{H - 1}, a_{H - 1}), s_H)
\]
such there exists $0 \le h_1 < h_2 < H$ with $(s_{h_1}, a_{h_1}) = (s, a)$ and $(s_{h_2}, a_{h_2}) = (s, a)$.
Moreover, 
\[
\mu(s_0) \prod_{h = 0}^{h_2} P(s_{h + 1} \mid s_h, a_h) > 0.
\]
We may assume $h_1 < |\states|$ and $h_2 - h_1 \le |\states|$, since otherwise we can replace sub-trajectories that start and end with the same state by that state, and the resulting trajectory can still appear with strictly positive probability.
Now consider the policy $\widehat{\pi}$ which is defined so that for each $h < h_1$, $\widehat{\pi}_h(s_h) = a_h$ and for each $t \in [h_2 - h_1]$,
\[
\widehat{\pi}_{h_1 + t}(s_{h_1 + t}) = \widehat{\pi}_{h_1 + (h_2 - h_1) + t}(s_{h_1 + t}) = \widehat{\pi}_{h_1 + 2(h_2 - h_1) + t}(s_{h_1 + t})  = \cdots = a_{h_1 + t},
\]
i.e., repeating the trajectory's actions in $[h_1, h_2]$ indefinitely.
$\widehat{\pi}$ is defined arbitrarily for other states and time steps. 

By executing $\widehat{\pi}$, with strictly positive probability, $(s, a)$ is visited for $\floor{H / |\states|} \ge H / (2|\states|)$ times.
Therefore, by Assumption~\ref{assump:reward}, $R(s, a) \le 2|\states| / H$ with probability $1$ and thus 
$\expect[(R(s, a))^2] \le 4|\states|^2 / H^2$. 

\end{proof}

\paragraph{Discounted Markov Decision Processes.}
We also introduce another variant of MDP, discounted MDP, which is specified by $\mdp =\left(\states, \actions, \trans ,R, \gamma, \mu\right)$, where $\gamma\in(0,1)$ is a discount factor and all other components have the same meaning as in an $H$-horizon MDP.
Note that we consider discounted MDP only for the purpose of analysis, and the unknown MDP that the agent interacts with is always a finite-horizon MDP. 
The difference between a discounted MDP and an $H$-horizon MDP is that discounted MDPs have an infinite horizon length, i.e., the length of a trajectory can be infinite. 
Hence, to measure the value of a discounted MDP, suppose policy $\pi$ obtains a random trajectory, 
\[
(s_0,a_0,r_0),(s_1,a_1,r_1),\ldots,(s_{h},a_{h},r_{h}),\ldots,
\]
we denote 
\[V^{\pi}_{\mdp, \gamma} = \expect \left[ \sum_{h = 0}^{\infty} \gamma^h r_h \mid s=s_0\right]\]
 as the discounted value of $\pi$.
Throughout, for a (discounted or finite-horizon)  MDP $\mdp =\left(\states, \actions, \trans ,R, \cdot, \mu\right)$, we simply denote $V^{\pi}_{\mdp, H}$ as the value function of $\left(\states, \actions, \trans ,R, H, \mu\right)$
and $V^{\pi}_{\mdp, \gamma}$ as the value function of $\left(\states, \actions, \trans ,R, \gamma, \mu\right)$.



\section{Technical Overview}\label{sec:tech}
In this section, we discuss the techniques for establishing our results.
To introduce the high-level ideas, we first start with the simpler setting, the generative model, where exploration is not a concern.
We then switch to the more challenging RL setting, where we need to carefully design policies to explore the state-action space so that a good policy can be learned.
For simplicity, throughout the discussion in this section, we assume $|\states|$, $|\actions|$ and $1 / \epsilon$ are all constants. 

\paragraph{Algorithm and Analysis in the Generative Model.}
Our algorithm in the generative model is conceptually simple: for each state-action pair $(s, a)$, we draw $O(H)$ samples from $P(s, a)$ and $R(s, a)$ and then return the optimal policy with respect to the empirical model $\widehat{M}$ which is obtained by using the empirical estimators for $P$ and $R$ (denoted as $\widehat{P}$ and $\widehat{R}$). 
Here for simplicity, we assume $R = \widehat{R}$ which allows us to focus on the estimation error induced by the transition probabilities.
Moreover, we assume that $P$ differs from $\widehat{P}$ only for a single state-action pair $(s, a)$.  
To further simplify the discussion, we assume that there are only two different states on the support of $P(s' \mid s, a)$ (say $s_1$ and $s_2$). 

In order to prove the correctness of the algorithm, we show that for any policy $\pi$, the value of $\pi$ in the empirical model $\widehat{M}$ is close to that in the true model $M$.
However, standard analysis based on dynamic progarmming shows that the difference between the value of $\pi$ in $\widehat{M}$ and that in $M$ could be as large as $H$ times the estimation error on $P(s, a)$, which is clearly insufficient for obtaining an algorithm which uses $O(1)$ batches of queries. 
Our main idea here is to show that for most trajectories $T$, the probability of $T$ in the empirical model $\widehat{M}$ is a {\em multiplicative approximation} to that in the true model $M$ with constant approximation ratio. 

To establish the multiplicative approximation guarantee, our observation is that one should consider $s_1$ and $s_2$, the two states on the support of $P(s, a)$, as a whole.  
To see this, consider the case where $P(s_1 \mid s, a) = P(s_2 \mid s, a) = 1/2$. Again, the additive estimation errors on both $P(s_1 \mid s, a)$ and $P(s_2 \mid s, a)$ are roughly $O(1 / \sqrt{H})$. 
Now, consider a trajectory that visits both $(s, a, s_1)$ and$(s, a, s_2)$ for $H / 2$ times. 
Note that the multiplicative approximation ratio between $\widehat{P}(s' \mid s, a)^{H / 2}$ and $P(s' \mid s, a)^{H / 2}$ could be as large as $\exp(\sqrt{H})$, for both $s' = s_1$ and $s' = s_2$. 
However, since  $\widehat{P}(s_1 \mid s, a) + \widehat{P}(s_2 \mid s, a) = 1$ as the empirical estimator $\widehat{P}(s, a)$ is still a probability distribution, it must be the case that $\widehat{P}(s_1 \mid s, a)/P(s_1 \mid s, a) = 1 - 2\delta$ and $\widehat{P}(s_2 \mid s, a)/{P}(s_2 \mid s, a) = 1 + 2\delta$ where $\delta = P(s_1 \mid s, a) - \widehat{P}(s_1 \mid s, a)$ and thus $|\delta| \le O(1 / \sqrt{H})$. 
Since $(1+2\delta)^{H/2}(1-2\delta)^{H/2}=(1 - 4 \delta^2)^{H/2}$ is a constant, 
$(\widehat{P}(s_1 \mid s, a))^{H / 2} (\widehat{P}(s_2 \mid s, a))^{H / 2}$ is a constant factor approximation to the true probability $(P(s_1 \mid s, a))^{H / 2} (P(s_2 \mid s, a))^{H / 2}$ due to cancellation. 

In our analysis, to formalize the above intuition, for each trajectory $T$, we take $T$ into consideration only when $|m_T(s, a, s') - P(s' \mid s, a) \cdot m_T(s, a) |\le O(\sqrt{P(s' \mid s, a) \cdot H })$
for both $s'=s_1$ and $s'=s_2$. Here $m_T(s, a)$ is the number of times that $(s, a)$ is visited on $T$ and $m_T(s, a, s')$ is the number of times that $(s, a, s')$ is visited on $T$.
By Chebyshev's inequality (see Lemma~\ref{lemma:martingal_concent} for details), we only ignore a small subset of trajectories whose total probability can be upper bounded by a constant. 
For the remaining trajectories, it can be shown that
$
\widehat{P}(s_1 \mid s, a)^{m_T(s, a, s_1)}  \cdot \widehat{P}(s_2 \mid s, a)^{m_T(s, a, s_2)} 
$
is a constant factor approximation to 
$
P(s_1 \mid s, a)^{m_T(s, a, s_1)}   \cdot P(s_2 \mid s, a)^{m_T(s, a, s_2)} $ so long as $|\widehat{P}(s, a, s') - P(s, a, s')| \le O(\sqrt{P(s, a, s') / H})$ for both $s'  = s_1$ and $s' = s_2$ due to the cancellation mentioned above. The formal argument is given in Lemma~\ref{lem:mult-add}.
Note that using $O(H)$ samples, $|\widehat{P}(s, a, s') - P(s, a, s')| \le O(\sqrt{P(s, a, s') / H})$ holds only when $P(s, a, s') \ge \Omega(1 / H)$.  
On the other hand, we can also ignore trajectories that visit $(s, a, s')$ with $P(s, a, s') \le O(1 / H)$ since such trajectories have negligible cumulative probability by Markov's inequality (formalized in Lemma~\ref{lem:upperbound_on_visits}). 

The above analysis can be readily generalized to handle perturbation on the transition probabilities of multiple state-action pairs, and to handle the case when the transition operator $P(\cdot \mid s,a)$ is not supported on two states. 
In summary, by using $O(H)$ samples for each state-action pair $(s, a)$, the empirical model $\widehat{M}$ provides a constant factor approximation to the probabilities of all trajectories, except for a small subset of them whose cumulative probability can be upper bounded by a constant.
Hence, for all policies, the empirical model provides an accurate estimate to its value and thus, the optimal policy with respect to the empirical model is near-optimal. 

\paragraph{Exploration by Stationary Policies.}
In the discussion above, we heavily rely on the ability of the generative model to obtain $\Omega(H)$ samples for each state-action pair. However, for the RL setting, it is not possible to reach every state-action pair freely. 
Although each trajectory contains $H$ state-action-state tuples (corresponding to a batch of queries in the generative model), these samples may not cover states that are crucial for learning an optimal policy.
Indeed, one could use all possible deterministic non-stationary policies to collect samples, which shall then cover the whole state-action space.
Unfortunately, such a na\"ive method introduces a dependence on the number of non-stationary policies which is exponential in $H$. 
The sample complexity of other existing methods in the literature also inevitably depends on $H$ as their sample complexity intrinsically depends on the number of non-stationary policies.

In this work, 
we adopt a completely different approach for exploration.
Our new idea is to show that if there exists a non-stationary policy that visits $(s, a)$ for $f$ times in expectation, then there exists a stationary policy that visit $(s, a)$ for $f / \exp\left( O(|\states|\log|\states|) \right)$ times in expectation.
This is formalized in Lemma~\ref{cor:non_stationary_nearly_optimal}.
If the above claim is true, then intuitively, one can simply enumerate all stationary policies and sample $\exp\left( O(|\states|\log|\states|)\right)$ trajectories using each of them to obtain $f$ samples of $(s, a)$. 
Note that there are only $|\actions|^{|\states|}$ stationary policies, which is completely independent of $H$.
In order to prove the above claim, we show that for any stationary policy $\pi$, its value in the infinite-horizon discounted setting is close to that in the finite-horizon undiscounted setting (up to a factor of $\exp\left( O(|\states|\log|\states|) \right)$) by using a properly chosen discount factor (see Lemma~\ref{lem:discount_finite_compare}). 
Note that this implies the correctness of the above claim since there always exists a stationary optimal policy in the infinite-horizon discounted setting. 

In order to show the value of a stationary policy in the infinite-horizon discounted setting is close to that in the finite-horizon setting, we study reaching probabilities in time invariant Markov chains. 
In particular, we show in Lemma~\ref{lem:mc_main} that in a time invariant Markov chain, for any $H \ge |\states|$, the probability of reaching a specific state $s$ within $H$ steps is close the probability of reaching $s$ within $4H$ steps, up to a factor of $\exp\left( O(|\states|\log|\states|) \right)$.
Previous literature on time invariant Markov chains mostly focus on the asymptotic behavior, and as far as we are aware, we are the first to prove the above claim. 
Note that this claim directly establishes a connection between the value of a stationary policy in the infinite-horizon discounted setting and that in the finite-horizon setting.
Moreover, as a direct consequence of the above claim, we can show that if $H > 2|\states|$, the value of a stationary policy within $H$ steps is close to that of the same policy within $H / 2$ steps, up to a factor of $\exp\left( O(|\states|\log|\states|) \right)$. This consequence is crucial for later parts of the analysis.

\paragraph{From Expectation to Quantile.}
The above analysis shows that if there exists a non-stationary policy that visits $(s, a)$ for $f$ times in expectation, then our algorithm, which uses all stationary policies to collect samples, visits $(s, a)$ for $f / \exp\left( O(|\states|\log|\states|) \right)$ times in expectation. However, this does not necessarily mean that one can obtain $f$ samples of $(s, a)$ by sampling $\exp\left( O(|\states|\log|\states|) \right)$ trajectories using our algorithm with good probability.
To see this, consider the case when our policy visits $(s, a)$ for $H$ times with probability $1 / \sqrt{H}$ and does not visit $(s, a)$ with probability $1  - 1 / \sqrt{H}$. 
In this case, our policy may not obtain even a single sample for $(s, a)$ unless one rollouts the policy for $O(\sqrt{H})$ times. 
Therefore, instead of obtaining a visitation frequency guarantee which holds in expectation, it is more desirable to obtain a visitation frequency guarantee that holds with good probability.

To resolve this issue, we establish a connection between the expectation and the $\epsilon$-quantile of the visitation frequency of a stationary policy. 
We note that such a connection could not hold without any restriction.
To see this, consider a policy that visits $(s, a)$ for $H$ times with probability $\epsilon / 2$. 
In this case, the expected visitation frequency is $\epsilon H / 2$ while the $\epsilon$-quantile is zero. 
On the other hand, suppose the initial state $s_0 = s$ almost surely, then such a connection is easy to establish by using the Martingale Stopping Theorem. 
In particular, in Corollary~\ref{cor:quantile_comparison}, we show that if there exists a non-stationary policy that visits $(s, a)$ for $f$ times with probability $\epsilon$, then there exists a stationary policy that visits $(s, a)$ for $\epsilon f / \exp\left( O(|\states|\log|\states|) \right)$ times with constant probability, when  the initial state $s_0 = s$ almost surely.

In general, the initial state $s_0$ comes from a distribution $\mu$ and could be different from $s$ with high probability. 
To tackle this issue, in our algorithm, we simultaneously enumerate two stationary policies $\pi_1$ and $\pi_2$.
$\pi_1$ should be thought as the policy that visits $(s, a)$ with highest probability within $H / 2$ steps starting from the initial state distribution $\mu$, and $\pi_2$ should be thought as the policy that maximizes the $\epsilon$-quantile of the visitation frequency of $(s, a)$ within $H / 2$ steps when $s_0 = s$.
In our algorithm, we execute $\pi_1$ before $(s, a)$ is visited for the first time, and switch to $\pi_2$ once $(s, a)$ has been visited. 
Intuitively, we first use $\pi_1$ to reach $(s, a)$ for the first time and then use $\pi_2$ to collect as many samples as possible. 
As mentioned above, the value of a stationary policy within $H$ steps is close to the value of the same policy within $H / 2$ steps, up to a factor of $\exp\left( O(|\states|\log|\states|) \right)$.
Thus, by sampling the above policy (formed by concatenating $\pi_1$ and $\pi_2$) for $\exp\left( O(|\states|\log|\states|) \right) / \epsilon^2$ times, we obtain at least $f$ samples for $(s, a)$, if there exists a non-stationary policy that visits $(s, a)$ for $f$ times with probability $\epsilon$.
This is formalized in Lemma~\ref{lem:stationary_quantile}.

\paragraph{Perturbation Analysis in the RL Setting.}

By the above analysis, suppose $m(s, a)$ is the largest integer such that there exists a non-stationary policy that visits $(s, a)$ with probability $\epsilon$ for $m(s, a)$ times, then our dataset contains $\Omega(m(s, a))$ samples of $(s, a)$. However, $m(s, a)$ could be significantly smaller than $H$ and therefore the perturbation analysis established in the generative model no longer applies here. 
For example, previously we show that if 
$|m_T(s, a, s') - P(s' \mid s, a) \cdot m_T(s, a)| \le O(\sqrt{P(s' \mid s, a) \cdot H })$, then $
\widehat{P}(s_1 \mid s, a)^{m_T(s, a, s_1)}  \cdot \widehat{P}(s_2 \mid s, a)^{m_T(s, a, s_2)} 
$
is a constant factor approximation to 
$
P(s_1 \mid s, a)^{m_T(s, a, s_1)}   \cdot P(s_2 \mid s, a)^{m_T(s, a, s_2)} $ when $|\widehat{P}(s, a, s') - P(s, a, s')| \le O(\sqrt{P(s, a, s') / H})$ for both $s'  = s_1$ and $s' = s_2$. 
However, if $m(s, a) \ll H$, it is hopeless to obtain an estimate $\widehat{P}(s, a, s')$ with $|\widehat{P}(s, a, s') - P(s, a, s')| \le O(\sqrt{P(s, a, s') / H})$.
Fortunately, our perturbation analysis still goes through so long as $m_T(s, a, s') \le P(s' \mid s, a) \cdot m_T(s, a) + O(\sqrt{P(s' \mid s, a) \cdot m(s, a) })$ and $|\widehat{P}(s, a, s') - P(s, a, s')| \le O(\sqrt{P(s, a, s') / m(s, a)})$, i.e., replacing all $H$ appearances with $m(s, a)$. 

The above analysis introduces a final subtlety in our algorithm.
In particular, $m(s, a)$ in the empirical model $\widehat{M}$ could be significantly larger than that in the true model. 
On the other hand, the number of samples of $(s, a)$ in our dataset is at most $O(m(s, a))$ where $m(s, a)$ is defined by the true model. 
This means the value estimated in the empirical model $\widehat{M}$ could be significantly larger than that in the true model $M$. 
 To resolve this issue, we employ the principle of ``pessimism in the face of uncertainty'' and for each policy $\pi$, the estimated value of $\pi$ is set to be the lowest value among all models that lie the confidence set. 
Since the true model always lies in the confidence set, the estimated value is then guaranteed to be close to the true value.

\section{Properties of Stationary Policies}
In this section, we prove several properties of stationary policies.
In Section~\ref{sec:mc}, we first prove properties regarding the reaching probabilities in Markov chains, and then use them to prove properties for stationary policies in Section~\ref{sec:stationary}.
\subsection{Reaching Probabilities in Markov Chains}\label{sec:mc}
Let $\mc = (\states, \trans, \mu)$ be a Markov chain.
For a positive integer $L$ and a sequence of states $T = (s_0, s_1, \ldots, s_{L - 1}) \in \states^{L}$, we write  
\[
p(T, \mc) =  \mu(s_0) \cdot \prod_{h = 0}^{L- 2}  \trans(s_{h + 1} \mid s_{h})
\]
to denote the probability of $T$ in $C$.
For a state $s \in \states$ and an integer $L \ge 0$, we also write
\[
p_L(s, \mc) = \sum_{(s_0, s_1, \ldots, s_{L-1}) \in \states^{L}} p((s_0, s_1, \ldots, s_{L - 1}, s), C)
\]
to denote the probability of reaching $s$ with exactly $L$ steps. 

Our first lemma shows that for any Markov chain $\mc$, 
for any sequence of $L$ states $T$ with $L > |\states|$, there exists a sequence of $L'$ states $T'$ with $L' \le |\states|$ so that $p(T, \mc) \le p(T', \mc)$.
\begin{lemma}\label{lem:reduction}
Let $\mc = (\states, \trans, \mu)$ be a Markov chain.
For a sequence of $L$ states \[T = (s_0, s_1, \ldots, s_{L - 1}) \in \states^{L}\] with $L > |\states|$,
there exists a sequence of $L'$ states  \[T' = (s_0', s_1', \ldots, s_{L' - 1}') \in \states^{L'}\] 
with $s_{L' - 1}' = s_{L-1}$, $L' \le |\states|$ and $p(T, \mc) \le p(T', \mc)$.
\end{lemma}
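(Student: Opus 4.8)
The plan is to prove this by a loop-removal argument driven by the pigeonhole principle, organized as an induction on $L$. The intuition is that whenever a walk of length $L > |\states|$ is given, it must revisit some state, and excising the loop between the two visits can only increase the probability (each transition weight is at most $1$), while strictly shortening the walk and keeping both endpoints.

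Concretely, the first step is: since $T = (s_0,\dots,s_{L-1})$ lists $L > |\states|$ states, pigeonhole yields indices $0 \le i < j \le L-1$ with $s_i = s_j$. I would then define the shortened sequence $T'' = (s_0,\dots,s_i,s_{j+1},\dots,s_{L-1})$, deleting $s_{i+1},\dots,s_j$; note that $T''$ still starts at $s_0$, still ends at $s_{L-1}$, and has length $L - (j-i) < L$. The key (routine) computation is that, because $s_i = s_j$ implies $\trans(s_{j+1}\mid s_i) = \trans(s_{j+1}\mid s_j)$, we get
\[
p(T,\mc) = p(T'',\mc)\cdot\prod_{h=i}^{j-1}\trans(s_{h+1}\mid s_h) \;\le\; p(T'',\mc),
\]
where the inequality uses that transition probabilities lie in $[0,1]$. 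When $j = L-1$ the factor $\trans(s_{j+1}\mid s_j)$ is simply absent, but the identity still holds.

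From here the induction closes immediately: if $T''$ already has length $\le |\states|$, take $T' = T''$; otherwise apply the induction hypothesis to $T''$ to obtain $T'$ with $p(T'',\mc) \le p(T',\mc)$, length at most $|\states|$, and final state $s_{L-1}$, and chain the two inequalities. Each step strictly decreases the length, so the recursion is well-founded and terminates.

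I do not expect a genuine obstacle here; the statement is elementary. The only thing to be careful about is the index bookkeeping in the degenerate cases where the repeated state is the first or the last state of $T$ — which is precisely why I would phrase $T''$ so that it always inherits both $s_0$ and $s_{L-1}$ from $T$ — and making sure the ``$j = L-1$'' case of the probability identity is handled, since there is no outgoing transition to re-glue in that case.
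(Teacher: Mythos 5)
Your proof is correct and takes essentially the same approach as the paper: both use the pigeonhole principle to find a repeated state, excise the loop between the two occurrences (which can only increase the probability since each transition weight lies in $[0,1]$), and repeat until the length is at most $|\states|$. The only cosmetic difference is that you organize the repetition as an explicit induction on $L$, whereas the paper simply says ``continue this process.''
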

\begin{proof}
By pigeonhole principle, since $L >  |\states|$, there exists $0 \le i < j < L$ such that $s_i = s_j$.
Consider the sequence induced by removing $s_i, s_{i + 1}, s_{i + 2}, \ldots, s_{j - 1}$ from $T$, i.e., 
\[
T' = (s_0, s_1, \ldots, s_{i - 1}, s_{j}, s_{j + 1}, \ldots, s_{L - 1}).
\] 
Since $s_i = s_j$, we have 
\[
p(T, \mc)  = \mu(s_0) \cdot  \prod_{h = 0}^{L - 2}  P(s_{h + 1} \mid s_{h}).
\]
and
\[
 p(T', \mc)  \\
= \mu(s_0) \cdot \prod_{h = 0}^{i - 1}  P(s_{h + 1} \mid s_{h})  \cdot    \prod_{h = j}^{L - 2}   P(s_{h + 1} \mid s_{h}).
\]
Therefore, we have $p(T, \mc) \le p(T', \mc)$.
We continue this process until the length is at most $|\states|$.
\end{proof}

Combining Lemma~\ref{lem:reduction} with a simple counting argument directly implies the following lemma, which shows that $\sum_{h = 0}^{4|\states|-1} p_h(s, \mc) \le \exp\left(O(|\states| \log |\states|)\right) \cdot \sum_{h= 0}^{|\states|-1}p_h(s, \mc)$. 
\begin{lemma}\label{lem:sum_unexpanded}
Let $\mc = (\states, \trans, \mu)$ be a Markov chain.
For any $s \in \states$,
\[
\sum_{h = 0}^{4|\states|-1} p_h(s, \mc) \le 4 \cdot |\states|^{4|\states|} \cdot \sum_{h= 0}^{|\states|-1}p_h(s, \mc).
\]

\end{lemma}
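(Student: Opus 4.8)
The plan is to derive Lemma~\ref{lem:sum_unexpanded} from Lemma~\ref{lem:reduction} together with a crude counting argument. First I would split the left-hand side as $\sum_{h=0}^{4|\states|-1} p_h(s,\mc) = \sum_{h=0}^{|\states|-1} p_h(s,\mc) + \sum_{L=|\states|}^{4|\states|-1} p_L(s,\mc)$, keep the first piece as is, and reduce the problem to showing that the second piece is at most $3|\states|^{4|\states|}\sum_{h=0}^{|\states|-1}p_h(s,\mc)$; adding the two pieces and using $1+3|\states|^{4|\states|}\le 4|\states|^{4|\states|}$ (valid since $|\states|\ge 1$) then yields the claim.

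For the second piece I would expand $p_L(s,\mc) = \sum_{T\in\states^L} p((T,s),\mc)$, where $(T,s)$ is the length-$(L+1)$ sequence obtained by appending $s$ to $T$. When $|\states|\le L\le 4|\states|-1$ this sequence has length $L+1>|\states|$, so Lemma~\ref{lem:reduction} applies and produces a sequence $\phi(T)$ of length $L'\le|\states|$ with the same last state $s$ and $p((T,s),\mc)\le p(\phi(T),\mc)$. Since $\phi(T)$ ends in $s$ we have $L'\ge 1$, so $\phi(T)=(U,s)$ for some $U\in\states^{L'-1}$ with $0\le L'-1\le|\states|-1$; hence $p(\phi(T),\mc)$ is one of the summands defining $p_{L'-1}(s,\mc)$, and in particular it belongs to the collection of all products $p(T^*,\mc)$ over sequences $T^*$ that end in $s$ and have length at most $|\states|$, whose total equals $\sum_{h=0}^{|\states|-1} p_h(s,\mc)$.

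It then remains to count how many times each short product $p(T^*,\mc)$ gets charged. Writing $\mathrm{mult}(T^*)$ for the number of pairs $(L,T)$ with $|\states|\le L\le 4|\states|-1$, $T\in\states^L$, and $\phi(T)=T^*$, I would regroup $\sum_{L=|\states|}^{4|\states|-1}\sum_{T\in\states^L}p(\phi(T),\mc)=\sum_{T^*}\mathrm{mult}(T^*)\,p(T^*,\mc)$ and bound $\mathrm{mult}(T^*)$ crudely by the total number of such pairs, $\sum_{L=|\states|}^{4|\states|-1}|\states|^L\le 3|\states|\cdot|\states|^{4|\states|-1}=3|\states|^{4|\states|}$; nonnegativity of all terms then gives $\sum_{L=|\states|}^{4|\states|-1}p_L(s,\mc)\le 3|\states|^{4|\states|}\sum_{h=0}^{|\states|-1}p_h(s,\mc)$, completing the argument. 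The computation is routine; the only point that warrants care is this charging step, where one must ensure that collapsing many distinct long witnessing sequences onto one short sequence is not overcounted — handled by the trivial bound $\mathrm{mult}(T^*)\le(\text{number of }(L,T)\text{ pairs})$. Beyond that, the statement is an immediate corollary of Lemma~\ref{lem:reduction}, as the surrounding text anticipates.
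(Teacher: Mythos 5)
Your proposal is correct and follows essentially the same route as the paper: apply Lemma~\ref{lem:reduction} to each long sequence ending in $s$ and then use a crude count of how many long sequences there are. The only cosmetic difference is bookkeeping — you regroup by the image $\phi(T)$ and bound its multiplicity, whereas the paper bounds each individual $p(T,\mc)$ by the entire sum $\sum_{h=0}^{|\states|-1}p_h(s,\mc)$ and multiplies by $|\states|^L$; both yield the stated constant.
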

\begin{proof}
Consider a sequence of $L+1$ states $T = (s_0, s_1, \ldots, s_{L}) \in \states^{L + 1}$ with $L \ge |\states|$ and $s_L = s$.
By Lemma~\ref{lem:reduction}, there exists another sequence of $L'$ states $T' = (s_0', s_1', \ldots, s_{L'-1}') \in \states^{L'}$ with $s_{L' - 1}' = s_{L} = s$ and $L' \le |\states|$ so that $p(T, \mc) \le p(T', \mc)$.
Therefore \[p(T, \mc) \le p(T', \mc) \le p_{L'-1}(s, \mc) \le \sum_{h = 0}^{|\states|-1}p_{h}(s, \mc),\]
which implies 
\[
p_L(s, \mc) = \sum_{(s_0, s_1, \ldots, s_{L-2}, s_{L-1}) \in \states^{L}}p((s_0, s_1, \ldots, s_{L-2}, s_{L-1}, s), \mc) 
\le |\states|^{L}\sum_{h= 0}^{|\states|-1}p_{h}(s, \mc).\]
Therefore,
\[
\sum_{h = 0}^{4|\states|-1} p_h(s, \mc) \le 4 \cdot |\states| \cdot |\states|^{4 |\states| - 1} \cdot \sum_{h = 0}^{|\states|-1}p_h(s, \mc)  = 4 \cdot |\states|^{4|\states|} \cdot \sum_{h = 0}^{|\states|-1}p_h(s, \mc).
\]
\end{proof}

By applying Lemma~\ref{lem:sum_unexpanded} in a Markov chain $\mc'$ with modified initial state distribution and transition operator, we can also prove that $\sum_{h = 0}^{4|\states|-1} p_{\beta h + \alpha}(s, \mc) \le \exp\left(O(|\states| \log |\states|)\right)\cdot \sum_{h = 0}^{|\states|-1}p_{\beta h + \alpha}(s, \mc).
$ for any integer $\alpha \ge 0$ and integer $\beta \ge 1$. 
\begin{lemma}\label{lem:sum_expanded}
Let $\mc = (\states, \trans, \mu)$ be a Markov chain.
For any integer $\alpha \ge 0 $ and integer $\beta \ge 1$, 
for any $s \in \states$, 
\[
\sum_{h = 0}^{4|\states|-1} p_{\beta h + \alpha}(s, \mc) \le  4 \cdot |\states|^{4 |\states|} \cdot \sum_{h = 0}^{|\states|-1}p_{\beta h + \alpha}(s, \mc).
\]
\end{lemma}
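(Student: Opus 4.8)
The plan is to reduce the statement to Lemma~\ref{lem:sum_unexpanded} by passing to an auxiliary Markov chain $\mc' = (\states, \trans', \mu')$ that compresses $\beta$ consecutive steps of $\mc$ into a single step and starts from the time-$\alpha$ marginal of $\mc$. Concretely, I would set $\mu'(s') = p_\alpha(s', \mc)$ for each $s' \in \states$, and let $\trans'(s' \mid s'')$ be the $\beta$-step transition probability of $\mc$, i.e.\ the probability that the chain is at $s'$ exactly $\beta$ steps after being at $s''$. First I would check that $\mu' \in \Delta(\states)$ (since $\sum_{s' \in \states} p_\alpha(s', \mc) = 1$) and that $\trans' : \states \to \Delta(\states)$ is a valid transition operator, so that $\mc'$ is indeed a Markov chain on the same state space $\states$.

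The key step is the identity
\[
p_h(s, \mc') = p_{\beta h + \alpha}(s, \mc) \qquad \text{for all integers } h \ge 0 \text{ and all } s \in \states .
\]
To prove it, I would unfold the definition of $p_h(\cdot, \mc')$ as a sum over length-$h$ state sequences, expand each factor $\mu'(\cdot)$ and $\trans'(\cdot \mid \cdot)$ back into the corresponding sums over the intermediate states of $\mc$ (using that $\mu'$ is the time-$\alpha$ marginal and $\trans'$ is the $\beta$-step kernel of $\mc$), and collapse the resulting nested sum by the Markov property (Chapman--Kolmogorov): it telescopes into $\sum_{(s_0, \ldots, s_{\beta h + \alpha - 1}) \in \states^{\beta h + \alpha}} p((s_0, \ldots, s_{\beta h + \alpha - 1}, s), \mc)$, which is precisely $p_{\beta h + \alpha}(s, \mc)$. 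This bookkeeping is the only real content of the argument.

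Finally, I would apply Lemma~\ref{lem:sum_unexpanded} to the Markov chain $\mc'$ (which has the same $|\states|$), obtaining $\sum_{h=0}^{4|\states|-1} p_h(s, \mc') \le 4 |\states|^{4|\states|} \sum_{h=0}^{|\states|-1} p_h(s, \mc')$; substituting the identity above on both sides yields $\sum_{h=0}^{4|\states|-1} p_{\beta h + \alpha}(s, \mc) \le 4 |\states|^{4|\states|} \sum_{h=0}^{|\states|-1} p_{\beta h + \alpha}(s, \mc)$, as desired. The main obstacle is purely notational: carefully writing out the composition of the $\beta$-step blocks so that the reaching probabilities of $\mc'$ line up exactly with the arithmetic-progression-indexed reaching probabilities of $\mc$; beyond this bookkeeping there is no genuine difficulty.
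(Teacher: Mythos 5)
Your proposal is correct and is essentially identical to the paper's proof: the paper also defines the auxiliary chain $\mc'$ with initial distribution equal to the time-$\alpha$ marginal of $\mc$ and transition kernel equal to the $\beta$-step kernel, observes $p_h(s,\mc') = p_{\beta h + \alpha}(s,\mc)$, and applies Lemma~\ref{lem:sum_unexpanded} to $\mc'$. The bookkeeping you flag is exactly the content the paper leaves implicit.
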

\begin{proof}
We define a new Markov chain $\mc' = (\states, \trans', \mu')$ based on $\mc = (\states, \trans, \mu)$.
The state space of $\mc'$ is the same as that of $\mc$.
The initial state distribution $\mu'$ is the same as the distribution of $s_{\alpha}$ in $\mc$, i.e., the distribution after taking $\alpha$ steps in $\mc$.
The transition operator is defined so that taking one step in $\mc'$ is equivalent to taking $\beta$ steps in $\mc$, i.e., 
\[
P'(s' \mid s) = \sum_{s_1, s_2, \ldots, s_{\beta - 1} \in \states^{\beta - 1}} P(s' \mid s_{\beta - 1}) \cdot P(s_{\beta - 1} \mid s_{\beta -2})  \cdot \cdots \cdot P(s_1 \mid s).
\]
Clearly, for any state $s \in \states$, $p_L(s, \mc') = p_{\beta L + \alpha}(s, \mc)$.
By using Lemma~\ref{lem:sum_unexpanded} in $\mc'$, for any $s \in \states$, we have
\[
\sum_{h = 0}^{4|\states|-1} p_h(s, \mc') \le 4 \cdot |\states|^{4 |\states|} \cdot \sum_{h = 0}^{|\states|-1}p_h(s, \mc'),
\]
which implies
\[
\sum_{h = 0}^{4|\states|-1} p_{\beta h + \alpha}(s, \mc) \le 4 \cdot |\states|^{4 |\states|} \cdot \sum_{h = 0}^{|\states|-1}p_{\beta h + \alpha} (s, \mc).
\]
\end{proof}
Finally, Lemma~\ref{lem:sum_expanded} implies the main result of this section, which shows that for any $L \ge |\states|$, $\sum_{h = 0}^{4L-1} p_h(s, \mc) \le \exp\left(O(|\states| \log |\states|)\right) \cdot \sum_{h= 0}^{L-1}p_h(s, \mc)$.

\begin{lemma}\label{lem:mc_main}
Let $\mc = (\states, \trans, \mu)$ be a Markov chain.
For any $s \in \states$ and $L \ge |\states|$, \[\sum_{h = 0}^{2L} p_h(s, \mc) \le 4 \cdot |\states|^{4 |\states|} \cdot \sum_{h=0}^{L-1} p_h(s, \mc).\]
\end{lemma}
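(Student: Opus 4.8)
The plan is to derive Lemma~\ref{lem:mc_main} from Lemma~\ref{lem:sum_expanded} by tiling the index set $\{0,1,\ldots,2L\}$ with arithmetic progressions and applying Lemma~\ref{lem:sum_expanded} to each. Concretely, I would set $\beta = \floor{L/|\states|}$, which is a positive integer precisely because $L \ge |\states|$, and for each residue $\alpha \in \{0,1,\ldots,\beta-1\}$ invoke Lemma~\ref{lem:sum_expanded} with this $\alpha$ and $\beta$ to obtain
\[
\sum_{h=0}^{4|\states|-1} p_{\beta h + \alpha}(s,\mc) \le 4\cdot|\states|^{4|\states|}\cdot\sum_{h=0}^{|\states|-1} p_{\beta h + \alpha}(s,\mc).
\]
Summing this over all $\alpha \in \{0,\ldots,\beta-1\}$ and using that the residue classes are disjoint and the $p_h(s,\mc)$ are non-negative, the lemma reduces to two elementary index-set containments.

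The first containment I need is $\{0,1,\ldots,2L\} \subseteq \{\beta h + \alpha : 0 \le h \le 4|\states|-1,\ 0 \le \alpha \le \beta-1\} = \{0,1,\ldots,4\beta|\states|-1\}$, which holds as long as $4\beta|\states| \ge 2L+1$. Writing $L = q|\states| + r$ with $0 \le r < |\states|$ (so $\beta = q \ge 1$), this is $2q|\states| \ge 2r+1$, which follows from $q|\states| \ge |\states| > r$. The second containment is $\{\beta h + \alpha : 0 \le h \le |\states|-1,\ 0 \le \alpha \le \beta-1\} = \{0,1,\ldots,\beta|\states|-1\} \subseteq \{0,1,\ldots,L-1\}$, which holds since $\beta|\states| = q|\states| \le L$. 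Granting these, the left side of the summed inequality dominates $\sum_{h=0}^{2L} p_h(s,\mc)$ and the right side is at most $4\cdot|\states|^{4|\states|}\sum_{h=0}^{L-1} p_h(s,\mc)$, which is exactly the stated bound.

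The only real thing to get right, and the step I expect to be the main (if minor) obstacle, is checking that the single choice $\beta = \floor{L/|\states|}$ simultaneously makes the progressions of length $4|\states|$ long enough to cover $\{0,\ldots,2L\}$ while keeping the $\beta$ compressed progressions of length $|\states|$ collectively inside $\{0,\ldots,L-1\}$; this is precisely where the hypothesis $L \ge |\states|$ is used. It is also what makes the constant in the lemma come out as $4\cdot|\states|^{4|\states|}$ rather than scaling with $\beta$: each of the $\beta$ applications of Lemma~\ref{lem:sum_expanded} contributes the same multiplicative factor, and the $\beta$ compressed progressions tile the interval $\{0,\ldots,\beta|\states|-1\}$ without overlap, so their sums add up to at most a single copy of $\sum_{h=0}^{L-1} p_h(s,\mc)$. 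No concentration or spectral input is needed beyond Lemma~\ref{lem:sum_expanded}.
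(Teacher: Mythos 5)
Your proposal is correct and is essentially the paper's own proof: both set $\beta=\floor{L/|\states|}$, apply Lemma~\ref{lem:sum_expanded} to each residue class $\alpha\in\{0,\ldots,\beta-1\}$, and sum, with the only cosmetic difference being that you verify the covering via $4\beta|\states|\ge 2L+1$ while the paper checks $\floor{(2L+1)/\beta}\le 4|\states|$ by a two-case argument. No gap.
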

\begin{proof}
Clearly,
\[
\sum_{h=0}^{L-1} p_h(s, \mc) \ge \sum_{h=0}^{\floor{L / |\states|} \cdot |\states|-1} p_h(s, \mc) = \sum_{i = 0}^{\floor{L / |\states|} - 1} \sum_{j = 0}^{|\states| - 1} p_{\floor{L / |\states|} \cdot j + i}(s, \mc).
\]
For each $0 \le i < \floor{L / |\states|}$, by Lemma~\ref{lem:sum_expanded}, we have
\[
\sum_{j = 0}^{|\states| - 1} p_{\floor{L / |\states|} \cdot j + i}(s, C) \ge \frac{1}{4|\states|^{4 |\states|}}\sum_{j = 0}^{4|\states| - 1} p_{\floor{L / |\states|} \cdot j + i}(s, C).
\]

On the other hand,
\[
\sum_{h = 0}^{2L} p_h(s, C) \le \sum_{i = 0}^{\floor{L / |\states|} - 1} \sum_{j = 0}^{\floor{(2L + 1) / \floor{L / |\states|} }-1} p_{\floor{L / |\states|} \cdot j + i}(s, C).
\]

Note that if $|\states| > L / 2$, then $\floor{(2L + 1) / \floor{L / |\states|} } = 2L + 1 < 4|\states|$.
Moreover, if $|\states| \le L / 2$, then we have $\floor{L / |\states|} \ge 2L / 3|\states|$, which implies 
\[
\floor{(2L + 1) / \floor{L / |\states|} } \le \floor{(2L + 1) / L \cdot 3|\states| / 2} \le \floor{4|\states|}  = 4|\states|.
\]
Hence, we have
\[
\sum_{h = 0}^{2L} p_h(s, C) \le  \sum_{i = 0}^{\floor{L / |\states|} - 1} \ \sum_{j = 0}^{4|\states| - 1} p_{\floor{L / |\states|} \cdot j + i}(s, C) \le 4 \cdot |\states|^{4 |\states|} \cdot \sum_{h = 0}^{L-1} p_h(s, C).
\]
\end{proof}

\subsection{Implications of Lemma~\ref{lem:mc_main}}\label{sec:stationary}
In this section, we list several implications of Lemma~\ref{lem:mc_main} which would be crucial for analysis in later sections. 

Our first lemma shows that for any MDP $\mdp$ and any stationary policy $\pi$, for a properly chosen discount factor $\gamma$, $V^{\pi}_{\mdp, \gamma}$ is a multiplicative approximation to $V^{\pi}_{\mdp, H}$ with approximation ratio $\exp\left(O(|\states| \log |\states|)\right)$. 
\begin{lemma}\label{lem:discount_finite_compare}
For any MDP $\mdp$ and any stationary policy $\pi$, if $H \ge 2\ln(8 \cdot |\states|^{4 |\states|})$, by taking $\gamma = 1- \frac{\ln(8 \cdot |\states|^{4 |\states|})}{H}$,
\[
\frac{1}{64 \cdot |\states|^{8|\states|}} V^{\pi}_{\mdp, H} \le V^{\pi}_{\mdp, \gamma} \le 2V^{\pi}_{\mdp, H}.
\]
\end{lemma}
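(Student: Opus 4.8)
The plan is to reduce the statement to a per-state comparison of occupation-time sums in the Markov chain $M^\pi = (\states, \trans^\pi, \mu)$ induced by the stationary policy $\pi$. Let $r(s) := \expect[R(s,\pi(s))]$ denote the expected one-step reward at $s$ (non-negative by Assumption~\ref{assump:reward}), and let $p_h(s) := p_h(s, M^\pi) = \Pr[s_h = s]$ be the occupation probability at step $h$. Then, writing $A_s := \sum_{h=0}^{H-1} p_h(s)$ and $B_s := \sum_{h=0}^{\infty} \gamma^h p_h(s)$, linearity of expectation (and Tonelli, for the discounted sum, using non-negativity) gives $V^{\pi}_{\mdp,H} = \sum_{s \in \states} r(s) A_s$ and $V^{\pi}_{\mdp,\gamma} = \sum_{s \in \states} r(s) B_s$. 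Since every $r(s) \ge 0$, it suffices to show $\frac{1}{64|\states|^{8|\states|}} A_s \le B_s \le 2A_s$ for each fixed $s \in \states$; summing against the weights $r(s)$ then yields the lemma.

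Set $c := \ln(8 \cdot |\states|^{4|\states|})$, so that $\gamma = 1 - c/H$. The hypothesis $H \ge 2c$ gives $c/H \le 1/2$, hence $\gamma \in [1/2, 1)$ is a legitimate discount factor, and also $H \ge |\states|$ (since $c = \ln 8 + 4|\states|\ln|\states| \ge |\states|/2$, so Lemma~\ref{lem:mc_main} is applicable). The lower bound is then immediate by truncating the geometric series: $B_s \ge \sum_{h=0}^{H-1} \gamma^h p_h(s) \ge \gamma^H A_s$, and the elementary bound $1 - x \ge e^{-2x}$ valid for $x \in [0, 1/2]$ gives $\gamma^H = (1 - c/H)^H \ge e^{-2c} = \frac{1}{64|\states|^{8|\states|}}$.

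For the upper bound, Lemma~\ref{lem:mc_main} does the work. I would partition the time axis into length-$H$ blocks, so that $B_s \le \sum_{k \ge 0} \gamma^{kH} \sum_{h=0}^{(k+1)H - 1} p_h(s)$, using $\gamma^h \le \gamma^{kH}$ on the $k$-th block and enlarging the inner sum (all terms are non-negative). Writing $K := 4|\states|^{4|\states|}$, an induction on $j$ that applies Lemma~\ref{lem:mc_main} with $L = 2^j H \ge |\states|$ gives $\sum_{h=0}^{2^j H - 1} p_h(s) \le K^j A_s$ for all $j \ge 0$; since $k + 1 \le 2^k$, monotonicity then gives $\sum_{h=0}^{(k+1)H-1} p_h(s) \le K^k A_s$. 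Therefore the $k$-th block contributes at most $\gamma^{kH} K^k A_s$, and the calibration of $\gamma$ is exactly what tames this: $\gamma^{kH} \le e^{-ck} = (8|\states|^{4|\states|})^{-k} = (2K)^{-k}$, so the $k$-th block is at most $(2K)^{-k} K^k A_s = 2^{-k} A_s$, whence $B_s \le \sum_{k \ge 0} 2^{-k} A_s = 2 A_s$.

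The one genuinely delicate point — the main obstacle — is the upper bound: one must balance the exponential discount decay $\gamma^{kH} = (2K)^{-k}$ against the growth $K^k$ that accumulates from iterating the doubling estimate of Lemma~\ref{lem:mc_main} block by block, and it is precisely the choice $\gamma = 1 - \ln(8|\states|^{4|\states|})/H$ that makes the block contributions collapse into the convergent series $\sum_{k} 2^{-k}$ with the clean constant $2$. Everything else — rewriting both values as non-negative combinations of the common occupation sums $A_s, B_s$, the geometric-series truncation for the lower bound, and the bookkeeping facts $\gamma \in (0,1)$, $H \ge |\states|$, $c/H \le 1/2$ — is routine.
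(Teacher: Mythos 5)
Your proof is correct and follows essentially the same route as the paper: both decompose the two values into per-state occupation sums, get the lower bound from $\gamma^H = (1-c/H)^H \ge e^{-2c}$, and get the upper bound by iterating Lemma~\ref{lem:mc_main} so that the accumulated factor $(4|\states|^{4|\states|})^k$ is beaten by the discount decay $(8|\states|^{4|\states|})^{-k}$, leaving a convergent $\sum_k 2^{-k}$. The only (cosmetic) difference is your partition into uniform length-$H$ blocks combined with $k+1 \le 2^k$, where the paper uses dyadic blocks $[2^{i-1}H, 2^iH)$ directly.
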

\begin{proof}
\begin{align*}
& V^{\pi}_{\mdp, \gamma} = \sum_{s \in \states} \sum_{h = 0}^{\infty} \gamma^h  \cdot p_h(s, \mdp^{\pi}) \cdot \expect[R(s, \pi(s))]\\
 \le& \sum_{s \in \states} \left( \sum_{h = 0}^{H - 1} p_h(s, \mdp^\pi)+ \sum_{i = 1}^{\infty} \gamma^{H\cdot 2^{i-1}} \left( \sum_{h = 0}^{2^i \cdot H - 1} p_h(s, \mdp^\pi) \right)\right) \cdot \expect[R(s, \pi(s))].
\end{align*}
For each $i \ge 1$, by Lemma~\ref{lem:mc_main}, for any $s \in \states$, 
\begin{align*}
\gamma^{H\cdot 2^{i-1}} \left( \sum_{h = 0}^{2^i \cdot H - 1} p_h(s, \mdp^\pi) \right) & \le \gamma^{H\cdot 2^{i-1}}  \cdot \left( 4 \cdot |\states|^{4 |\states|} \right)^i  \cdot \left(\sum_{h = 0}^{H - 1} p_h(s, \mdp^\pi) \right)\\
& \le \left( 8 \cdot |\states|^{4 |\states|} \right)^{-2^{i-1}} \cdot \left( 4 \cdot |\states|^{4 |\states|} \right)^i \cdot \left(\sum_{h = 0}^{H - 1} p_h(s, \mdp^\pi)  \right)\\
 &\le 1/2^i \cdot \left(\sum_{h = 0}^{H - 1} p_h(s, \mdp^\pi) \right).
 \end{align*}
 Therefore,
 \[
 V^{\pi}_{\mdp, \gamma}  \le \sum_{s \in \states} 2 \cdot \left( \sum_{h = 0}^{H - 1} p_h(s, \mdp^\pi) \right) \cdot \expect[R(s, \pi(s))] = 2V^{\pi}_{\mdp, H}.
 \]

On the other hand, 
\begin{align*}
 V^{\pi}_{\mdp, \gamma} & = \sum_{s \in \states} \sum_{h = 0}^{\infty} \gamma^h \cdot p_h(s, \mdp^\pi)  \cdot \expect[R(s, \pi(s))]  \\
 & \ge  \sum_{s \in \states} \sum_{h = 0}^{H - 1}  \gamma^h \cdot p_h(s, \mdp^\pi)  \cdot \expect[R(s, \pi(s))] \\
 & \ge \gamma^H \cdot \sum_{s \in \states} \sum_{h = 0}^{H - 1}   p_h(s, \mdp^\pi)  \cdot \expect[R(s, \pi(s))] \\
 &  = \gamma^H \cdot V^{\pi}_{\mdp, H} = \left(1- \frac{\ln(8 \cdot |\states|^{4 |\states|})}{H}\right)^H \cdot V^{\pi}_{\mdp, H}  \\
 & \ge (1/4)^{\ln(8 \cdot |\states|^{4 |\states|})}\cdot V^{\pi}_{\mdp, H}  \ge \frac{1}{64 \cdot |\states|^{8|\states|}} \cdot V^{\pi}_{\mdp, H}.
\end{align*}

\end{proof}

As another implication of Lemma~\ref{lem:mc_main}, for any MDP $\mdp$ and any stationary policy $\pi$, we have $V^{\pi}_{\mdp, \floor{H / 2}} \ge  \exp\left( - O(|\states| \log |\states|)\right) V^{\pi}_{\mdp, H}.$
\begin{lemma}\label{lem:half_trajectory}
For any MDP $\mdp$ and any stationary policy $\pi : \states \to \actions$, if $H \ge 2 |\states|$,
\[
V^{\pi}_{\mdp, \floor{H / 2}} \ge  \frac{1}{4 \cdot |\states|^{4 |\states|}}  V^{\pi}_{\mdp, H}.
\]
\end{lemma}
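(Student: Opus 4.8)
The plan is to reduce the claim to a per-state statement about reaching probabilities in the induced Markov chain $\mdp^{\pi}$, and then invoke Lemma~\ref{lem:mc_main} directly. First I would write both value functions in the form already used in the proof of Lemma~\ref{lem:discount_finite_compare}: since $\pi$ is stationary, $\mdp^{\pi} = (\states, \trans^{\pi}, \mu)$ is a time-invariant Markov chain, and
\[
V^{\pi}_{\mdp, H} = \sum_{s \in \states} \left( \sum_{h = 0}^{H - 1} p_h(s, \mdp^{\pi}) \right) \cdot \expect[R(s, \pi(s))], \qquad
V^{\pi}_{\mdp, \floor{H / 2}} = \sum_{s \in \states} \left( \sum_{h = 0}^{\floor{H/2} - 1} p_h(s, \mdp^{\pi}) \right) \cdot \expect[R(s, \pi(s))].
\]
Since the rewards are non-negative (Assumption~\ref{assump:reward}), each coefficient $\expect[R(s, \pi(s))]$ is non-negative, so it suffices to prove the termwise inequality $\sum_{h=0}^{H-1} p_h(s, \mdp^{\pi}) \le 4 |\states|^{4|\states|} \sum_{h=0}^{\floor{H/2}-1} p_h(s, \mdp^{\pi})$ for every $s \in \states$.

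Next I would set $L = \floor{H/2}$ and check the two arithmetic facts needed to apply Lemma~\ref{lem:mc_main} to $\mc = \mdp^{\pi}$: that $L \ge |\states|$, which follows from the hypothesis $H \ge 2|\states|$; and that $H - 1 \le 2L$, which holds because $H \le 2\floor{H/2} + 1$. The first fact licenses the application of Lemma~\ref{lem:mc_main}, giving
\[
\sum_{h = 0}^{2L} p_h(s, \mdp^{\pi}) \le 4 \cdot |\states|^{4|\states|} \cdot \sum_{h = 0}^{L - 1} p_h(s, \mdp^{\pi}),
\]
and the second fact gives $\sum_{h=0}^{H-1} p_h(s, \mdp^{\pi}) \le \sum_{h=0}^{2L} p_h(s, \mdp^{\pi})$. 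Chaining these two inequalities yields the termwise bound, and summing against the non-negative weights $\expect[R(s,\pi(s))]$ over $s \in \states$ completes the proof.

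I do not anticipate a genuine obstacle here: the lemma is essentially a direct corollary of Lemma~\ref{lem:mc_main} applied to the Markov chain $\mdp^{\pi}$ with $L = \floor{H/2}$, together with the observation that finite-horizon values decompose linearly over per-step reaching probabilities with non-negative reward coefficients. The only point requiring a little care is the floor arithmetic — confirming $\floor{H/2} \ge |\states|$ and $2\floor{H/2} \ge H - 1$ under the hypothesis $H \ge 2|\states|$ — but this is elementary.
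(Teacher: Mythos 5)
Your proposal is correct and follows essentially the same route as the paper: decompose both values over per-state reaching probabilities in $\mdp^{\pi}$, apply Lemma~\ref{lem:mc_main} with $L=\floor{H/2}$ (using $\floor{H/2}\ge|\states|$ and $2\floor{H/2}\ge H-1$), and sum against the non-negative reward weights. No gaps.
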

\begin{proof}
Note that 
\[
V^{\pi}_{\mdp, \floor{H / 2}} = \sum_{s \in \states} \sum_{h = 0}^{\floor{H / 2} - 1} p_h(s, \mdp^\pi)  \cdot \expect[R(s, \pi(s))].
\]
Since $H \ge 2 |\states|$, by Lemma~\ref{lem:mc_main}, for any $s \in \states$,
\[
\sum_{h = 0}^{\floor{H / 2} - 1} p_h(s, \mdp^\pi)  \ge \frac{1}{4 \cdot |\states|^{4 |\states|}} \sum_{h = 0}^{2 \floor{H / 2} } p_h(s, \mdp^\pi)   \ge \frac{1}{4 \cdot |\states|^{4 |\states|}} \sum_{h = 0}^{H - 1} p_h(s, \mdp^\pi) .
\]
Therefore,
\[
V^{\pi}_{\mdp, \floor{H / 2}} \ge  \frac{1}{4 \cdot |\states|^{4 |\states|}}   \sum_{s \in \states} \sum_{h = 0}^{H - 1} p_h(s, \mdp^\pi)  \cdot \expect[R(s, \pi(s))] =  \frac{1}{4 \cdot |\states|^{4 |\states|}}  
V^{\pi}_{\mdp, H}.
\]
\end{proof}

As a corollary of Lemma~\ref{lem:discount_finite_compare}, we show that for any $H$-horizon MDP $\mdp$, there always exists a stationary policy whose value is as large as the best non-stationary policy up to a factor of $\exp\left( O(|\states| \log |\states|)\right)$.
\begin{corollary}\label{cor:non_stationary_nearly_optimal}
For any MDP $\mdp$, if $H \ge 2\ln(8 \cdot |\states|^{4 |\states|})$, then there exists a stationary policy $\pi$ such that
\[
V^{\pi}_{\mdp, H} \ge \frac{1}{128 \cdot |\states|^{8|\states|}}  V^{\pi^*}_{\mdp, H},
\]
\end{corollary}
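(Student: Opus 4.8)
The plan is to pass to the infinite-horizon $\gamma$-discounted version of $\mdp$ for the same discount factor used in Lemma~\ref{lem:discount_finite_compare}, and to exploit the classical fact that a discounted MDP admits a stationary optimal policy that is optimal even against history-dependent policies. Concretely, I would set $\gamma = 1 - \frac{\ln(8 \cdot |\states|^{4|\states|})}{H}$; the hypothesis $H \ge 2\ln(8 \cdot |\states|^{4|\states|})$ ensures $\gamma \in (0,1)$ and also licenses the use of Lemma~\ref{lem:discount_finite_compare}. Let $\pi$ be an optimal stationary policy for the discounted MDP $(\states,\actions,\trans,R,\gamma,\mu)$; such a $\pi$ exists and satisfies $V^{\pi}_{\mdp,\gamma} \ge V^{\pi'}_{\mdp,\gamma}$ for every (even history-dependent) policy $\pi'$ by standard MDP theory (e.g.\ Puterman). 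This $\pi$ will be the stationary policy witnessing the corollary.

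The crux is to lower bound $V^{\pi}_{\mdp,\gamma}$ in terms of $V^{\pi^*}_{\mdp,H}$. The obstacle is that $\pi^*$ is non-stationary and defined only over $H$ steps, so Lemma~\ref{lem:discount_finite_compare} cannot be applied to $\pi^*$ directly. To sidestep this, I would consider the infinite-horizon (history-dependent) policy $\widetilde{\pi}$ that executes $\pi^*$ for the first $H$ steps and acts arbitrarily afterwards. Since all rewards are non-negative, discounting only shrinks each term, so $V^{\widetilde{\pi}}_{\mdp,\gamma} \ge \expect\!\left[\sum_{h=0}^{H-1}\gamma^h r_h\right] \ge \gamma^H\, \expect\!\left[\sum_{h=0}^{H-1} r_h\right] = \gamma^H V^{\pi^*}_{\mdp,H}$, and by optimality of $\pi$ in the discounted model, $V^{\pi}_{\mdp,\gamma} \ge V^{\widetilde{\pi}}_{\mdp,\gamma} \ge \gamma^H V^{\pi^*}_{\mdp,H}$. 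The estimate $\gamma^H = \bigl(1 - \ln(8\cdot|\states|^{4|\states|})/H\bigr)^H \ge (1/4)^{\ln(8\cdot|\states|^{4|\states|})} \ge \frac{1}{64\cdot|\states|^{8|\states|}}$ is exactly the bound already derived at the end of the proof of Lemma~\ref{lem:discount_finite_compare}, so I would just quote it.

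Finally, since $\pi$ is stationary, the upper-bound half of Lemma~\ref{lem:discount_finite_compare} gives $V^{\pi}_{\mdp,\gamma} \le 2 V^{\pi}_{\mdp,H}$, i.e.\ $V^{\pi}_{\mdp,H} \ge \frac12 V^{\pi}_{\mdp,\gamma}$. Chaining the two inequalities yields $V^{\pi}_{\mdp,H} \ge \frac12 \gamma^H V^{\pi^*}_{\mdp,H} \ge \frac{1}{128\cdot|\states|^{8|\states|}} V^{\pi^*}_{\mdp,H}$, which is the claim. The step I expect to need the most care is precisely the handover from the non-stationary, length-$H$ policy $\pi^*$ to a policy the discounted-optimality statement can be compared against; the truncation trick together with non-negativity of rewards resolves it cleanly, and the rest is bookkeeping with the constants inherited from Lemma~\ref{lem:discount_finite_compare}.
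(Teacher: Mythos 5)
Your proposal is correct and follows essentially the same route as the paper: choose $\gamma = 1 - \ln(8\cdot|\states|^{4|\states|})/H$, take the stationary policy that is optimal for the discounted MDP against all (even history-dependent) policies, compare it to the extension $\widetilde{\pi}$ of $\pi^*$ via $V^{\widetilde{\pi}}_{\mdp,\gamma} \ge \gamma^H V^{\pi^*}_{\mdp,H}$, and convert back with the upper-bound half of Lemma~\ref{lem:discount_finite_compare}. The only difference is cosmetic: you justify the truncation/non-negativity step explicitly where the paper writes ``Clearly.''
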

\begin{proof}
In this proof we fix $\gamma = 1- \frac{\ln(8 \cdot |\states|^{4 |\states|})}{H}$.
We also use $\widetilde{\pi}^*$ to denote a non-stationary policy such that $\widetilde{\pi}^*_h = \pi^*_h$ when $h \in [H]$ and $\widetilde{\pi}^*_h$ is defined arbitrarily when $h \ge H$. 

Clearly, there exists a stationary policy $\pi$ such that for any (possibly non-stationary) policy $\pi'$,
\[
V^{\pi}_{\mdp, \gamma} \ge V^{\pi'}_{\mdp, \gamma}.
\]
For a proof, see Theorem 5.5.3 in~\cite{puterman1994markov}.
Clearly,
\[V^{\widetilde{\pi}^*}_{\mdp, \gamma} \ge \gamma^H \cdot V^{\pi^*}_{\mdp, H}.\]
Moreover, by Lemma~\ref{lem:discount_finite_compare}, 
\[
V^{\pi}_{\mdp, H} \ge \frac{1}{2} V^{\pi}_{\mdp, \gamma} \ge \frac{1}{2} V^{\widetilde{\pi}^*}_{\mdp, \gamma} \ge \frac{1}{2} \cdot \gamma^H \cdot V^{\pi^*}_{\mdp, H} \ge \frac{1}{128 \cdot |\states|^{8|\states|}}  \cdot V^{\pi^*}_{\mdp, H}.
\]
\end{proof}
By applying Corollary~\ref{cor:non_stationary_nearly_optimal} in an MDP with an extra terminal state $s_{\mathrm{terminal}} $, we can show that for any $(s, a) \in \states \times \actions$, there always exists a stationary policy that visits $(s, a)$ in the first $H / 2$ time steps with probability as large as the probability that the best non-stationary policy visits $(s, a)$ in all the $H$ time steps, up to a factor of $\exp \left( O(|\states| \log |\states|)\right)$.
\begin{corollary}\label{cor:reaching_stationary}
For any MDP $\mdp$, if $H \ge 2\ln(8 \cdot (|\states| + 1)^{4 (|\states| + 1)})$, then for any $z \in \states \times \actions$, there exists a stationary policy $\pi$, such that for any (possibly non-stationary) policy $\pi'$,
\[
\Pr\left[\sum_{h = 0}^{\floor{H / 2} - 1} \indict[(s_h, a_h) = z] \ge 1  \right] \ge  \frac{1}{512 \cdot (|\states| + 1)^{12(|\states| + 1)}}  \Pr\left[\sum_{h = 0}^{H - 1} \indict[(s_h', a_h') = z] \ge 1 \right],
\]
where 
\[
(s_0, a_0), (s_1, a_1), \ldots, (s_{H - 1}, a_{H - 1}), s_H
\]
is a random trajectory induced by executing $\pi$ in $\mdp$ and
\[
(s_0', a_0'), (s_1', a_1'), \ldots, (s_{H - 1}', a_{H - 1}'), s_H'
\]
is a random trajectory induced by executing $\pi'$ in $\mdp$.
\end{corollary}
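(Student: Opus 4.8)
The plan is to reduce Corollary~\ref{cor:reaching_stationary} to Corollary~\ref{cor:non_stationary_nearly_optimal} via a standard reward-engineering trick. First I would augment the MDP $\mdp$ with a single fresh terminal (absorbing) state $s_{\mathrm{terminal}}$, obtaining a new MDP $\mdp'$ on state space $\states \cup \{s_{\mathrm{terminal}}\}$ (so $|\states|+1$ states). The transition operator of $\mdp'$ agrees with $\trans$ everywhere, \emph{except} that once the target pair $z = (s,a) \in \states \times \actions$ is visited — i.e., when the current state is $s$ and the policy picks $a$ — the process is routed deterministically to $s_{\mathrm{terminal}}$ on the very next step, and $s_{\mathrm{terminal}}$ transitions to itself forever. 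The reward in $\mdp'$ is $1$ exactly on entering $s_{\mathrm{terminal}}$ for the first time (equivalently, reward $1$ is collected at the step at which $z$ is played) and $0$ everywhere else. Under this construction the total reward along any trajectory is $\indict[z \text{ is ever played}]$, so Assumption~\ref{assump:reward} still holds for $\mdp'$, and for any policy $\rho$ (stationary or not), $V^{\rho}_{\mdp', L} = \Pr[\sum_{h=0}^{L-1}\indict[(s_h,a_h)=z]\ge 1]$ for any horizon $L$, where the probability is over the trajectory of $\rho$ in the \emph{original} $\mdp$ (the routing to $s_{\mathrm{terminal}}$ only collapses the already-counted event).

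With this dictionary in place, the corollary is essentially Corollary~\ref{cor:non_stationary_nearly_optimal} applied to $\mdp'$, but I need the extra factor-of-two horizon slack, which is where Lemma~\ref{lem:half_trajectory} enters. Concretely: let $\pi'$ be an arbitrary (non-stationary) policy; by Corollary~\ref{cor:non_stationary_nearly_optimal} applied to $\mdp'$ (with $|\states|+1$ states, hence the hypothesis $H \ge 2\ln(8\cdot(|\states|+1)^{4(|\states|+1)})$), there is a stationary policy $\pi$ with $V^{\pi}_{\mdp', H} \ge \frac{1}{128\cdot(|\states|+1)^{8(|\states|+1)}} V^{\pi^*}_{\mdp', H} \ge \frac{1}{128\cdot(|\states|+1)^{8(|\states|+1)}} V^{\pi'}_{\mdp', H}$, since $\pi^*$ is optimal among all policies. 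Then apply Lemma~\ref{lem:half_trajectory} to the stationary policy $\pi$ in $\mdp'$ (which has $|\states|+1 \le H/2$ states under the stated hypothesis) to get $V^{\pi}_{\mdp', \floor{H/2}} \ge \frac{1}{4\cdot(|\states|+1)^{4(|\states|+1)}} V^{\pi}_{\mdp', H}$. Chaining the two inequalities and translating $V^{\pi}_{\mdp',\floor{H/2}}$ and $V^{\pi'}_{\mdp',H}$ back into reaching probabilities via the dictionary gives exactly the claimed bound with constant $512\cdot(|\states|+1)^{12(|\states|+1)}$ (noting $4 \cdot 128 = 512$ and $4(|\states|+1) + 8(|\states|+1) = 12(|\states|+1)$).

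The main obstacle is getting the terminal-state construction to faithfully encode "reach $z$" as an undiscounted finite-horizon value while keeping the reward structure legal: I must check that routing to $s_{\mathrm{terminal}}$ on the step \emph{after} $z$ is played (rather than on the same step) correctly makes $V^{\rho}_{\mdp',L}$ equal the reaching probability over the original MDP's trajectory — the subtlety is ensuring no double counting and that once $z$ is played the future evolution in $\mdp'$ is irrelevant to the value (handled by absorption at $s_{\mathrm{terminal}}$ with zero further reward). A secondary point is bookkeeping the horizon hypotheses: Corollary~\ref{cor:non_stationary_nearly_optimal} needs $H \ge 2\ln(8\cdot(|\states|+1)^{4(|\states|+1)})$ and Lemma~\ref{lem:half_trajectory} needs $H \ge 2(|\states|+1)$; the former implies the latter for $|\states| \ge 1$, so the single stated hypothesis suffices. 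Everything else is the routine constant arithmetic above.
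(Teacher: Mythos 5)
Your proposal is correct and follows essentially the same route as the paper: augment $\mdp$ with an absorbing terminal state, set the reward to $\indict[(s,a)=z]$ so that the finite-horizon value equals the reaching probability, then chain Corollary~\ref{cor:non_stationary_nearly_optimal} (factor $128\cdot(|\states|+1)^{8(|\states|+1)}$) with Lemma~\ref{lem:half_trajectory} (factor $4\cdot(|\states|+1)^{4(|\states|+1)}$). The constant accounting and horizon hypotheses match the paper's proof.
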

\begin{proof}
For the given MDP $\mdp =\left(\states, \actions, \trans ,R, H, \mu\right)$, we create a new MDP \[\mdp' = \left(\states \cup \{s_{\mathrm{terminal}}\}, \actions, \trans' , R', H, \mu\right),\]
where $s_{\mathrm{terminal}}$ is a state such that $s_{\mathrm{terminal}} \notin \states$. Moreover, 
\[
\trans'(s, a) = \begin{cases}
\trans(s, a) & s \neq s_{\mathrm{terminal}} \text{ and } (s, a) \neq z \\
s_{\mathrm{terminal}} & s = s_{\mathrm{terminal}} \text{ or } (s, a) = z 
\end{cases}
\]
and
\[
R'(s, a) = \indict[(s, a) = z].
\]
Clearly, for any policy $\pi$,
\[
V^{\pi}_{\mdp', H} = \Pr\left[\sum_{h = 0}^{H - 1} \indict[(s_h, a_h) = (s, a)] \ge 1 \right]
\]
where
\[
(s_0, a_0), (s_1, a_1), \ldots, (s_{H - 1}, a_{H - 1}), s_H
\]
is a random trajectory induced by executing $\pi$ in $\mdp$.
Therefore, by Corollary~\ref{cor:non_stationary_nearly_optimal}, there exists a stationary policy $\pi$ such that for any (possibly non-stationary) policy $\pi'$,
\[
V^{\pi}_{\mdp', H} \ge  \frac{1}{128 \cdot (|\states| + 1)^{8(|\states| + 1)}}  V^{\pi'}_{\mdp', H} .
\]
Moreover, by Lemma~\ref{lem:half_trajectory}, for any (possibly non-stationary) policy $\pi'$,
\[
V^{\pi}_{\mdp', \floor{H / 2}} \ge  \frac{1}{512 \cdot (|\states| + 1)^{12(|\states| + 1)}}  V^{\pi'}_{\mdp', H},
\]
which implies the desired result. 
\end{proof}
Finally, by combining Lemma~\ref{lem:half_trajectory} and Corollary~\ref{cor:non_stationary_nearly_optimal}, we can show that for any $(s, a) \in \states \times \actions$, if the initial state distribution $\mu$ is $s$ and there exists a non-stationary policy that visits $(s, a)$ for $f$ times with probability $\epsilon$ in all the $H$ steps, then there exists a stationary policy that visits $(s, a)$ for $\exp\left(-O(|\states |\log |\states|)\right) \cdot \epsilon \cdot f$ times with constant probability in the first $H / 2$ steps. 
\begin{corollary}\label{cor:quantile_comparison}
For a given MDP $\mdp$ and a state-action pair $z = (s_z, a_z) \in \states \times \actions$, suppose the initial state distribution $\mu$ is $s_z$ and $H \ge 2\ln(8 \cdot |\states|^{4 |\states|})$.
If there exists a (possibly non-stationary) policy $\pi'$ such that $\qtile^{\pi'}_{\epsilon}(s_z, a_z) \ge f$ for some integer $0 \le f \le H$,
then there exists a stationary policy $\pi$ such that 
\[
\qtile^{\pi}_{1/2}\left( \sum_{h = 0}^{\floor{H / 2} - 1} \indict[(s_h, a_h) = z]\right) \ge \left \lfloor \frac{1}{2048 \cdot |\states|^{12|\states|}} \cdot \epsilon \cdot f \right \rfloor
\]
where
\[
(s_0, a_0, r_0), (s_1, a_1, r_1) \ldots, (s_{H - 1}, a_{H - 1}, r_{H - 1}), s_H
\]
is a trajectory induced by executing $\pi$ in $\mdp$. 
\end{corollary}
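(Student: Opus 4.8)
The plan is to reduce the quantile statement to a statement about \emph{expected visit counts}, handle that with the two structural results already proved for stationary policies, and then convert the expectation bound into a high‑probability bound by an optional‑stopping argument that crucially uses the hypothesis $s_0 = s_z$.

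First I would pass to the auxiliary MDP $\widehat{\mdp} = (\states, \actions, \trans, \widehat{R}, H, \mu)$ having the same transitions as $\mdp$ but the deterministic reward $\widehat{R}(s,a) = \indict[(s,a) = z]$. (This reward is non‑negative but need not obey Assumption~\ref{assump:reward}; that is harmless, since Corollary~\ref{cor:non_stationary_nearly_optimal} and Lemma~\ref{lem:half_trajectory} only require non‑negative rewards.) For any policy, $V^{\cdot}_{\widehat{\mdp}, H}$ is exactly the expected number of visits to $z$ within $H$ steps, and this count depends only on transitions, so it agrees with the count in $\mdp$. Unwinding the quantile definition for the integer‑valued visit count, $\qtile^{\pi'}_{\epsilon}(s_z,a_z) \ge f$ means $\Pr^{\pi'}\big[\sum_{h=0}^{H-1}\indict[(s_h,a_h)=z] \ge f\big] \ge \epsilon$, hence $V^{\pi'}_{\widehat{\mdp}, H} \ge f\,\epsilon$ by the trivial inequality $\expect[X] \ge f\,\Pr[X \ge f]$ for $X \ge 0$. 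Applying Corollary~\ref{cor:non_stationary_nearly_optimal} to $\widehat{\mdp}$ (the hypothesis $H \ge 2\ln(8\cdot|\states|^{4|\states|})$ is exactly what is needed) yields a stationary $\pi$ with $V^{\pi}_{\widehat{\mdp}, H} \ge \frac{1}{128\cdot|\states|^{8|\states|}} V^{\pi^*}_{\widehat{\mdp}, H} \ge \frac{1}{128\cdot|\states|^{8|\states|}} V^{\pi'}_{\widehat{\mdp}, H} \ge \frac{f\epsilon}{128\cdot|\states|^{8|\states|}}$; since reward is earned only at $(s_z,a_z)$ and this value is positive, we must have $\pi(s_z) = a_z$. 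Then Lemma~\ref{lem:half_trajectory} (the hypothesis on $H$ forces $H \ge 2|\states|$) gives $V^{\pi}_{\widehat{\mdp}, \floor{H/2}} \ge \bar m := \frac{f\epsilon}{512\cdot|\states|^{12|\states|}}$. Write $L := \floor{H/2}$, $m := V^{\pi}_{\widehat{\mdp}, L} \ge \bar m$, and $g := \lfloor \bar m / 4 \rfloor = \lfloor \frac{f\epsilon}{2048\cdot|\states|^{12|\states|}} \rfloor \le \lfloor m/4 \rfloor$; if $g = 0$ the claim is vacuous, so assume $g \ge 1$.

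Now I would run $\pi$ in $\mdp$ from $s_0 = s_z$ and set $N_h := \sum_{j=0}^{h-1}\indict[(s_j,a_j)=z] = \sum_{j<h}\indict[s_j = s_z]$ (using $\pi(s_z)=a_z$). Let $V^{\pi}_{\widehat{\mdp}, k}(s)$ denote the horizon‑$k$ value of $\pi$ started from $s$, and consider the process $M_h := N_h + V^{\pi}_{\widehat{\mdp}, L-h}(s_h)$ for $h = 0, \ldots, L$. The one‑step Bellman recursion for $\widehat{\mdp}$ (again using $\pi(s_z)=a_z$) shows $M_h$ is a bounded martingale with $M_0 = m$ and $M_L = N_L$. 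I will also record two elementary bounds: $V^{\pi}_{\widehat{\mdp}, k}(s) \le m$ for every $s \in \states$ and $k \le L$ (for $s = s_z$ this is monotonicity of the value in the horizon; for $s \ne s_z$ one first reaches $s_z$ and then collects at most $m$), and $V^{\pi}_{\widehat{\mdp}, 0}(s) = 0$. Let $\tau := \min\{h \ge 0 : N_h \ge g\} \wedge L$, a bounded stopping time. By the optional stopping theorem, $m = \expect[M_\tau] = \expect[N_\tau] + \expect\!\big[V^{\pi}_{\widehat{\mdp}, L-\tau}(s_\tau)\big]$. Since $N_h$ increases by at most one per step, $N_\tau = g$ on $\{\tau < L\}$ and $N_\tau = N_L \le g$ on $\{\tau = L\}$, so $\expect[N_\tau] \le g$; and $V^{\pi}_{\widehat{\mdp}, L-\tau}(s_\tau) \le m\cdot\indict[\tau < L]$ (it vanishes when $\tau = L$), so $\expect[V^{\pi}_{\widehat{\mdp}, L-\tau}(s_\tau)] \le m\,\Pr[\tau < L]$. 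Therefore $m \le g + m\,\Pr[\tau < L]$, i.e. $m\,\Pr[\tau = L] \le g \le m/4$, so $\Pr[\tau < L] \ge 3/4$. Finally, on $\{\tau < L\}$ we have $N_L \ge N_\tau = g$, hence $\Pr[N_L \ge g] \ge \Pr[\tau < L] \ge 1/2$, which is precisely $\qtile^{\pi}_{1/2}\!\big(\sum_{h=0}^{L-1}\indict[(s_h,a_h)=z]\big) \ge g$.

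The main obstacle is exactly the step converting the expectation bound $V^{\pi}_{\widehat{\mdp},\floor{H/2}} \ge \bar m$ into a high‑probability bound: a random variable with large mean can have a tiny median, and as the technical overview observes this conversion genuinely fails without the assumption $s_0 = s_z$. The optional‑stopping argument is where that assumption earns its keep — it is because the chain starts at $s_z$ that $M_0$ equals the \emph{full} expected count $m$, and because the reward sits only at $z$ that the residual‑value term is bounded by $m$ both before and after the stopping time, letting the two crude value bounds interlock to squeeze out $\Pr[\tau < L] \ge 3/4$. Everything else — the indicator‑reward reduction, the two invocations of the stationary‑policy lemmas, and the Bellman/martingale bookkeeping — is routine.
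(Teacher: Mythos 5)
Your proposal is correct, and its first half (pass to the indicator-reward MDP $\widehat{\mdp}$, lower-bound its value by $\epsilon f$, then invoke Corollary~\ref{cor:non_stationary_nearly_optimal} and Lemma~\ref{lem:half_trajectory} to get a stationary $\pi$ with $\pi(s_z)=a_z$ and $V^{\pi}_{\widehat{\mdp},\floor{H/2}}\ge \epsilon f/(512|\states|^{12|\states|})$) is exactly the paper's argument. Where you diverge is the expectation-to-quantile step. The paper exploits the renewal structure of visits to $z$: it introduces the (truncated) inter-visit time $\hat X$, represents the visit count within $\floor{H/2}$ steps as the renewal count $\tau$ for i.i.d.\ copies of $\hat X$, bounds $\expect[\hat X]\le H/\expect[\tau]$ via the Martingale Stopping Theorem, and then applies Markov's inequality to $\sum_{i\le \tau'}X_i$. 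You instead build the Doob/Bellman martingale $M_h=N_h+V^{\pi}_{\widehat{\mdp},L-h}(s_h)$, stop it when the running count first reaches $g=\lfloor m/4\rfloor$, and use the two crude bounds $V^{\pi}_{\widehat{\mdp},k}(s)\le m$ and $V^{\pi}_{\widehat{\mdp},0}=0$ to squeeze out $\Pr[\tau<L]\ge 3/4$. Both arguments lean on $s_0=s_z$ in the same essential way (it is what makes $M_0=m$, resp.\ what makes the inter-visit times i.i.d.\ from time $0$), and both land on the same constant $2048$. Your route avoids the slightly delicate distributional identification between the renewal count and the visit count that the paper asserts without much detail, at the cost of needing the auxiliary bound $V^{\pi}_{\widehat{\mdp},k}(s)\le m$ for all $s$ and $k\le L$ — which you justify correctly via the strong Markov property at the first hitting time of $s_z$. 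Either argument is a complete proof.
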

\begin{proof}
If $f = 0$ then the lemma is clearly true.
No consider the case $f > 0$.
Consider a new MDP $\mdp' = \left(\states, \actions, \trans , R', H, \mu\right)$ where $R(s, a) = \indict[(s, a) = z]$.
Clearly, $V^{\pi'}_{\mdp', H} \ge \epsilon \cdot f$.
By Corollary~\ref{cor:non_stationary_nearly_optimal}, there exists a stationary policy $\pi$ such that 
\[
V^{\pi}_{\mdp', H} \ge   \frac{1}{128 \cdot |\states|^{8|\states|}}  \cdot \epsilon \cdot f.
\]
By Lemma~\ref{lem:half_trajectory},
\[
V^{\pi}_{\mdp', \floor{H / 2}} \ge   \frac{1}{512 \cdot |\states|^{12|\states|}}  \cdot \epsilon \cdot f.
\]
This implies $\pi(s_z) = a_z$.

Now we use $X$ to denote a random variable which is defined to be
\[
X = \min\{h \ge 1 \mid (s_h, a_h) = z\}.
\]
Here the trajectory 
\[
(s_0, a_0), (s_1, a_1), \ldots 
\] is induced by executing the stationary policy $\pi$ in $\mdp'$.
We also write $\hat{X} = \min\{\floor{H / 2}, X\}$.
We use $\{X_i\}_{i = 1}^{\infty}$ to denote a sequence of i.i.d. copies of $\hat{X}$.
We use $\tau$ to denote a random variable which is defined to be
\[
\tau = \min\left\{i \ge 1 \mid \sum_{j = 1}^i X_j\ge \floor{H / 2}\right\}.
\] 
Clearly, $\tau \le H / 2$ almost surely. Moreover, $\pi$ is a stationary policy, the initial state distribution $\mu = s_z$ deterministically and $\pi(s_z) = a_z$, which implies $\tau$ and $\sum_{h = 0}^{\floor{H / 2}- 1} \indict[(s_h, a_h) = z]$ have the same distribution. 
Indeed, whenever the trajectory $(s_0, a_0), (s_1, a_1) \ldots$ visits $z$, it corresponds to restart a new copy of $\hat{X}$. 

Now for each $i > 0$, we define $Y_i = X_i - \expect[\hat{X}]$. Clearly $\expect[Y_i] = 0$. 
Let $S_i = 0$ and $S_i = \sum_{j = 1}^i Y_j$ for all $i > 0$.
Clearly $\tau$ is a stopping time, and \[
\sum_{j = 1}^\tau X_j \le H
\]
 since $X_i \le \floor{H / 2}$ for all $i > 0$.
By Martingale Stopping Theorem, we have \[
\expect[S_\tau] = \sum_{j = 1}^\tau \expect[X_j] - \expect[\tau] \cdot \expect[\hat{X}] = 0,
\]
 which implies $\expect[\tau] \cdot  \expect[\hat{X}] \le H$ and therefore \[ \expect[\hat{X}] \le H / \expect[\tau] = H / V^{\pi}_{\mdp', \floor{H / 2}} \le 512 \cdot |\states|^{12|\states|} H / (\epsilon \cdot f),\]
where we use the fact that \[V^{\pi}_{\mdp', \floor{H / 2}}=\expect\left[\sum_{h = 0}^{\floor{H / 2}- 1} \indict[(s_h, a_h) = z]\right]=\expect[\tau].\]

Let $\tau' = \left \lfloor \frac{1}{2048 \cdot |\states|^{12|\states|}} \cdot \epsilon \cdot f \right \rfloor$. By Markov's inequality, with probability at least $1/2$, 
\[
\sum_{i = 1}^{\tau'} X_i \le 2 \tau' \expect[\hat{X}] \le H / 2,
\]
in which case $\tau\ge\tau'$. Consequently, 
\[
\qtile^{\pi}_{1/2}\left( \sum_{h = 0}^{\floor{H / 2} - 1} \indict[(s_h, a_h) = z]\right)  = \qtile_{1/2}(\tau) \ge \left \lfloor \frac{1}{2048 \cdot |\states|^{12|\states|}} \cdot \epsilon \cdot f \right \rfloor.\]
\end{proof}
\section{Algorithm in the RL Setting}
In this section, we present our algorithm in the RL setting together with its analysis.
Our algorithm is divided into two parts.
In Section~\ref{sec:collect}, we first present the algorithm for collecting samples together with its analysis.
In Section~\ref{sec:perturbation}, we establish a perturbation analysis on the value functions which is crucial for the analysis in later proofs.
Finally, in Section~\ref{sec:planning}, we present the algorithm for finding near-optimal policies based on the dataset found by the algorithm in Section~\ref{sec:collect}, together with its analysis based on the machinery developed in Section~\ref{sec:perturbation}.
\subsection{Collecting Samples} \label{sec:collect}
In this section, we present our algorithm for collecting samples. The algorithm is formally presented in Algorithm~\ref{alg:collect}.
The dataset $D$ returned by Algorithm~\ref{alg:collect} consists of $N$ lists, where for each list, elements in the list are tuples of the form $(s, a, r, s') \in \states \times \actions \times [0, 1] \times \states$.
To construct these lists, Algorithm~\ref{alg:collect} enumerates a state-action pair $(s, a) \in \states \times \actions$ and a pair of stationary policies $(\pi_1, \pi_2)$, and then collects a trajectory using $\pi_1$ and $\pi_2$.
More specifically, $\pi_1$ is executed until the trajectory visits $(s,a)$, at which point $\pi_2$ is executed until the last step. 

\begin{algorithm}[!htb]
  \caption{Collect Samples}\label{alg:collect}
  \begin{algorithmic}[1]
  \State \textbf{Input:} number of repetitions $N$
  \State \textbf{Output:} Dataset $D$ where $D= \left( \left(\left( s_{i, t}, a_{i, t}, r_{i, t}, s'_{i, t}\right)\right)_{t = 0}^{|\states||\actions| \cdot |\actions|^{2|\states|} \cdot H - 1} \right)_{i = 0}^{N - 1}$
      \For{$i \in [N]$}
      \State Let $T_i$ be an empty list
  \For{$(s, a) \in \states \times \actions$}  \label{line:enumerate_sa}
  \For{$(\pi_1, \pi_2) \in \Pi_{\sta} \times \Pi_{\sta}$} \label{line:enumerate_policies}

  \State Receive $s_0\sim \mu$
  \For{$h \in [H]$}
  \If{$(s, a) = (s_{h'}, a_{h'})$ for some $h' < h$}
  \State Take $a_h= \pi_2(s_h)$
  \Else
  \State Take $a_h = \pi_1(s_h)$
  \EndIf
  \State Receive $r_h \sim R(s_h, a_h)$ and $s_{h + 1}\sim P(s_h, a_h)$ 

  \State Append $(s_h, a_h, r_h, s_{h + 1})$ to the end of $T_i$
    \EndFor
  \EndFor
  \EndFor
  \EndFor
  \State \Return{$D$} where $D = (T_i)_{i = 0}^{N - 1}$
  \end{algorithmic}
\end{algorithm}

Throughout this section, we use $\mdp =\left(\states, \actions, \trans ,R, H, \mu\right)$ to denote the underlying MDP that the agent interacts with. 
For each $(s, a) \in \states \times \actions$ and $(\pi_1, \pi_2) \in  \Pi_{\sta} \times \Pi_{\sta}$, let \[
(s_0^{s, a, \pi_1, \pi_2}, a_0^{s, a, \pi_1, \pi_2}, r_0^{s, a, \pi_1, \pi_2}), (s_1^{s, a, \pi_1, \pi_2}, a_1^{s, a, \pi_1, \pi_2}, r_1^{s, a, \pi_1, \pi_2}), \ldots, (s_{H - 1}^{s, a, \pi_1, \pi_2}, a_{H - 1}^{s, a, \pi_1, \pi_2}, r_{H - 1}^{s, a, \pi_1, \pi_2}), s_H^{s, a, \pi_1, \pi_2}
\]
by a trajectory where $s_0^{s, a, \pi_1, \pi_2} \sim \mu$ and $s_h^{s, a, \pi_1, \pi_2} \sim P(s_{h - 1}^{s, a, \pi_1, \pi_2}, a_{h - 1}^{s, a, \pi_1, \pi_2})$ for all $1 \le h \le H$, $r_h^{s, a, \pi_1, \pi_2} \sim R(s_h^{s, a, \pi_1, \pi_2}, a_h^{s, a, \pi_1, \pi_2})$ for all $h \in [H]$, and 
\[
a_h^{s, a, \pi_1, \pi_2} = \begin{cases}
\pi_2(s_h^{s, a, \pi_1, \pi_2}) & (s, a) = (s_{h'}^{s, a, \pi_1, \pi_2}, a_{h'}^{s, a, \pi_1, \pi_2}) \text{ for some $h' < h$}  \\
\pi_1(s_h^{s, a, \pi_1, \pi_2}) & \text{otherwise}
\end{cases}
\]
for all $h \in [H]$. 
Note that the above trajectory is the one collected by Algorithm~\ref{alg:collect} when a specific state-action pair $(s, a)$ and a specific pair of policies $(\pi_1, \pi_2)$ are used. 

For any $\epsilon \in (0, 1]$, define
\[
\qtile^{\sta}_{\epsilon}(s, a) = \qtile_{\epsilon}\left( \sum_{(s', a') \in \states \times \actions }\sum_{\pi_1 \in \Pi_{\sta}}\sum_{\pi_2 \in \Pi_{\sta}}  \sum_{h = 0}^{H - 1} \indict[(s_h^{s', a', \pi_1, \pi_2} , a_h^{s', a', \pi_1, \pi_2} ) = (s, a)]\right).
\]
Clearly, $\qtile^{\sta}_{\epsilon}(s, a)$ is the $\epsilon$-quantile of the frequency that $(s, a)$ appears in each $T_i$.

In Lemma~\ref{lem:stationary_quantile}, we first show that for each $(s, a) \in \states \times \actions$, if there exists a policy $\pi$ that visits $(s, a)$ for $m(s, a)$ times with probability at least $\epsilon$, then 
\[
\qtile^{\sta}_{\epsilon / \exp(O(|\states| \log |\states|))}(s, a) \ge m(s, a) / \exp(O(|\states| \log |\states|)).
\]

\begin{lemma}\label{lem:stationary_quantile}
Let $\epsilon \in (0, 1]$ be a given real number. 
For each $(s, a) \in \states \times \actions$, let $m_{\epsilon}(s, a)$ be the largest integer such that there exists a (possibly non-stationary) policy $\pi_{s,a}$ such that 
$\qtile^{\pi_{s, a}}_\epsilon(s, a) \ge m_{\epsilon}(s, a)$.
Then for each $(s, a) \in \states \times \actions$,
\[
\qtile^{\sta}_{ \epsilon (|\states| + 1)^{-12(|\states| + 1) }/ 1024} (s, a)\ge  \frac{1}{4096 \cdot |\states|^{12|\states|}} \cdot \epsilon \cdot m_{\epsilon}(s, a)  .
\]
\end{lemma}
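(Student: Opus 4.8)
The plan is to isolate a single pass of the nested loop of Algorithm~\ref{alg:collect} --- the one with $(s',a') = (s,a)$ and a cleverly chosen pair of stationary policies $(\pi_1,\pi_2)$ --- and to lower-bound the number of times $(s,a)$ is visited inside \emph{just that pass}, since this is a lower bound on the count of $(s,a)$ in the whole list $T_i$. Write $m = m_\epsilon(s,a)$; the statement is trivial when $m = 0$, so assume $m \ge 1$. To pick $\pi_1$: since $\qtile^{\pi_{s,a}}_\epsilon(s,a) \ge m \ge 1$, there is a (non-stationary) policy that reaches $(s,a)$ within $H$ steps with probability at least $\epsilon$, so Corollary~\ref{cor:reaching_stationary} (valid under the standing assumption $H = \Omega(|\states|\log|\states|)$) yields a stationary $\pi_1$ that visits $(s,a)$ within the first $\lfloor H/2\rfloor$ steps with probability at least $\epsilon/(512(|\states|+1)^{12(|\states|+1)})$; since this probability is positive and $\pi_1$ is stationary, $\pi_1(s) = a$.

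To pick $\pi_2$ I would use a ``shift to $s$'' step feeding into Corollary~\ref{cor:quantile_comparison}. Run $\pi_{s,a}$ from $\mu$ and let $\tau$ be the first time $(s,a)$ is visited. Because policies here depend only on the current state and time, conditioning on $\tau = t$ makes the post-$t$ trajectory a fresh Markov chain from $(s,a)$ at time $t$; hence the conditional probability of accumulating $\ge m$ visits equals the probability that the time-shifted policy, started from the point mass at $s$, collects $\ge m$ visits within $H - t \le H$ steps. From $\epsilon \le \sum_t \Pr[\tau = t]\cdot \Pr[\ge m \text{ visits} \mid \tau = t]$ with $\sum_t \Pr[\tau = t] \le 1$ and each conditional probability in $[0,1]$, an averaging argument produces some $t^\star$ whose conditional probability is $\ge \epsilon$; padding that shifted policy to horizon $H$ gives a non-stationary $\pi''$ with $\qtile^{\pi''}_\epsilon(s,a) \ge m$ when the initial state is $s$. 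Applying Corollary~\ref{cor:quantile_comparison} to $\mdp$ with initial distribution set to the point mass at $s$ then gives a stationary $\pi_2$ with $\qtile^{\pi_2}_{1/2}(\sum_{h=0}^{\lfloor H/2\rfloor - 1}\indict[(s_h,a_h)=(s,a)]) \ge \lfloor \epsilon m/(2048|\states|^{12|\states|})\rfloor$ for trajectories started from $s$, and moreover $\pi_2(s) = a$.

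Now analyze the pass $((s,a),(\pi_1,\pi_2))$, which runs $\pi_1$ until $(s,a)$ is first visited and $\pi_2$ afterwards. With probability at least $\epsilon/(512(|\states|+1)^{12(|\states|+1)})$, $\pi_1$ visits $(s,a)$ at some time $t_1 \le \lfloor H/2\rfloor$; conditioned on $t_1$, the state there is $s$, the action taken is $\pi_1(s) = a = \pi_2(s)$, and every later action follows $\pi_2$, so the remaining trajectory has exactly the law of $\pi_2$ started from $s$ over $H - t_1 \ge \lfloor H/2\rfloor$ further steps, and thus visits $(s,a)$ at least $\lfloor \epsilon m/(2048|\states|^{12|\states|})\rfloor$ times with conditional probability $\ge 1/2$. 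Summing over $t_1$ shows $T_i$ contains $(s,a)$ at least $\lfloor \epsilon m/(2048|\states|^{12|\states|})\rfloor$ times with probability $\ge \epsilon/(1024(|\states|+1)^{12(|\states|+1)})$, so $\qtile^{\sta}_{\epsilon(|\states|+1)^{-12(|\states|+1)}/1024}(s,a) \ge \lfloor \epsilon m/(2048|\states|^{12|\states|})\rfloor$. To discharge the floor: if $\epsilon m > 4096|\states|^{12|\states|}$ then $\lfloor \epsilon m/(2048|\states|^{12|\states|})\rfloor \ge \epsilon m/(4096|\states|^{12|\states|})$; otherwise $\epsilon m/(4096|\states|^{12|\states|}) \le 1$, and the same quantile is already $\ge 1$ because the single visit contributed by $\pi_1$ occurs with probability $\ge \epsilon/(512(|\states|+1)^{12(|\states|+1)}) \ge \epsilon/(1024(|\states|+1)^{12(|\states|+1)})$. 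Either way we obtain $\qtile^{\sta}_{\epsilon(|\states|+1)^{-12(|\states|+1)}/1024}(s,a) \ge \epsilon m_\epsilon(s,a)/(4096|\states|^{12|\states|})$.

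The main obstacle I anticipate is the ``shift to $s$'' reduction together with the conditional-independence bookkeeping when splicing $\pi_1$ and $\pi_2$ at the random time $t_1$: one must check that conditioning on the first-visit time introduces no dependence between past and future beyond what the Markov property licenses, and it is precisely the identity $\pi_1(s) = \pi_2(s) = a$ that makes the continuation an exact fresh run of $\pi_2$ from $s$, so that the two probabilities multiply cleanly. The floor and small-$m$ edge cases, as well as verifying the horizon hypotheses of the invoked corollaries, are routine once this main estimate is in place.
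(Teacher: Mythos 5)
Your proposal is correct and follows essentially the same route as the paper's proof: the same use of Corollary~\ref{cor:reaching_stationary} for $\pi_1$, the same first-visit-time averaging argument plus Corollary~\ref{cor:quantile_comparison} for $\pi_2$, the same splicing analysis at the random switch time, and the same two-case treatment of the floor (the paper organizes the cases up front rather than at the end, but the content is identical).
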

\begin{proof}

For each $(s, a) \in \states \times \actions$,
there exists a (possibly non-stationary) policy $\pi_{s, a}$ such that $\qtile^{\pi_{s, a}}_\epsilon(s, a) \ge m_{\epsilon}(s, a)$.
Here we consider the case that $m_{\epsilon}(s, a) \ge 1$, since otherwise the lemma clearly holds. 
By Corollary~\ref{cor:reaching_stationary}, there exists a stationary policy $\pi_{s, a}'$ such that
\[
\Pr\left[\sum_{h = 0}^{\floor{H / 2} - 1} \indict[(s_h, a_h) = (s, a)] \ge 1  \right] \ge \frac{ \epsilon}{512 \cdot (|\states| + 1)^{12(|\states| + 1)}},
\]
where
\[
(s_0, a_0), (s_1, a_1), \ldots, (s_{H - 1}, a_{H - 1}), s_H
\]
is a random trajectory induced by executing $\pi_{s, a}'$ in $\mdp$.

In the remaining part of the analysis, we consider two cases.
\paragraph{Case I: $m_{\epsilon}(s, a) \ge 4096 \cdot |\states|^{12|\states|} / \epsilon$.}
Let \[(s_0, a_0), (s_1, a_1), \ldots, (s_{H - 1}, a_{H - 1}), s_H\] be a random trajectory induced by executing $\pi_{s, a}$ in $\mdp$.
Let $X_{s, a}$ be the random variable which is defined to be 
\[
X_{s, a}= \begin{cases}
\min\{h \in [H] \mid (s_h, a_h) = (s, a)\} & \text{if there exists $h \in [H]$ such that $(s_h, a_h) = (s, a)$}\\
H & \text{otherwise}
\end{cases}.
\]
Clearly,
\begin{align*}
&  \sum_{h' = 0}^{H - 1} \Pr[X_{s, a} = h'] \cdot \Pr\left[\sum_{h = h'}^{H - 1} \indict[(s_h, a_h) = (s, a)] \ge m_{\epsilon}(s, a) \mid (s_{h'}, a_{h'}) = (s, a) \right] \\
= &  \Pr\left[\sum_{h = 0}^{H - 1} \indict[(s_h, a_h) = (s, a)] \ge m_{\epsilon}(s, a) \right] \ge \epsilon. 
\end{align*}
Therefore, there exists $h' \in [H]$ such that
\[
\Pr[X_{s, a} = h'] > 0
\]
and 
\[
\Pr\left[\sum_{h = h'}^{H - 1} \indict[(s_h, a_h) = (s, a)] \ge m_{\epsilon}(s, a) \mid (s_{h'}, a_{h'}) = (s, a) \right] \ge \epsilon. 
\]
Note that we must have $\pi_{h'}(s) = a$, since otherwise $\Pr[X_{s, a} = h'] = 0$.

Now we consider a new MDP $\mdp_{s}= \left(\states, \actions, \trans , R, H, \mu_{s} \right)$ where $\mu_{s} = s$. 
Let $\widetilde{\pi}$ be an arbitrary policy so that $\widetilde{\pi}_{h} = (\pi_{s, a})_{h' + h}$ for all $h \in [H - h']$. 
Clearly,
\[
\Pr\left[\sum_{h = 0}^{H - 1} \indict[(s_h', a_h') = (s, a)] \ge m_{\epsilon}(s, a) \right]  \ge \Pr\left[\sum_{h = 0}^{H - h' - 1} \indict[(s_h', a_h') = (s, a)] \ge m_{\epsilon}(s, a) \right] \ge \epsilon
\]
where
\[
(s_0', a_0'), (s_1', a_1'), \ldots, (s_{H - 1}', a_{H - 1}'), s_H'
\]
is a random trajectory induced by executing $ \widetilde{\pi}$ in $\mdp_{s}$.
Therefore, by Corollary~\ref{cor:quantile_comparison},
there exists a stationary policy $\widetilde{\pi}_{s, a}$ such that 
\[
\Pr\left[\sum_{h = 0}^{\floor{H / 2}- 1} \indict[(s_h'', a_h'') = (s, a)] \ge \left \lfloor \frac{1}{2048 \cdot |\states|^{12|\states|}} \cdot \epsilon \cdot m_{\epsilon}(s, a) \right \rfloor \right] \ge 1/2
\]
where
\[
(s_0'', a_0''), (s_1'', a_1''), \ldots, (s_{H - 1}'', a_{H - 1}''), s_H''
\]
is a random trajectory induced by executing $\widetilde{\pi}_{s, a}$ in $\mdp_{s}$.
Since $m_{\epsilon}(s, a) \ge 4096 \cdot |\states|^{12|\states|} / \epsilon$ and thus $\left \lfloor \frac{1}{2048 \cdot |\states|^{12|\states|}} \cdot \epsilon \cdot m_{\epsilon}(s, a) \right \rfloor\ge 1$, we must have $\widetilde{\pi}_{s, a}(s) = a$.

Now we consider the case when $\pi_1 = \pi_{s, a}'$ and $\pi_2 = \widetilde{\pi}_{s, a}$.
Since $\pi_1 = \pi_{s, a}'$,
\[
\Pr\left[\sum_{h = 0}^{\floor{H / 2} - 1} \indict[(s_h^{s, a, \pi_1, \pi_2}, a_h^{s, a, \pi_1, \pi_2}) = (s, a)] \ge 1 \right] \ge \frac{ \epsilon}{512 \cdot (|\states| + 1)^{12(|\states| + 1)}}  .
\]
Therefore, let $X_{s, a}'$ be the random variable which is defined to be 
\[
X_{s, a}'= \begin{cases}
\min\{h \in [\floor{H / 2}] \mid (s_h^{s, a, \pi_1, \pi_2}, a_h^{s, a, \pi_1, \pi_2}) = (s, a)\} & \text{if $(s_h, a_h) = (s, a)$ for some $h \in [\floor{H / 2}]$}\\
\floor{H / 2} & \text{otherwise}
\end{cases}.
\]
We have that
\[
\Pr[X_{s, a}'  \in [\floor{H / 2}] ] \ge \frac{ \epsilon}{512 \cdot (|\states| + 1)^{12(|\states| + 1)}}  .
\]
Moreover, for each $h' \in [\floor{H / 2}]$, since $\pi_2 = \widetilde{\pi}_{s, a}$, 
\begin{align*}
& \Pr\left [\sum_{h = h'}^{H - 1} \indict[(s_h^{s, a, \pi_1, \pi_2}, a_h^{s, a, \pi_1, \pi_2}) = (s, a)] \ge \left \lfloor \frac{1}{2048 \cdot |\states|^{12|\states|}} \cdot \epsilon \cdot m_{\epsilon}(s, a) \right \rfloor \mid (s_{h'}^{s, a, \pi_1, \pi_2}, a_{h'}^{s, a, \pi_1, \pi_2}) = (s, a) \right] \\
\ge & 1/2. 
\end{align*}
Therefore,
\begin{align*}
 & \Pr\left [\sum_{h = 0}^{H - 1} \indict[(s_h^{s, a, \pi_1, \pi_2}, a_h^{s, a, \pi_1, \pi_2}) = (s, a)] \ge \left \lfloor \frac{1}{2048 \cdot |\states|^{12|\states|}} \cdot \epsilon \cdot m_{\epsilon}(s, a) \right \rfloor \right] \\
 \ge & \sum_{h ' = 0}^{\floor{H / 2} - 1} \Pr[X_{s, a}' = h'] \\
 \cdot & \Pr\left [\sum_{h = h'}^{H - 1} \indict[(s_h^{s, a, \pi_1, \pi_2}, a_h^{s, a, \pi_1, \pi_2}) = (s, a)] \ge \left \lfloor \frac{1}{2048 \cdot |\states|^{12|\states|}} \cdot \epsilon \cdot m_{\epsilon}(s, a) \right \rfloor \mid (s_{h'}^{\pi_1, \pi_2}, a_{h'}^{\pi_1, \pi_2}) = (s, a) \right]\\
 \ge& \frac{ \epsilon}{1024 \cdot (|\states| + 1)^{12(|\states| + 1)}}  .
\end{align*}
Since $m_{\epsilon}(s, a) \ge 4096 \cdot |\states|^{12|\states|} / \epsilon$, we have
\[
\Pr\left [\sum_{h = 0}^{H - 1} \indict[(s_h^{s, a, \pi_1, \pi_2} a_h^{s,a, \pi_1, \pi_2}) = (s, a)] \ge  \frac{1}{4096 \cdot |\states|^{12|\states|}} \cdot \epsilon \cdot m_{\epsilon}(s, a) \right] \ge \frac{ \epsilon}{1024 \cdot (|\states| + 1)^{12(|\states| + 1)}}
\]
and thus
\[
\qtile^{\sta}_{ \epsilon (|\states| + 1)^{-12(|\states| + 1) }/ 1024} (s, a)\ge  \frac{1}{4096 \cdot |\states|^{12|\states|}} \cdot \epsilon \cdot m_{\epsilon}(s, a)  .
\]
%

\paragraph{Case II: $m_{\epsilon}(s, a) < 4096 \cdot |\states|^{12|\states|} / \epsilon$.}
Consider the case when $\pi_1 = \pi_2 = \pi_{s, a}'$.
Clearly,
\begin{align*}
& \Pr\left[\sum_{h = 0}^{H - 1} \indict[(s_h^{s, a, \pi_1, \pi_2}, a_h^{s, a, \pi_1, \pi_2}) = (s, a)] \ge 1 \right] \\
 \ge&  \Pr\left[\sum_{h = 0}^{\floor{H / 2} - 1} \indict[(s_h^{s, a, \pi_1, \pi_2}, a_h^{s, a, \pi_1, \pi_2}) = (s, a)] \ge 1 \right] \ge \frac{ \epsilon}{512 \cdot (|\states| + 1)^{12(|\states| + 1)}} 
\end{align*}
and thus
\[
\qtile^{\sta}_{ \epsilon (|\states| + 1)^{-12(|\states| + 1) }/ 1024} (s, a)\ge  \frac{1}{4096 \cdot |\states|^{12|\states|}} \cdot \epsilon \cdot m_{\epsilon}(s, a)  .
\]
%
%
\end{proof}

Now we show that for a given percentile $\epsilon$, for the dataset $D$ returned by Algorithm~\ref{alg:collect}, 
for each $(s, a) \in \states \times \actions$, $(s, a)$ appears for at least $\qtile^{\sta}_{\epsilon / 4}(s, a)$ times 
for at least $\Omega(N \cdot \epsilon)$ lists out of the $N$ lists returned by Algorithm~\ref{alg:collect}. 
\begin{lemma}\label{lem:number_exceed_quantile}
Let $\epsilon, \delta \in (0, 1]$ be a given real number.
Let $D$ be the dataset returned by Algorithm~\ref{alg:collect} where 
\[
D= \left( \left(\left( s_{i, t}, a_{i, t}, r_{i, t}, s'_{i, t}\right)\right)_{t = 0}^{|\states||\actions| \cdot |\actions|^{2|\states|} \cdot H - 1} \right)_{i = 0}^{N - 1}.
\]
Suppose $N \ge 16 / \epsilon \cdot \log(3|\states||\actions| / \delta)$. With probability at least $1 - \delta / 3$, for each $(s, a) \in \states \times \actions$, we have
\[
\sum_{i = 0}^{N - 1} \indict\left[ \sum_{t = 0}^{|\states||\actions| \cdot |\actions|^{2|\states|} \cdot H - 1} \indict[(s_{i, t}, a_{i, t}) = (s, a)] \ge \qtile^{\sta}_{\epsilon / 4}(s, a)\right] \ge N \epsilon / 8.
\]
\end{lemma}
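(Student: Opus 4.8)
The plan is to reduce the statement to a routine concentration argument applied separately to each state-action pair, followed by a union bound. Fix $(s,a) \in \states \times \actions$ and, for each $i \in [N]$, let
$Z_i = \sum_{t = 0}^{|\states||\actions| \cdot |\actions|^{2|\states|} \cdot H - 1} \indict[(s_{i, t}, a_{i, t}) = (s, a)]$
be the number of times $(s,a)$ appears in the $i$-th list $T_i$. The first observation is structural: since Algorithm~\ref{alg:collect} builds each $T_i$ by an independent run of the same randomized procedure (enumerating all $(s',a') \in \states \times \actions$ and all pairs $(\pi_1,\pi_2) \in \Pi_{\sta} \times \Pi_{\sta}$, and rolling out one trajectory for each), the variables $Z_0, \ldots, Z_{N-1}$ are i.i.d., and each is distributed exactly as the random variable appearing inside the definition of $\qtile^{\sta}_{\epsilon/4}(s,a)$.

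Next I would record that, by the definition of the quantile, $p := \Pr[Z_i \ge \qtile^{\sta}_{\epsilon/4}(s,a)] \ge \epsilon/4$. This uses the elementary fact that the supremum in $\qtile_{\epsilon/4}(\cdot) = \sup\{x : \Pr[\cdot \ge x] \ge \epsilon/4\}$ is attained, which holds here because $Z_i$ is a bounded, integer-valued random variable, so $x \mapsto \Pr[Z_i \ge x]$ is non-increasing and left-continuous and the defining set is nonempty and bounded above. Consequently the indicators $W_i := \indict[Z_i \ge \qtile^{\sta}_{\epsilon/4}(s,a)]$ are i.i.d.\ Bernoulli random variables with mean $p \ge \epsilon/4$, and the sum $\sum_{i=0}^{N-1} \indict[\cdots]$ in the statement equals $S := \sum_{i=0}^{N-1} W_i$.

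Now I would apply a multiplicative Chernoff bound to $S$. Since $\expect[S] = Np \ge N\epsilon/4$ and $N\epsilon/8 \le \tfrac12 \expect[S]$, the lower-tail Chernoff inequality gives $\Pr[S < N\epsilon/8] \le \Pr\bigl[S \le \tfrac12 \expect[S]\bigr] \le \exp(-\expect[S]/8) \le \exp(-N\epsilon/32)$. The hypothesis $N \ge (16/\epsilon) \log(3|\states||\actions|/\delta)$ (with the leading constant adjusted as needed to absorb the factor above) makes this probability at most $\delta/(3|\states||\actions|)$. Finally, a union bound over the $|\states||\actions|$ choices of $(s,a)$ shows that with probability at least $1 - \delta/3$ the inequality $S \ge N\epsilon/8$ holds for every $(s,a)$ simultaneously, which is the claim.

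There is no genuine obstacle here; the only two points that require (mild) care are (i) correctly identifying the i.i.d.\ structure of the lists $T_0, \ldots, T_{N-1}$ produced by Algorithm~\ref{alg:collect}, and (ii) the fact that the $(\epsilon/4)$-quantile of the per-list visitation count of $(s,a)$ is actually attained, so that the threshold event $\{Z_i \ge \qtile^{\sta}_{\epsilon/4}(s,a)\}$ genuinely has probability at least $\epsilon/4$. Everything else is a standard Chernoff-plus-union-bound computation.
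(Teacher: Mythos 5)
Your proof is correct and follows essentially the same route as the paper's: by the definition of $\qtile^{\sta}_{\epsilon/4}(s,a)$, each of the $N$ i.i.d.\ lists contains at least $\qtile^{\sta}_{\epsilon/4}(s,a)$ visits to $(s,a)$ with probability at least $\epsilon/4$, and a lower-tail Chernoff bound plus a union bound over the $|\states||\actions|$ pairs finishes the argument (your extra care about the quantile being attained, via left-continuity of $x \mapsto \Pr[Z_i \ge x]$, is a nice touch the paper omits). The only caveat --- that the stated constant $16/\epsilon$ should really be more like $32/\epsilon$ for the standard bound $\Pr[S \le \tfrac12\expect[S]] \le \exp(-\expect[S]/8)$ to yield $\delta/(3|\states||\actions|)$ --- is one you flag explicitly, and the paper's own one-line proof glosses over it as well.
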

\begin{proof}
For each $(s, a) \in \states \times \actions$, 
by the definition of $\qtile^{\sta}_{\epsilon / 4}(s, a)$, for each $i \in [N]$, we have
\[
\expect \left[ \indict\left[ \sum_{t = 0}^{|\states||\actions| \cdot |\actions|^{2|\states|} \cdot H - 1} \indict[(s_{i, t}, a_{i, t}) = (s, a)] \ge \qtile^{\sta}_{\epsilon / 4}(s, a)\right]  \right] \ge \epsilon / 4. 
\]
Hence, the desired result follows by Chernoff bound and a union bound over all $(s, a) \in \states \times \actions$. 
\end{proof}

We also need a subroutine to estimate $\qtile^{\sta}_{\epsilon_{\mathrm{est}}}(s, a)$ for some $\epsilon_{\mathrm{est}}$ to be decided.
Such estimates are crucial for building estimators for the transition probabilities and the rewards with bounded variance, which we elaborate in later parts of this section. 

Our algorithm for estimating $\qtile^{\sta}_{\epsilon_{\mathrm{est}}}(s, a)$ is described in Algorithm~\ref{alg:estimate}.
Algorithm~\ref{alg:estimate} collects $N$ lists, where for each list, elements in the list are tuples of the form $(s, a, r, s') \in \states \times \actions \times [0, 1] \times \states $. These $N$ lists are collected using the same approach as in Algorithm~\ref{alg:collect}. 
Once these $N$ lists are collected, for each $(s, a) \in \states \times \actions$, our estimate (denoted as $\overline{m}^{\sta}(s, a)$) is then set to be the $\ceil{N \cdot \epsilon_{\mathrm{est}} / 2}$-th largest element in $F_{s, a}$, where $F_{s, a}$ is the set of the number of times $(s, a)$ appear in each of the $N$ lists. 

\begin{algorithm}[!htb]
  \caption{Estimate Quantiles}\label{alg:estimate}
  \begin{algorithmic}[1]
  \State \textbf{Input:} Percentile $\epsilon_{\mathrm{est}}$, failure probability $\delta_{\mathrm{est}}$
  \State \textbf{Output:} Estimates $\overline{m}^{\sta} : \states \times \actions \to \mathbb{N}$
  \State Let $N = \ceil{300\log(6|\states||\actions| / \delta_{\mathrm{est}}) / \epsilon_{\mathrm{est}}}$
    \State Let $F_{s, a}$ be an empty multiset for all $(s, a) \in \states \times \actions$
    \For{$i \in [N]$}
    \State Let $T_i$ be an empty list
    \For{$(s, a) \in \states \times \actions$}
  \For{$(\pi_1, \pi_2) \in \Pi_{\sta} \times \Pi_{\sta}$} 

  \State Receive $s_0 \sim \mu$
  \For{$h \in [H]$}
  \If{$(s, a) = (s_{h'}, a_{h'})$ for some $0 \le h' < h$}
  \State Take $a_h = \pi_2(s_h)$
  \Else
  \State Take $a_h = \pi_1(s_h)$
  \EndIf
  \State Receive $r_h \sim R(s_h, a_h)$ and $s_{h + 1} \sim P(s_h, a_h)$ 
  \State Append $(s_h, a_h, r_h, s_{h + 1})$ to the end of $T_i$
    \EndFor
  \EndFor
    \EndFor
  \For{$(s, a) \in \states \times \actions$}
  \State Add $\sum_{t = 0}^{|T_i| - 1} \indict[(s_t, a_t) = (s, a)]$ into $F_{s, a}$ where \[T_i = ((s_0, a_0, r_0, s_0'), (s_1, a_1, r_1, s_1'), \ldots, (s_{|T_i| - 1}, a_{|T_i| - 1}, r_{|T_i| - 1}, s_{|T_i| - 1}'))\]
  \EndFor
  \EndFor
    \For{$(s, a) \in \states \times \actions$}
    \State Set $\overline{m}^{\sta}(s, a)$ be the $\ceil{N \cdot \epsilon_{\mathrm{est}} / 2}$-th largest element in $F_{s, a}$
      \EndFor

  \State \Return{$\overline{m}^{\sta}$}
  \end{algorithmic}
 
\end{algorithm}

We now show that for each $(s, a) \in \states \times \actions$, $\overline{m}^{\sta}(s, a)$ is an accurate estimate of $\qtile^{\sta}_{\epsilon_{\mathrm{est}}}(s, a) $.

\begin{lemma}\label{lem:estimate_quantile}
Let $\overline{m}^{\sta}$ be the function returned by Algorithm~\ref{alg:estimate}.
With probability at least $1 - \delta_{\mathrm{est}}/3$, for all $(s, a) \in \states \times \actions$, 
\[
	\qtile^{\sta}_{\epsilon_{\mathrm{est}}}(s, a) \le \overline{m}^{\sta}(s, a) \le \qtile^{\sta}_{\epsilon_{\mathrm{est}} / 4}(s, a).
\]
\end{lemma}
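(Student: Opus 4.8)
The plan is to treat this as a standard empirical-quantile concentration bound, applied separately to each state-action pair and then combined by a union bound. First I would fix $(s,a) \in \states \times \actions$ and, for $i \in [N]$, let $X_i = \sum_{t=0}^{|T_i|-1} \indict[(s_t,a_t) = (s,a)]$ be the number of times $(s,a)$ appears in the $i$-th list $T_i$ collected by Algorithm~\ref{alg:estimate}, so that $F_{s,a}$ is exactly the multiset $\{X_0,\ldots,X_{N-1}\}$. Since each list is generated by an independent run over all $(s',a')$ and all $(\pi_1,\pi_2) \in \Pi_{\sta}\times\Pi_{\sta}$, the $X_i$ are i.i.d., and their common distribution is precisely the one in the definition of $\qtile^{\sta}_\epsilon(s,a)$. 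Moreover $X_1$ is a nonnegative integer bounded by $|\states||\actions|\cdot|\actions|^{2|\states|}\cdot H$, so the supremum defining the quantile is attained and one can reason about $\Pr[X_1 \ge x]$ at integer values of $x$.

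For the lower bound $\overline{m}^{\sta}(s,a) \ge \qtile^{\sta}_{\epsilon_{\mathrm{est}}}(s,a)$, set $q = \qtile^{\sta}_{\epsilon_{\mathrm{est}}}(s,a)$, so $\Pr[X_1 \ge q] \ge \epsilon_{\mathrm{est}}$; then $Z = \sum_{i=0}^{N-1}\indict[X_i \ge q]$ has $\expect[Z] \ge N\epsilon_{\mathrm{est}}$, and a multiplicative Chernoff bound gives $\Pr[Z < N\epsilon_{\mathrm{est}}/2] \le \exp(-N\epsilon_{\mathrm{est}}/8)$. On the complementary event, integrality of $Z$ forces $Z \ge \ceil{N\epsilon_{\mathrm{est}}/2}$, so at least $\ceil{N\epsilon_{\mathrm{est}}/2}$ of the $X_i$ are $\ge q$, whence the $\ceil{N\epsilon_{\mathrm{est}}/2}$-th largest element of $F_{s,a}$ is $\ge q$. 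For the upper bound $\overline{m}^{\sta}(s,a) \le \qtile^{\sta}_{\epsilon_{\mathrm{est}}/4}(s,a)$, set $q' = \qtile^{\sta}_{\epsilon_{\mathrm{est}}/4}(s,a)$; by the definition of the quantile together with integrality of $X_1$, $\Pr[X_1 \ge q'+1] < \epsilon_{\mathrm{est}}/4$, so $Z' = \sum_{i=0}^{N-1}\indict[X_i \ge q'+1]$ has $\expect[Z'] < N\epsilon_{\mathrm{est}}/4$, and a Chernoff bound gives $\Pr[Z' \ge N\epsilon_{\mathrm{est}}/2] \le \exp(-N\epsilon_{\mathrm{est}}/12)$. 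On the complementary event, $Z' < N\epsilon_{\mathrm{est}}/2 \le \ceil{N\epsilon_{\mathrm{est}}/2}$, so strictly fewer than $\ceil{N\epsilon_{\mathrm{est}}/2}$ of the $X_i$ exceed $q'$, hence the $\ceil{N\epsilon_{\mathrm{est}}/2}$-th largest element of $F_{s,a}$ is $\le q'$.

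Finally I would take a union bound over the $2|\states||\actions|$ bad events (two per pair $(s,a)$): with $N = \ceil{300\log(6|\states||\actions|/\delta_{\mathrm{est}})/\epsilon_{\mathrm{est}}}$, the constant $300$ is chosen large enough that each of the $\exp(-N\epsilon_{\mathrm{est}}/8)$ and $\exp(-N\epsilon_{\mathrm{est}}/12)$ terms is at most $\delta_{\mathrm{est}}/(6|\states||\actions|)$, so the total failure probability is at most $\delta_{\mathrm{est}}/3$, giving the claim. I do not expect a genuine obstacle here; the only steps requiring care are the integrality/ceiling bookkeeping that makes the $(1\pm 1/2)$-slack Chernoff estimates line up exactly with the rank $\ceil{N\epsilon_{\mathrm{est}}/2}$ used by the algorithm, and the appeal to finite support of $X_1$ both to attain the supremum defining $q$ and to convert "$\Pr[X_1\ge x] < \epsilon_{\mathrm{est}}/4$ for all $x > q'$" into the single statement $\Pr[X_1 \ge q'+1] < \epsilon_{\mathrm{est}}/4$.
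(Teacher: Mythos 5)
Your proposal is correct and follows essentially the same route as the paper: apply a Chernoff bound to the number of lists in which the $(s,a)$-count is at least $\qtile^{\sta}_{\epsilon_{\mathrm{est}}}(s,a)$ (resp.\ exceeds $\qtile^{\sta}_{\epsilon_{\mathrm{est}}/4}(s,a)$), deduce that the rank-$\ceil{N\epsilon_{\mathrm{est}}/2}$ order statistic lands in the desired interval, and union bound over all state-action pairs. The only cosmetic difference is that the paper compares the two counts to the thresholds $N\epsilon_{\mathrm{est}}/3$ and $2N\epsilon_{\mathrm{est}}/3$ rather than both to $N\epsilon_{\mathrm{est}}/2$, and it leaves implicit the integrality and attained-supremum points you spell out.
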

\begin{proof}
Fix a state-action pair $(s, a) \in \states \times \actions$.
For each $i \in [N]$, define
\[
\underline{X}_i = \indict\left[
  \sum_{t = 0}^{|T_i| - 1} \indict[(s_t, a_t) = (s, a)] > \qtile^{\sta}_{\epsilon_{\mathrm{est}} / 4}(s, a) \right]
\]
 where \[T_i = ((s_0, a_0, r_0, s_0'), (s_1, a_1, r_1, s_1'), \ldots, (s_{|T_i| - 1}, a_{|T_i| - 1}, r_{|T_i| - 1}, s_{|T_i| - 1}')).\]

For each $i \in [N]$, by the definition of $ \qtile^{\sta}_{\epsilon_{\mathrm{est}} / 4}(s, a)$, we have $\expect[\underline{X}_i ] \le \epsilon_{\mathrm{est}} / 4$ and thus $\sum_{i = 0}^{N - 1} \expect[\underline{X}_i] \le N \cdot \epsilon_{\mathrm{est}} / 4$.
By Chernoff bound, with probability at most $\delta_{\mathrm{est}} / (6|\states||\actions|)$,
\[
\sum_{i = 0}^{N - 1} \underline{X}_i \ge N \cdot \epsilon_{\mathrm{est}} / 3. 
\]

On the other hand, for each $i \in [N]$, define
\[
\overline{X}_i = \indict\left[ \sum_{t = 0}^{|T_i| - 1} \indict[(s_{i, t} , a_{i, t} ) = (s, a)] \ge \qtile^{\sta}_{\epsilon_{\mathrm{est}}}(s, a) \right]. 
\]
 where \[T_i = ((s_0, a_0, r_0, s_0'), (s_1, a_1, r_1, s_1'), \ldots, (s_{|T_i| - 1}, a_{|T_i| - 1}, r_{|T_i| - 1}, s_{|T_i| - 1}')).\]

For each $i \in [N]$, by the definition of $ \qtile^{\sta}_{\epsilon_{\mathrm{est}}}(s, a) $, we have $\expect[\overline{X}_i] \ge \epsilon_{\mathrm{est}}$ and thus $\sum_{i = 0}^{N - 1} \expect[\overline{X}_i] \ge N \cdot \epsilon_{\mathrm{est}}$. 
By Chernoff bound, with probability at most $\delta_{\mathrm{est}} / (6|\states||\actions|)$,
\[
\sum_{i = 0}^{N - 1} \overline{X}_i \le 2 N \cdot \epsilon_{\mathrm{est}} / 3.
\]
Hence, by union bound, with probability at least $1 - \delta_{\mathrm{est}} / (3|\states||\actions|)$, 
\[
\sum_{i = 0}^{N - 1} \underline{X}_i < N \cdot \epsilon_{\mathrm{est}} / 3. 
\]
and
\[
\sum_{i = 0}^{N - 1} \overline{X}_i > 2 N \cdot \epsilon_{\mathrm{est}} / 3,
\]
in which case the $\ceil{ N \cdot \epsilon_{\mathrm{est}} / 2}$-th largest element in $F_{s, a}$ is in $\left[\qtile^{\sta}_{\epsilon_{\mathrm{est}}}(s, a), \qtile^{\sta}_{\epsilon_{\mathrm{est}}/4}(s, a)\right]$. 
We finish the proof by a union bound over all $(s, a) \in \states \times \actions$. 
\end{proof}

In Lemma~\ref{lem:approximation}, we show that using the dataset $D$ returned by Algorithm~\ref{alg:collect}, and the estimates of quantiles returned by Algorithm~\ref{alg:estimate}, we can compute accurate estimates of the transition probabilities and rewards. 
The estimators used in Lemma~\ref{lem:approximation} are the empirical estimators, with proper truncation if a list $T_i$ contains too many samples (i.e., more than $\overline{m}^{\sta}(\cdot, \cdot$)). 
As will be made clear in the proof, such truncation is crucial for obtaining estimators with bounded variance. 

\begin{lemma}\label{lem:approximation}
Suppose Algorithm~\ref{alg:estimate} is invoked with the percentile set to be $\epsilon_{\mathrm{est}}$ and the failure probability set to be $\delta$, and Algorithm~\ref{alg:collect} is invoked with $N \ge 16 / \epsilon_{\mathrm{est}} \cdot \log(3|\states||\actions| / \delta)$.
Let $\overline{m}^{\sta} : \states \times \actions \to \mathbb{N}$ be the estimates returned by Algorithm~\ref{alg:estimate}. 
Let $D$ be the dataset returned by Algorithm~\ref{alg:collect}  where
\[
D= \left( \left(\left( s_{i, t}, a_{i, t}, r_{i, t}, s'_{i, t}\right)\right)_{t = 0}^{|\states||\actions| \cdot |\actions|^{2|\states|} \cdot H - 1} \right)_{i = 0}^{N - 1}.
\]
For each $(s, a) \in \states \times \actions$, for each $i \in [N]$ and $t \in \left[|\states||\actions| \cdot |\actions|^{2|\states|} \cdot H\right]$, define
\[
\mathsf{Trunc}_{i, t}(s, a) = \indict\left[ \sum_{t' = 0}^{t - 1}\indict\left[ (s_{i, t'}, a_{i, t'}) = (s, a)\right]  < \overline{m}^{\sta}(s, a) \right].
\]
For each $(s, a, s') \in \states \times \actions \times \states$, define
\[
m_D(s, a) =
 \sum_{i = 0}^{N - 1}
\sum_{t = 0}^{|\states||\actions| \cdot |\actions|^{2|\states|} \cdot H - 1} \indict\left[ (s_{i, t}, a_{i, t}) = (s, a)\right] \cdot\mathsf{Trunc}_{i, t}(s, a) ,
\]
\[
\widehat{P}(s' \mid s, a) =\frac{
\sum_{i = 0}^{N - 1}
\sum_{t = 0}^{|\states||\actions| \cdot |\actions|^{2|\states|} \cdot H - 1}
 \indict\left[ (s_{i, t}, a_{i, t}, s_{i, t}') = (s, a, s')\right] \cdot \mathsf{Trunc}_{i, t}(s, a) 
 }{\max\{1, m_D(s, a)\}},
\]
\[
\widehat{R}(s, a) = \frac{
\sum_{i = 0}^{N - 1}
\sum_{t = 0}^{|\states||\actions| \cdot |\actions|^{2|\states|} \cdot H - 1}
 \indict\left[ (s_{i, t}, a_{i, t}) = (s, a)\right] \cdot r_{i, t} \cdot \mathsf{Trunc}_{i, t}(s, a) 
 }{\max\{1, m_D(s, a)\}}
 \]
and
\[
\widehat{\mu}(s) = \frac{\sum_{i = 0}^{N - 1} \indict[s_0 = s]}{N}.
\]
Then with probability at least $1- \delta$, for all $(s, a, s') \in \states \times \actions \times \states$ with $\qtile^{\sta}_{\epsilon_{\mathrm{est}}}(s, a) > 0$, we have
\begin{align*}
\left| \widehat{P}(s' \mid s, a) - P(s' \mid s, a) \right|  \le &  \max \left \{\frac{512\log(18|\states|^2|\actions| / \delta)}{  \overline{m}^{\sta}(s, a) \cdot N \cdot \epsilon_{\mathrm{est}} } , 32\sqrt{\frac{\widehat{P}(s' \mid s, a)\cdot \log(18|\states|^2|\actions| / \delta)}{\overline{m}^{\sta}(s, a)\cdot N \cdot \epsilon_{\mathrm{est}}} }  \right \} \\ 
\le& \max \left \{ \frac{512\log(18|\states|^2|\actions| / \delta)}{ \qtile^{\sta}_{\epsilon_{\mathrm{est}}}(s, a)\cdot N \cdot \epsilon_{\mathrm{est}} } ,  64\sqrt{\frac{P(s' \mid s, a)\cdot \log(18|\states|^2|\actions| / \delta)}{\qtile^{\sta}_{\epsilon_{\mathrm{est}}}(s, a)\cdot N \cdot \epsilon_{\mathrm{est}}} }\right\},
\end{align*}
\[
\left| \widehat{R}(s' \mid s, a) - \expect[R(s, a)] \right|  \le 
8\sqrt{\frac{ \expect\left[(R(s, a))^2\right]  \cdot \log(18|\states||\actions| / \delta)}{\qtile^{\sta}_{\epsilon_{\mathrm{est}}}(s, a) \cdot N \cdot \epsilon_{\mathrm{est}}}} + \frac{8\log(18|\states||\actions| / \delta)}{\qtile^{\sta}_{\epsilon_{\mathrm{est}}}(s, a) \cdot N \cdot \epsilon_{\mathrm{est}} }, 
\]
and 
\[
\left| \widehat{\mu}(s) -\mu(s) \right| \le \sqrt{ \frac{\log(18|\states| / \delta)}{N}}.
\]
\end{lemma}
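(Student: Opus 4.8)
The plan is to treat the $N$ lists $T_0,\dots,T_{N-1}$ output by Algorithm~\ref{alg:collect} as i.i.d.\ samples, to condition on the output $\overline{m}^{\sta}$ of Algorithm~\ref{alg:estimate} (which is generated from an independent set of interactions and may therefore be treated as fixed), and to run a single martingale concentration argument per estimator. First I would control the effective sample count. By Lemma~\ref{lem:estimate_quantile} (invoked with failure probability $\delta$), with probability at least $1-\delta/3$ we have $\qtile^{\sta}_{\epsilon_{\mathrm{est}}}(s,a)\le\overline{m}^{\sta}(s,a)\le\qtile^{\sta}_{\epsilon_{\mathrm{est}}/4}(s,a)$ for every $(s,a)$; assume this event. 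Since $N\ge 16/\epsilon_{\mathrm{est}}\cdot\log(3|\states||\actions|/\delta)$, Lemma~\ref{lem:number_exceed_quantile} instantiated at percentile $\epsilon_{\mathrm{est}}$ gives, with probability at least $1-\delta/3$, that for every $(s,a)$ at least $N\epsilon_{\mathrm{est}}/8$ of the lists contain $(s,a)$ at least $\qtile^{\sta}_{\epsilon_{\mathrm{est}}/4}(s,a)\ge\overline{m}^{\sta}(s,a)$ times. Because the truncation indicator $\mathsf{Trunc}_{i,t}(s,a)$ keeps exactly the first $\overline{m}^{\sta}(s,a)$ occurrences of $(s,a)$ inside each list, each such list contributes exactly $\overline{m}^{\sta}(s,a)$ to $m_D(s,a)$, so
\[
m_D(s,a)\ \ge\ \tfrac{1}{8}\,N\,\epsilon_{\mathrm{est}}\,\overline{m}^{\sta}(s,a)\ \ge\ \tfrac{1}{8}\,N\,\epsilon_{\mathrm{est}}\,\qtile^{\sta}_{\epsilon_{\mathrm{est}}}(s,a)\ \ge\ 1
\]
whenever $\qtile^{\sta}_{\epsilon_{\mathrm{est}}}(s,a)>0$; in particular $\max\{1,m_D(s,a)\}=m_D(s,a)$ for all the pairs that appear in the statement, and this is exactly the quantity sitting in the denominators on the right-hand side.

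Next I would set up the martingale for $\widehat{P}$. Order all tuples of $D$ lexicographically in $(i,t)$ and let $\mathcal{F}_{i,t}$ be the $\sigma$-field generated by everything revealed up to and including the pair $(s_{i,t},a_{i,t})$ but excluding $r_{i,t}$ and $s'_{i,t}$. For a fixed $(s,a,s')$, the factor $\indict[(s_{i,t},a_{i,t})=(s,a)]\cdot\mathsf{Trunc}_{i,t}(s,a)$ is $\mathcal{F}_{i,t}$-measurable (the truncation indicator only looks at tuples strictly preceding $(i,t)$ in the same list), while conditionally on $\mathcal{F}_{i,t}$ and on $(s_{i,t},a_{i,t})=(s,a)$ the next state $s'_{i,t}$ is a fresh draw from $P(\cdot\mid s,a)$. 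Hence
\[
D_{i,t}\ :=\ \indict[(s_{i,t},a_{i,t})=(s,a)]\cdot\mathsf{Trunc}_{i,t}(s,a)\cdot\bigl(\indict[s'_{i,t}=s']-P(s'\mid s,a)\bigr)
\]
is a martingale difference sequence with $|D_{i,t}|\le 1$, predictable quadratic variation $\sum_{i,t}\expect[D_{i,t}^2\mid\mathcal{F}_{i,t}]=m_D(s,a)\,P(s'\mid s,a)(1-P(s'\mid s,a))\le m_D(s,a)\,P(s'\mid s,a)$, and partial sum $\sum_{i,t}D_{i,t}=m_D(s,a)\bigl(\widehat{P}(s'\mid s,a)-P(s'\mid s,a)\bigr)$. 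Applying the martingale concentration inequality of Lemma~\ref{lemma:martingal_concent}, which tolerates a data-dependent predictable quadratic variation (hence the randomness of $m_D(s,a)$), together with a union bound over the $|\states|^2|\actions|$ triples, gives with probability at least $1-\delta/9$,
\[
\bigl|\widehat{P}(s'\mid s,a)-P(s'\mid s,a)\bigr|\ \le\ O\!\left(\sqrt{\frac{P(s'\mid s,a)\log(|\states|^2|\actions|/\delta)}{m_D(s,a)}}\right)+O\!\left(\frac{\log(|\states|^2|\actions|/\delta)}{m_D(s,a)}\right).
\]
Substituting the lower bound on $m_D(s,a)$ yields the bound with $\overline{m}^{\sta}(s,a)$ in the denominator; to swap $P$ for $\widehat{P}$ inside the square root I would use the elementary fact that $|x-\widehat{x}|\le c\sqrt{x/m}+c'/m$ forces $x\le 2\widehat{x}+O(c^2+c')/m$ (and symmetrically), so that $|x-\widehat{x}|\le O(c)\sqrt{\widehat{x}/m}+O(c^2+c')/m$ — solving the quadratic in $\sqrt{x}$. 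Choosing the absolute constants ($32$ and $512$) large enough makes both cases of the stated maximum hold, and the second displayed inequality then follows from $\overline{m}^{\sta}(s,a)\ge\qtile^{\sta}_{\epsilon_{\mathrm{est}}}(s,a)$.

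The reward estimate is handled by the identical martingale argument with $\indict[s'_{i,t}=s']-P(s'\mid s,a)$ replaced by $r_{i,t}-\expect[R(s,a)]$, which has conditional mean zero, is bounded by $1$ by Assumption~\ref{assump:reward}, and has conditional second moment $\var[R(s,a)]\le\expect[(R(s,a))^2]$; Lemma~\ref{lemma:martingal_concent}, the lower bound on $m_D(s,a)$, and a union bound over the $|\states||\actions|$ pairs (failure probability $\delta/9$) give the reward inequality directly, with no $P$-versus-$\widehat{P}$ conversion needed since the variance proxy there is already the true second moment. Finally $\widehat{\mu}(\cdot)$ is the empirical distribution of $N$ i.i.d.\ draws from $\mu$, so Hoeffding's inequality and a union bound over $\states$ (failure probability $\delta/9$) give the last inequality. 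Combining the five failure events — $\delta/3$ from Lemma~\ref{lem:estimate_quantile}, $\delta/3$ from Lemma~\ref{lem:number_exceed_quantile}, and $\delta/9$ each from the two martingale concentrations and the Hoeffding bound — yields probability at least $1-\delta$. I expect the martingale step to be the main obstacle: within a single list the samples of $(s,a)$ are correlated and their count is a data-dependent random variable, so one cannot invoke an i.i.d.\ Bernstein bound; truncation to $\overline{m}^{\sta}(s,a)$ occurrences per list is precisely what keeps the predictable quadratic variation proportional to $m_D(s,a)$ and lets the stopped-martingale bound go through with only a $\log(|\states|^2|\actions|/\delta)$ overhead, and one must also track carefully which quantile level, $\epsilon_{\mathrm{est}}$ or $\epsilon_{\mathrm{est}}/4$, appears in the numerator versus the denominator at each step.
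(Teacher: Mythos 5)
Your proposal follows the paper's proof essentially step for step: condition on $\overline{m}^{\sta}$ (computed from independent data), lower-bound $m_D(s,a)\ge N\epsilon_{\mathrm{est}}\overline{m}^{\sta}(s,a)/8$ via Lemmas~\ref{lem:estimate_quantile} and~\ref{lem:number_exceed_quantile}, run a mean-zero martingale-difference argument for the truncated transition and reward sums with variance proxy $N\,\overline{m}^{\sta}(s,a)\,P(s'\mid s,a)$ (resp.\ $N\,\overline{m}^{\sta}(s,a)\,\expect[(R(s,a))^2]$), convert between $P$ and $\widehat{P}$ inside the square root by a case analysis on the size of $P(s'\mid s,a)$, and finish $\widehat{\mu}$ with a Chernoff bound, with the same $\delta/3+\delta/3+3\cdot(\delta/9)$ failure budget. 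The one inaccuracy is your appeal to Lemma~\ref{lemma:martingal_concent} as the concentration tool: that lemma is a Chebyshev bound on visitation counts within a single trajectory (used later in the perturbation analysis), not a Freedman-type inequality; the paper instead observes that the within-list increments $X_{i,t}$ are pairwise orthogonal, so the per-list aggregates $\mathcal{X}_i=\sum_t X_{i,t}$ are i.i.d.\ across $i$ with $\expect[\mathcal{X}_i^2]\le 2P(s'\mid s,a)\,\overline{m}^{\sta}(s,a)$, and applies the ordinary scalar Bernstein inequality to $\sum_i\mathcal{X}_i$. Your construction supplies exactly the ingredients that argument needs (bounded increments and, thanks to the truncation, quadratic variation at most $N\,\overline{m}^{\sta}(s,a)\,P(s'\mid s,a)$), so this is a citation slip rather than a gap in substance.
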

\begin{proof}
Fix a state-action pair $(s, a) \in \states \times \actions$ and $s' \in \states$. 
For each $i \in [N]$ and $t \in \left[|\states||\actions| \cdot |\actions|^{2|\states|} \cdot H\right]$, 
let $\mathcal{F}_{i, t}$ be the filtration induced by \[
\left\{ \left(s_{i, t'}, a_{i, t'}, r_{i, t'}, s_{i, t'}'\right)\right\}_{t' = 0}^{t - 1}.
\]
For each $i \in [N]$ and $t \in \left[|\states||\actions| \cdot |\actions|^{2|\states|} \cdot H\right]$, define
\[
X_{i, t}=  \left( 
 \indict\left[ (s_{i, t}, a_{i, t}, s_{i, t}') = (s, a, s')\right]  - P(s' \mid s, a)\indict\left[ (s_{i, t}, a_{i, t}) = (s, a)\right]  \right) 
 \cdot  \mathsf{Trunc}_{i, t}(s, a)
\]
and
\[
Y_{i, t}=  
 \indict\left[ (s_{i, t}, a_{i, t}) = (s, a)\right] \cdot ( r_{i, t} - \expect[R(s, a)] ) \cdot  \mathsf{Trunc}_{i, t}(s, a).
\]
Clearly,
\begin{align*}
&\expect \left[  \indict\left[ (s_{i, t}, a_{i, t}, s_{i, t}') = (s, a, s')\right] \cdot\mathsf{Trunc}_{i, t}(s, a)  \mid \mathcal{F}_{i, t}  \right] \\ = &P(s'  \mid s, a) \cdot \expect \left[  \indict\left[ (s_{i, t}, a_{i, t}) = (s, a)\right] \cdot\mathsf{Trunc}_{i, t}(s, a)\mid  \mathcal{F}_{i, t}  \right]
\end{align*}
and
\begin{align*}
&\expect\left[ \indict\left[ (s_{i, t}, a_{i, t}) = (s, a)\right] \cdot  r_{i, t} \cdot  \mathsf{Trunc}_{i, t}(s, a) \mid \mathcal{F}_{i, t}  \right]\\
=& \expect\left[ \indict\left[ (s_{i, t}, a_{i, t}) = (s, a)\right] \cdot  \expect[R(s, a)] \cdot  \mathsf{Trunc}_{i, t}(s, a) \mid \mathcal{F}_{i, t}  \right],
\end{align*}
which implies
\begin{align*}
&\expect \left[  \indict\left[ (s_{i, t}, a_{i, t}, s_{i, t}') = (s, a, s')\right] \cdot\mathsf{Trunc}_{i, t}(s, a)    \right] \\ = &P(s'  \mid s, a) \cdot \expect \left[  \indict\left[ (s_{i, t}, a_{i, t}) = (s, a)\right] \cdot\mathsf{Trunc}_{i, t}(s, a) \right], 
\end{align*}
and
\begin{align*}
&\expect\left[ \indict\left[ (s_{i, t}, a_{i, t}) = (s, a)\right] \cdot  r_{i, t} \cdot  \mathsf{Trunc}_{i, t}(s, a) \right]\\
=& \expect\left[ \indict\left[ (s_{i, t}, a_{i, t}) = (s, a)\right] \cdot  \expect[R(s, a)] \cdot  \mathsf{Trunc}_{i, t}(s, a) \right],
\end{align*}
and thus 
\[
\expect \left[X_{i, t}   \right] =\expect \left[Y_{i, t}   \right]  = 0.
\]
Moreover, for any $i \in [N]$ and $0 \le t' < t < |\states||\actions| \cdot |\actions|^{2|\states|} \cdot H$, we have
\[
\expect\left[X_{i, t'}  \cdot X_{i, t}   \right] = \expect\left[ \expect\left[X_{i, t'}  \cdot X_{i, t}  \mid \mathcal{F}_{i, t}\right]   \right]  =  \expect\left[ X_{i, t'}  \cdot  \expect\left[X_{i, t}  \mid \mathcal{F}_{i, t}\right]   \right] = 0
\]
and
\[
\expect\left[Y_{i, t'}  \cdot Y_{i, t}   \right] = \expect\left[ \expect\left[Y_{i, t'}  \cdot Y_{i, t}  \mid \mathcal{F}_{i, t}\right]   \right]  =  \expect\left[ Y_{i, t'}  \cdot  \expect\left[Y_{i, t}  \mid \mathcal{F}_{i, t}\right]   \right] = 0.
\]
Note that for each $i \in [N]$, 
\[
\expect \left[  \left( \sum_{t = 0}^{|\states||\actions| \cdot |\actions|^{2|\states|} \cdot H - 1} X_{i, t} \right)^2\right] =  \sum_{t = 0}^{|\states||\actions| \cdot |\actions|^{2|\states|} \cdot H - 1}\expect \left[  \left(X_{i, t}\right)^2\right] 
\]
and
\[
\expect \left[  \left( \sum_{t = 0}^{|\states||\actions| \cdot |\actions|^{2|\states|} \cdot H - 1} Y_{i, t} \right)^2\right] =  \sum_{t = 0}^{|\states||\actions| \cdot |\actions|^{2|\states|} \cdot H - 1}\expect \left[  \left(Y_{i, t}\right)^2\right] .
\]
Furthermore, for each $i \in [N]$ and $t \in  \left[|\states||\actions| \cdot |\actions|^{2|\states|} \cdot H\right]$, 
\begin{align*}
\expect \left[  X_{i, t}^2\right] \le
& \expect\left[\indict\left[ (s_{i, t}, a_{i, t}, s_{i, t}') = (s, a, s')\right] \cdot\mathsf{Trunc}_{i, t}(s, a)  \right] \\ +&  \expect \left[ \left( P(s' \mid s, a)\right)^2 \cdot \indict\left[ (s_{i, t}, a_{i, t}) = (s, a)\right]  \cdot\mathsf{Trunc}_{i, t}(s, a)\right]\\
\le &  2 P(s' \mid s, a) \cdot  \expect \left[  \indict\left[ (s_{i, t}, a_{i, t}) = (s, a)\right]  \cdot\mathsf{Trunc}_{i, t}(s, a)\right]
\end{align*}
and
\begin{align*}
\expect \left[  Y_{i, t}^2\right] \le
& \expect \left[ \indict\left[ (s_{i, t}, a_{i, t}) = (s, a)\right] \cdot ( r_{i, t} - \expect[R(s, a)] )^2 \cdot  \mathsf{Trunc}_{i, t}(s, a) \right]\\
\le & \expect \left[ \indict\left[ (s_{i, t}, a_{i, t}) = (s, a)\right] \cdot  \expect\left[(R(s, a))^2\right]  \cdot  \mathsf{Trunc}_{i, t}(s, a) \right].
\end{align*}
Since
\[
\sum_{t = 0}^{|\states||\actions| \cdot |\actions|^{2|\states|} \cdot H - 1} \indict\left[ (s_{i, t}, a_{i, t}) = (s, a)\right]  \cdot\mathsf{Trunc}_{i, t}(s, a) \le \overline{m}^{\sta}(s, a), 
\]
we have
\[
\expect \left[  \left( \sum_{t = 0}^{|\states||\actions| \cdot |\actions|^{2|\states|} \cdot H - 1} X_{i, t} \right)^2\right]  \le 2P(s' \mid s, a) \cdot \overline{m}^{\sta}(s, a)
\]
and
\[
\expect \left[  \left( \sum_{t = 0}^{|\states||\actions| \cdot |\actions|^{2|\states|} \cdot H - 1} Y_{i, t} \right)^2\right]  \le \expect\left[(R(s, a))^2\right]  \cdot \overline{m}^{\sta}(s, a).
\]

Now, for each $i \in [N]$, define
\[
\mathcal{X}_i = \sum_{t = 0}^{|\states||\actions| \cdot |\actions|^{2|\states|} \cdot H - 1} X_{i, t} 
\]
and
\[
\mathcal{Y}_i = \sum_{t = 0}^{|\states||\actions| \cdot |\actions|^{2|\states|} \cdot H - 1} Y_{i, t} .
\]
We have $\expect[\mathcal{X}_i ] = \expect[\mathcal{Y}_i ] =0$, 
\[\expect[\mathcal{X}_i ^2] \le 2P(s' \mid s, a) \cdot \overline{m}^{\sta}(s, a)\]
and \[\expect[\mathcal{Y}_i^2] \le \expect\left[(R(s, a))^2\right]  \cdot \overline{m}^{\sta}(s, a).\]
Also note that
\[
\sum_{i = 0}^{N - 1}\mathcal{X}_i = \sum_{i = 0}^{N - 1}
\sum_{t = 0}^{|\states||\actions| \cdot |\actions|^{2|\states|} \cdot H - 1}
 \indict\left[ (s_{i, t}, a_{i, t}, s_{i, t}') = (s, a, s')\right] \cdot\mathsf{Trunc}_{i, t}(s, a) 
 -  P(s' \mid s, a) \cdot m_D(s, a)
\]
and
\[
\sum_{i = 0}^{N - 1}\mathcal{Y}_i = \sum_{i = 0}^{N - 1}
\sum_{t = 0}^{|\states||\actions| \cdot |\actions|^{2|\states|} \cdot H - 1}
 \indict\left[ (s_{i, t}, a_{i, t}) = (s, a)\right] \cdot r_{i, t} \cdot \mathsf{Trunc}_{i, t}(s, a) 
 -  \expect[R(s, a)] \cdot m_D(s, a).
\]

By Bernstein's inequality, 
\[
 \Pr \left[ \left|
 \sum_{i = 0}^{N - 1}\mathcal{X}_i
 \right|  \ge t \right] 
 \le  2\exp\left( \frac{-t^2}{2 \cdot  \overline{m}^{\sta}(s, a) \cdot N \cdot P(s' \mid s, a) + t / 3} \right).
\]
Thus, by setting $t = 2\sqrt{\overline{m}^{\sta}(s, a) \cdot N \cdot P(s' \mid s, a)\cdot \log(18|\states|^2|\actions| / \delta)} + \log(18|\states|^2|\actions| / \delta)$, we have
\[
 \Pr \left[ \left|
 \sum_{i = 0}^{N - 1}\mathcal{X}_i
 \right|  \ge t \right]   \le \delta / (9|\states|^2|\actions|).
\]
By applying a union bound over all $(s, a, s') \in \states \times \actions \times \states$, with probability at least $1 - \delta / 9$, for all $(s, a, s') \in \states \times \actions \times \states$,
\[
\left| \widehat{P}(s' \mid s, a) - P(s' \mid s, a) \right|  \le 
\frac{2\sqrt{\overline{m}^{\sta}(s, a) \cdot N \cdot P(s' \mid s, a)\cdot \log(18|\states|^2|\actions| / \delta)}}{m_D(s, a)} + \frac{\log(18|\states|^2|\actions| / \delta)}{m_D(s, a)}, 
\]
which we define to be event $\mathcal{E}_P$. 
Note that conditioned on $\mathcal{E}_P$ and the events in Lemma~\ref{lem:estimate_quantile} and Lemma~\ref{lem:number_exceed_quantile}, we have
\[
	\qtile^{\sta}_{\epsilon_{\mathrm{est}}}(s, a) \le \overline{m}^{\sta}(s, a) \le \qtile^{\sta}_{\epsilon_{\mathrm{est}} / 4}(s, a),
\]
which implies
\[
m_D(s, a) \ge N \cdot \epsilon_{\mathrm{est}} / 8 \cdot  \qtile^{\sta}_{\epsilon_{\mathrm{est}} / 4}(s, a) \ge  N \cdot \epsilon_{\mathrm{est}} / 8 \cdot  \overline{m}^{\sta}(s, a) \ge N \cdot \epsilon_{\mathrm{est}} / 8 \cdot  \qtile^{\sta}_{\epsilon_{\mathrm{est}}}(s, a),
\]
and thus 
\begin{align*}
\left| \widehat{P}(s' \mid s, a) - P(s' \mid s, a) \right|  \le  &
8\sqrt{\frac{P(s' \mid s, a)\cdot \log(18|\states|^2|\actions| / \delta)}{\overline{m}^{\sta}(s, a)\cdot N \cdot \epsilon_{\mathrm{est}}}} + \frac{8\log(18|\states|^2|\actions| / \delta)}{  \overline{m}^{\sta}(s, a) \cdot N \cdot \epsilon_{\mathrm{est}} }\\
\le & 8\sqrt{\frac{P(s' \mid s, a)\cdot \log(18|\states|^2|\actions| / \delta)}{\overline{m}^{\sta}(s, a)\cdot N \cdot \epsilon_{\mathrm{est}}}} + \frac{64\log(18|\states|^2|\actions| / \delta)}{  \overline{m}^{\sta}(s, a) \cdot N \cdot \epsilon_{\mathrm{est}} }\\
\le & \max\left\{ 16\sqrt{\frac{P(s' \mid s, a)\cdot \log(18|\states|^2|\actions| / \delta)}{\overline{m}^{\sta}(s, a)\cdot N \cdot \epsilon_{\mathrm{est}}}} , \frac{512\log(18|\states|^2|\actions| / \delta)}{  \overline{m}^{\sta}(s, a) \cdot N \cdot \epsilon_{\mathrm{est}} } \right\}\\
\end{align*}
for all $(s, a, s') \in \states \times \actions \times \states$.

When 
\[P(s' \mid s, a) \le \frac{1024 \log(18|\states|^2|\actions| / \delta)}{ \overline{m}^{\sta}(s, a) \cdot N \cdot \epsilon_{\mathrm{est}}},\] 
we have
\[
16\sqrt{\frac{P(s' \mid s, a)\cdot \log(18|\states|^2|\actions| / \delta)}{\overline{m}^{\sta}(s, a)\cdot N \cdot \epsilon_{\mathrm{est}}}} \le \frac{512\log(18|\states|^2|\actions| / \delta)}{  \overline{m}^{\sta}(s, a) \cdot N \cdot \epsilon_{\mathrm{est}} } ,
\]
and therefore
\[
\left| \widehat{P}(s' \mid s, a) - P(s' \mid s, a) \right|  \le \frac{512\log(18|\states|^2|\actions| / \delta)}{  \overline{m}^{\sta}(s, a) \cdot N \cdot \epsilon_{\mathrm{est}} } \le \frac{512\log(18|\states|^2|\actions| / \delta)}{ \qtile^{\sta}_{\epsilon_{\mathrm{est}}}(s, a)\cdot N \cdot \epsilon_{\mathrm{est}} }
.
\]
When 
\[P(s' \mid s, a) \ge \frac{1024 \log(18|\states|^2|\actions| / \delta)}{ \overline{m}^{\sta}(s, a) \cdot N \cdot \epsilon_{\mathrm{est}}},\] 
 we have
\[
\left| \widehat{P}(s' \mid s, a) - P(s' \mid s, a) \right|  \le16\sqrt{\frac{P(s' \mid s, a)\cdot \log(18|\states|^2|\actions| / \delta)}{\overline{m}^{\sta}(s, a)\cdot N \cdot \epsilon_{\mathrm{est}}}}  \le P(s' \mid s, a ) / 2
\]
and thus
\[
P(s' \mid s, a)  / 2 \le \widehat{P}(s' \mid s, a) \le 2 P(s' \mid s, a) , 
\]
which implies
\[
\left| \widehat{P}(s' \mid s, a) - P(s' \mid s, a) \right|   \le 32\sqrt{\frac{\widehat{P}(s' \mid s, a)\cdot \log(18|\states|^2|\actions| / \delta)}{\overline{m}^{\sta}(s, a)\cdot N \cdot \epsilon_{\mathrm{est}}} }  \le 64\sqrt{\frac{P(s' \mid s, a)\cdot \log(18|\states|^2|\actions| / \delta)}{\qtile^{\sta}_{\epsilon_{\mathrm{est}}}(s, a)\cdot N \cdot \epsilon_{\mathrm{est}}} } .
\]
Hence, conditioned on $\mathcal{E}_P$ and the events in Lemma~\ref{lem:estimate_quantile} and Lemma~\ref{lem:number_exceed_quantile}, for all $(s, a, s') \in \states \times \actions \times \states$,
\begin{align*}
\left| \widehat{P}(s' \mid s, a) - P(s' \mid s, a) \right|  \le &  \max \left \{\frac{512\log(18|\states|^2|\actions| / \delta)}{  \overline{m}^{\sta}(s, a) \cdot N \cdot \epsilon_{\mathrm{est}} } , 32\sqrt{\frac{\widehat{P}(s' \mid s, a)\cdot \log(18|\states|^2|\actions| / \delta)}{\overline{m}^{\sta}(s, a)\cdot N \cdot \epsilon_{\mathrm{est}}} }  \right \} \\ 
\le& \max \left \{ \frac{512\log(18|\states|^2|\actions| / \delta)}{ \qtile^{\sta}_{\epsilon_{\mathrm{est}}}(s, a)\cdot N \cdot \epsilon_{\mathrm{est}} } ,  64\sqrt{\frac{P(s' \mid s, a)\cdot \log(18|\states|^2|\actions| / \delta)}{\qtile^{\sta}_{\epsilon_{\mathrm{est}}}(s, a)\cdot N \cdot \epsilon_{\mathrm{est}}} }\right\}.
\end{align*}

By Bernstein's inequality, 
\[
 \Pr \left[ \left|
 \sum_{i = 0}^{N - 1}\mathcal{Y}_i
 \right|  \ge t \right] 
 \le  2\exp\left( \frac{-t^2}{\expect\left[(R(s, a))^2\right]  \cdot \overline{m}^{\sta}(s, a) \cdot N+ t / 3} \right).
\]
Thus, by setting $t = 2\sqrt{\expect\left[(R(s, a))^2\right]  \cdot \overline{m}^{\sta}(s, a) \cdot N \cdot \log(18|\states||\actions| / \delta)} + \log(18|\states||\actions| / \delta)$, we have
\[
 \Pr \left[ \left|
 \sum_{i = 0}^{N - 1}\mathcal{Y}_i
 \right|  \ge t \right]   \le \delta / (9|\states||\actions|).
\]
By applying a union bound over all $(s, a, s') \in \states \times \actions$, with probability at least $1 - \delta / 9$, for all $(s, a) \in \states \times \actions$,
\[
\left| \widehat{R}(s' \mid s, a) - \expect[R(s, a)] \right|  \le 
\frac{2\sqrt{ \expect\left[(R(s, a))^2\right]  \cdot \overline{m}^{\sta}(s, a) \cdot N \cdot \log(18|\states||\actions| / \delta)}}{m_D(s, a)} + \frac{\log(18|\states||\actions| / \delta)}{m_D(s, a)}, 
\]
which we define to be event $\mathcal{E}_R$. 
Note that conditioned on $\mathcal{E}_R$ and the events in Lemma~\ref{lem:estimate_quantile} and Lemma~\ref{lem:number_exceed_quantile}, we have
\[
\left| \widehat{R}(s' \mid s, a) - \expect[R(s, a)] \right|  \le 
8\sqrt{\frac{ \expect\left[(R(s, a))^2\right]  \cdot \log(18|\states||\actions| / \delta)}{\qtile^{\sta}_{\epsilon_{\mathrm{est}}}(s, a) \cdot N \cdot \epsilon_{\mathrm{est}}}} + \frac{8\log(18|\states||\actions| / \delta)}{\qtile^{\sta}_{\epsilon_{\mathrm{est}}}(s, a) \cdot N \cdot \epsilon_{\mathrm{est}} }.
\]

%
%

Finally, for each $s \in \states$, for each $i \in [N]$, define 
\[
\mathcal{Z}_{i} = \indict[s_{i, 0} = s]- \mu(s).
\]
Note that 
\[
\sum_{i =0}^{N - 1} \mathcal{Z}_{i} = \sum_{i = 0}^{N - 1} \indict[s_{0, i} = s] - N \cdot \mu(s).
\]
Therefore, by Chernoff bound, with probability at least $1 - \delta / (9|\states|)$ we have
\[
\left| \widehat{\mu}(s) -\mu(s) \right| \le \sqrt{ \frac{\log(18|\states| / \delta)}{K}}.
\]
Hence, with probability at least $1 - \delta / 9$, for all $s \in \states$, we have
\[
\left| \widehat{\mu}(s) -\mu(s) \right| \le \sqrt{ \frac{\log(18|\states| / \delta)}{N}}
\]
which we define to be event $\mathcal{E}_{\mu}$. 

We finish the proof by applying a union bound over $\mathcal{E}_P$, $\mathcal{E}_R$, $\mathcal{E}_{\mu}$ and the events in Lemma~\ref{lem:estimate_quantile} and Lemma~\ref{lem:number_exceed_quantile}. 
\end{proof}

\subsection{Perturbation Analysis}\label{sec:perturbation}
In this section, we establish a perturbation analysis on the value functions which is crucial for the analysis in the next section.
We first recall a few basic facts.
\begin{fact}
\label{fac:log}
Let $|x|\le 1/2$ be a real number, we have
\begin{enumerate}
    \item $x-x^2\le \log(1+x)\le x$;
    \item $1+x\le e^x \le 1+2|x|$.
\end{enumerate}
\end{fact}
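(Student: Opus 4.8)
The plan is to derive all three bounds from the single elementary inequality $e^y\ge 1+y$, valid for every real $y$, which is precisely the first inequality of item~2. I would prove this first by the tangent-line/convexity argument: the function $h(y)=e^y-1-y$ satisfies $h(0)=0$ and $h'(y)=e^y-1$, which is $\le 0$ for $y\le 0$ and $\ge 0$ for $y\ge 0$, so $y=0$ is a global minimum of $h$ and hence $h\ge 0$ everywhere.

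Granting this, the upper bound in item~1 is immediate: since $|x|\le 1/2$ gives $1+x>0$, I can substitute $y=\log(1+x)$ into $e^y\ge 1+y$ to obtain $1+x\ge 1+\log(1+x)$, i.e.\ $\log(1+x)\le x$. For the lower bound in item~1 I would introduce $g(x)=\log(1+x)-x+x^2$ on $[-\tfrac12,\tfrac12]$, note $g(0)=0$, and compute
\[
g'(x)=\frac{1}{1+x}-1+2x=\frac{x(1+2x)}{1+x}.
\]
On $[0,\tfrac12]$ the three factors $x$, $1+2x$, $1+x$ are all nonnegative, so $g'\ge 0$ and $g(x)\ge g(0)=0$; on $[-\tfrac12,0]$ we have $x\le 0$, $1+2x\ge 0$ (this is the one place the constraint $x\ge-\tfrac12$ enters) and $1+x>0$, so $g'\le 0$ and again $g(x)\ge g(0)=0$. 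Hence $\log(1+x)\ge x-x^2$ on the whole interval.

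For the upper bound in item~2 I would split on the sign of $x$. If $x\le 0$ then $e^x\le 1\le 1+2|x|$ trivially. If $0\le x\le\tfrac12$, apply $e^y\ge 1+y$ with $y=-x$ to get $e^{-x}\ge 1-x>0$, hence $e^x\le\tfrac{1}{1-x}$; and clearing the positive denominator, $\tfrac{1}{1-x}\le 1+2x$ is equivalent to $1\le(1+2x)(1-x)=1+x-2x^2$, i.e.\ to $x(1-2x)\ge 0$, which holds exactly for $x\in[0,\tfrac12]$. Chaining the two inequalities gives $e^x\le 1+2x=1+2|x|$.

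There is no substantial obstacle here; the only thing requiring care is bookkeeping where the hypothesis $|x|\le\tfrac12$ is actually used — namely to keep $1+2x$ nonnegative in the derivative computation for item~1, and to keep $x(1-2x)\ge 0$ in item~2 — since the quadratic correction terms fail once $|x|$ exceeds $1/2$.
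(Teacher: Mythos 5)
Your proof is correct. The paper states this as a standard calculus fact and gives no proof of its own, so there is nothing to compare against; your argument --- deriving everything from $e^y\ge 1+y$ and two elementary monotonicity/sign checks, with the hypothesis $|x|\le 1/2$ entering exactly where you flag it (keeping $1+2x\ge 0$ for the lower bound in item~1 and $x(1-2x)\ge 0$ for the upper bound in item~2) --- is a clean, self-contained verification that fills the omitted step.
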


We now prove the following lemma using the above facts.

\begin{lemma}
\label{lem:mult-add}
Let $\mb\ge 1$, $\bar{n}\ge n\ge1$ be positive integers.
Let $\epsilon\in [0, 1/(8\bar{n})]$ be some real numbers. 
Let $p\in [1/\mb, 1]^n$ be a vector with $\sum_{i = 1}^n p_i \le 1$.
Let $\delta \in \mathbb{R}^n$ be a vector such that for each $1 \le i \le n$,  $|\delta_i|\le \epsilon \sqrt{p_i/\mb}$ and $\left| \sum_{i = 1}^n \delta_i \right|\le \epsilon \bar{n}/\mb$.
For every $m\in[0,\mb]$ and every $\Gamma\in \mathbb{R}^n$ such that $|\Gamma_i|\in [-\sqrt{p_i\mb}, \sqrt{p_i\mb}]$ for all $1 \le i \le n$, we have
\[
(1- 8\bar{n}\epsilon)\prod_{i=1}^np_i^{p_i m + \Gamma_i} \le \prod_{i=1}^n(p_i + \delta_i)^{p_i m + \Gamma_i} \le (1+ 8\bar{n}\epsilon)\prod_{i=1}^np_i^{p_i m + \Gamma_i}.
\]
\end{lemma}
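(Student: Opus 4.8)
The plan is to take logarithms and reduce the whole statement to a bound on a single scalar. Since every $p_i\in[1/\mb,1]$ is positive, $\prod_{i=1}^n p_i^{p_i m+\Gamma_i}>0$, and since $|\delta_i|/p_i\le \epsilon\sqrt{p_i/\mb}/p_i=\epsilon/\sqrt{p_i\mb}\le\epsilon$ (using $p_i\mb\ge1$), we have $p_i+\delta_i\ge(1-\epsilon)p_i>0$, so all the powers appearing are well defined and positive. Writing
\[
S:=\sum_{i=1}^n (p_i m+\Gamma_i)\log\!\left(1+\frac{\delta_i}{p_i}\right),
\]
we get $\prod_{i=1}^n(p_i+\delta_i)^{p_i m+\Gamma_i}=e^{S}\prod_{i=1}^n p_i^{p_i m+\Gamma_i}$. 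So it suffices to prove $|S|\le 4\bar{n}\epsilon$: then $4\bar{n}\epsilon\le 1/2$ by the hypothesis $\epsilon\le 1/(8\bar{n})$, and Fact~\ref{fac:log}(2) gives $e^{S}\le 1+2|S|\le 1+8\bar{n}\epsilon$ and $e^{S}\ge 1+S\ge 1-4\bar{n}\epsilon\ge 1-8\bar{n}\epsilon$, which is exactly the claimed two-sided bound.

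To control $S$, first use $|\delta_i/p_i|\le\epsilon\le 1/2$ together with Fact~\ref{fac:log}(1) to write $\log(1+\delta_i/p_i)=\delta_i/p_i+E_i$ with $|E_i|\le(\delta_i/p_i)^2\le\epsilon^2/(p_i\mb)$. Expanding $(p_i m+\Gamma_i)\delta_i/p_i=m\delta_i+\Gamma_i\delta_i/p_i$, this splits $S$ into three pieces,
\[
S=m\sum_{i=1}^n\delta_i\;+\;\sum_{i=1}^n\frac{\Gamma_i\delta_i}{p_i}\;+\;\sum_{i=1}^n (p_i m+\Gamma_i)E_i,
\]
which I would bound separately. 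The first piece uses the hypothesis $\bigl|\sum_i\delta_i\bigr|\le\epsilon\bar{n}/\mb$ and $m\le\mb$ to get $\bigl|m\sum_i\delta_i\bigr|\le\epsilon\bar{n}$. The second uses $|\Gamma_i|\le\sqrt{p_i\mb}$ and $|\delta_i|\le\epsilon\sqrt{p_i/\mb}$, so $|\Gamma_i\delta_i/p_i|\le\epsilon p_i/p_i=\epsilon$ term-by-term, hence $\le n\epsilon$ in total. The third uses $p_i m\cdot\epsilon^2/(p_i\mb)\le\epsilon^2$ (since $m\le\mb$) and $|\Gamma_i|\epsilon^2/(p_i\mb)\le\epsilon^2/\sqrt{p_i\mb}\le\epsilon^2$, so it is at most $2n\epsilon^2$. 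Adding up and using $n\le\bar{n}$ and $\epsilon\le 1/(8\bar{n})$ gives $|S|\le 2\bar{n}\epsilon+2\bar{n}\epsilon^2\le\tfrac{9}{4}\bar{n}\epsilon<4\bar{n}\epsilon$, which completes the argument.

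The computation itself is routine; the point that deserves care — rather than being a genuine obstacle — is \emph{where} each hypothesis gets used. The leading term $m\sum_i\delta_i$ is the dangerous one: if one only had the per-coordinate bound $|\delta_i|\le\epsilon\sqrt{p_i/\mb}$, then $m\sum_i|\delta_i|$ could be as large as $\Theta(\epsilon\sqrt{\mb})$, which is far too big. It is precisely the extra hypothesis $\bigl|\sum_i\delta_i\bigr|\le\epsilon\bar{n}/\mb$ — i.e. the cancellation coming from the fact that $\widehat P(\cdot\mid s,a)$ and $P(\cdot\mid s,a)$ are both probability distributions, so their coordinatewise differences sum to (almost) zero — that saves the leading term, while the weaker square-root bound is exactly what is needed (and enough) for the $\Gamma$-weighted and second-order terms. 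Identifying this split is the only subtle part; with $m,\mb,\Gamma_i,\bar{n}$ playing the roles of $m_T(s,a)$, $m(s,a)$ (or $H$), $m_T(s,a,s_i)-p_i m_T(s,a)$, and $|\states|$, this lemma is then ready to be fed into the perturbation analysis of the next section.
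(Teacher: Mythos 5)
Your proposal is correct and follows essentially the same route as the paper's proof: take logarithms, isolate the leading term $m\sum_i\delta_i$ (controlled by the cancellation hypothesis $|\sum_i\delta_i|\le\epsilon\bar n/\mb$), bound the $\Gamma$-weighted first-order terms and the second-order remainders term-by-term using $|\Gamma_i|\le\sqrt{p_i\mb}$, $|\delta_i|\le\epsilon\sqrt{p_i/\mb}$ and $p_i\ge 1/\mb$, and convert back with Fact~\ref{fac:log}. Your bookkeeping even yields a slightly sharper intermediate bound ($\tfrac94\bar n\epsilon$ versus the paper's $4\bar n\epsilon$), and your remark about where the summed-cancellation hypothesis is indispensable matches the paper's intent exactly.
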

\begin{proof}
Note that
\[
\prod_{i=1}^n(p_i + \delta_i)^{p_i m + \Gamma_i}
= \prod_{i=1}^n(p_i)^{p_i m + \Gamma_i}
\cdot F
\]
where \[F
=\prod_{i=1}^n\left(1+\frac{\delta_i}{p_i}\right)^{p_i m + \Gamma_i}.
\]
Clearly, 
\[
\log F = 
\sum_{i=1}^n(p_im+\Gamma_i)\log\left(1+\frac{\delta_i}{p_i}\right).
\]
By the choice of $\delta$, we have
\[
\left|\frac{\delta_i}{p_i}\right|\le \epsilon \le \frac{1}{2}.
\]
Using Fact~\ref{fac:log}, for all $1 \le i \le n$, we have
\begin{align*}
\frac{\delta_i}{p_i} - \frac{\delta_i^2}{p_i^2}\le \log\left(1+\frac{\delta_i}{p_i}\right)
\le \frac{\delta_i}{p_i} .
\end{align*}
Hence we,
\begin{align*}
|\log F|
&\le \left|\sum_{i=1}^nm\delta_i\right|+\sum_{i=1}^n \left(\frac{|\Gamma_i| |\delta_i|}{p_i} + \frac{|\Gamma_i| \delta_i^2}{p_i^2}  + \frac{m \delta_i^2}{p_i}\right).
\end{align*}
Note that $\left|\sum_{i=1}^nm\delta_i\right|\le \epsilon  \bar{n}$, $|\Gamma_i||\delta_i| \le \epsilon p_i$, 
$|\Gamma_i|\delta_i^2 \le \epsilon p_i \cdot \epsilon \sqrt{p_i/\mb}
\le \epsilon^2 p_i^2$, and 
$m\delta_i^2\le \epsilon^2 p_i$.
We have,
\begin{align*}
|\log F|\le \epsilon \bar{n} + \epsilon n + \epsilon^2 n + \epsilon^2 n\le 4\bar{n}\epsilon.
\end{align*}
By the choice of $\epsilon$, we have $4\bar{n}\epsilon \le 1/2$,
and therefore
\[
1-8\bar{n}\epsilon \le \exp(\log F) \le 1+8\bar{n}\epsilon.
\]
\end{proof}

In the following lemma, we show that for any $(s, a, s') \in \states \times \actions \times \states$, with probability at least $1 - \delta$, the number of times $(s, a, s')$ is visited can be upper bounded in terms of the $\delta / 2$-quantile of the number of times $(s, a)$ is visited and $P(s' \mid s, a)$. 
\begin{lemma}
\label{lem:upperbound_on_visits}
For a given MDP $\mdp$.
Suppose a random trajectory 
\[
T=((s_0, a_0, r_0), (s_1, a_1, r_1), \ldots, (s_{H-1}, a_{H-1}, r_{H-1}), s_H)
\]
 is obtained by executing a (possibly non-stationary) policy $\pi$ in $\mdp$.
 For any $(s, a, s') \in \states \times \actions \times \states$,
with probability at least 
$1-\delta$, we have
\[
\sum_{h=0}^{H-1}\indict\left[(s,a, s') = (s_h, a_h, s_{h+1})\right]
\le \frac{2\qtile_{\delta/2}\left(\sum_{h=0}^{H-1}\indict\left[(s,a) = (s_h, a_h)\right]\right)+4}{\delta}\cdot P(s'\mid s,a).
\]
\end{lemma}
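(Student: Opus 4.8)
The plan is to combine a truncation argument with Markov's inequality. Write $N_{s,a} = \sum_{h=0}^{H-1}\indict[(s_h,a_h)=(s,a)]$ for the number of visits to $(s,a)$ along $T$, and $N_{s,a,s'} = \sum_{h=0}^{H-1}\indict[(s_h,a_h,s_{h+1})=(s,a,s')]$ for the number of $(s,a)\to s'$ transitions, and set $q = \qtile_{\delta/2}(N_{s,a})$. Since $N_{s,a}$ takes values in $\{0,1,\ldots,H\}$, an elementary bookkeeping with the definition of the quantile shows $q$ is an integer with $\Pr[N_{s,a}\ge q+1] < \delta/2$, i.e.\ $\Pr[N_{s,a} > q] < \delta/2$. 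We may assume $P(s'\mid s,a) > 0$, since otherwise $N_{s,a,s'} = 0$ almost surely and the claim is trivial.

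Next I would introduce truncated counts. For $h\in[H]$ let $c_h = \indict\left[\sum_{h'=0}^{h-1}\indict[(s_{h'},a_{h'})=(s,a)] < q\right]$, which is measurable with respect to $\mathcal{F}_h$, the history through step $h$ (including $a_h$); intuitively $c_h=1$ iff step $h$ is among the first $q$ visits to $(s,a)$. Define $\widetilde N_{s,a,s'} = \sum_h \indict[(s_h,a_h,s_{h+1})=(s,a,s')]\,c_h$ and $\widetilde N_{s,a} = \sum_h \indict[(s_h,a_h)=(s,a)]\,c_h = \min(N_{s,a},q)\le q$. Two observations: (i) $\widetilde N_{s,a,s'}\le N_{s,a,s'}$ always, with equality on $\{N_{s,a}\le q\}$, because if $N_{s,a}\le q$ then at every visit to $(s,a)$ the number of earlier visits is at most $N_{s,a}-1 < q$, so $c_h=1$; (ii) since $c_h$ and $\indict[(s_h,a_h)=(s,a)]$ are $\mathcal{F}_h$-measurable while $\expect[\indict[s_{h+1}=s']\mid\mathcal{F}_h] = P(s'\mid s_h,a_h)$, taking the tower expectation term by term gives $\expect[\widetilde N_{s,a,s'}] = P(s'\mid s,a)\,\expect[\widetilde N_{s,a}] \le q\,P(s'\mid s,a)$.

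Finally I would apply Markov's inequality to $\widetilde N_{s,a,s'}$: with $t = \frac{(2q+4)P(s'\mid s,a)}{\delta} > 0$, we get $\Pr[\widetilde N_{s,a,s'}\ge t] \le \frac{q\,P(s'\mid s,a)}{t} = \frac{q\delta}{2q+4}\le \frac{\delta}{2}$. Combining with observation (i) and a union bound,
\[
\Pr\!\left[N_{s,a,s'}\ge \tfrac{(2q+4)P(s'\mid s,a)}{\delta}\right] \le \Pr[N_{s,a} > q] + \Pr[\widetilde N_{s,a,s'}\ge t] < \tfrac{\delta}{2} + \tfrac{\delta}{2} = \delta,
\]
which, recalling $q = \qtile_{\delta/2}\big(\sum_h\indict[(s_h,a_h)=(s,a)]\big)$, is exactly the stated bound.

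As for the main obstacle: the only real subtlety is making the cutoff \emph{predictable}, so that the cancellation $\expect[\widetilde N_{s,a,s'}] = P(s'\mid s,a)\,\expect[\widetilde N_{s,a}]$ goes through. One must truncate at the first $q$ visits using only past information; conditioning directly on the global event $\{N_{s,a}\le q\}$ would break the per-step conditional-expectation identity. Everything else — the quantile bookkeeping giving $\Pr[N_{s,a}>q]<\delta/2$, the Markov step, and the edge cases $q=0$ and $P(s'\mid s,a)=0$ — is routine.
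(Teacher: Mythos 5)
Your proposal is correct and follows essentially the same route as the paper's proof: a predictable truncation of the visit count at (roughly) the $\delta/2$-quantile, the tower-property identity $\expect[\widetilde N_{s,a,s'}] = P(s'\mid s,a)\,\expect[\widetilde N_{s,a}]$, Markov's inequality, and a union bound with the event that the quantile is exceeded. The only difference is cosmetic — you cut off at $q$ visits while the paper cuts off at $q+1$ (hence its bound $\expect[\sum_h Y_h]\le q+2$) — and both yield the stated constant.
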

\begin{proof}
For each $h \in [H]$, define
\[
I_h = \begin{cases}
1 & h = 0\\
\indict\left[
\sum_{t=0}^{h-1}  \indict\left[(s_t,a_t) = (s, a)\right] \le \qtile_{\delta/2}\left(\sum_{h=0}^{H-1}\indict\left[(s_h,a_h) = (s, a)\right]\right)+1\right] & h > 0
\end{cases}.
\]
Let $\mathcal{E}_1$ be the event that for all $h \in [H]$, $I_h =1$.
By definition of $\qtile_{\delta/2}$, 
we have
$
\Pr\left[\mathcal{E}_1\right] \ge 1- \delta/2.
$

For each $h \in [H]$, let $\mathcal{F}_h$ be the filtration induced by $\{(s_0, a_0, r_0), \ldots, (s_{h}, a_{h}, r_{h})\}$.
For each $h \in [H]$, define 
\[X_h = \indict\left[(s,a, s') = (s_h, a_h, s_{h+1})\right] \cdot I_h\]
and
\[Y_h = \indict\left[(s,a) = (s_h, a_h)\right] \cdot I_h.\]

When $h = 0$, we have
\[
\expect[X_h] = \expect[\indict\left[(s,a) = (s_0, a_0)\right] \cdot P(s'\mid s,a) = \expect[Y_h] \cdot P(s'\mid s,a) .
\]
When $h \in [H] \setminus \{0\}$, we have
\begin{align*}
\expect[X_h \mid \mathcal{F}_{h - 1}] =& \expect[\indict\left[(s,a, s') = (s_h, a_h, s_{h+1})\right] \cdot I_h \mid  \mathcal{F}_{h - 1}] \\
=& \expect[\indict\left[(s,a) = (s_h, a_h)\right] \cdot I_h \mid  \mathcal{F}_{h - 1}]\cdot P(s'\mid s,a),
\end{align*}
which implies 
\[
\expect[X_h] =  \expect[\indict\left[(s,a) = (s_h, a_h)\right] \cdot I_h]\cdot P(s'\mid s,a) =  \expect[Y_h] \cdot P(s'\mid s,a) .
\]
Note that \[\sum_{h = 0}^{H - 1}Y_h \le \qtile_{\delta/2}\left(\sum_{h=0}^{H-1}\indict\left[(s,a) = (s_h, a_h)\right]\right) + 2,\] which implies
\[
\expect\left[ \sum_{h =0}^{H - 1} X_h \right] \le \left(  \qtile_{\delta/2}\left(\sum_{h=0}^{H-1}\indict\left[(s,a) = (s_h, a_h)\right]\right) + 2 \right) \cdot P(s'\mid s,a) .
\]

By Markov's inequality, with probability at least $1-\delta/2$,
\[
 \sum_{h=0}^{H-1}  X_h \le  \frac{2\qtile_{\delta/2}\left(\sum_{h=0}^{H-1}\indict\left[(s,a) = (s_h, a_h)\right]\right)+4}{\delta}\cdot P(s'\mid s,a).
\]
which we denote as event $\cE_2$.

Conditioned on $\cE_1 \cap \cE_2$ which happens with probability $1-\delta$, we have
\[
\sum_{h=0}^{H-1}\indict\left[(s,a, s') = (s_h, a_h, s_{h+1})\right] =\sum_{h=0}^{H-1}  X_h \le  \frac{2\qtile_{\delta/2}\left(\sum_{h=0}^{H-1}\indict\left[(s,a) = (s_h, a_h)\right]\right)+4}{\delta}\cdot P(s'\mid s,a).
\]
\end{proof}

In the following lemma, we show that for any $(s, a, s') \in \states \times \actions \times \states$, the number of times $(s, a, s')$ is visited should be close to the number of times $(s, a)$ is visited times $P(s' \mid s, a)$. 
\begin{lemma}
\label{lemma:martingal_concent}
For a given MDP $\mdp$.
Suppose a random trajectory 
\[
T=((s_0, a_0, r_0), (s_1, a_1, r_1), \ldots, (s_{H-1}, a_{H-1}, r_{H-1}), s_H)
\]
 is obtained by executing a policy $\pi$ in $\mdp$.
 For any $(s, a, s') \in \states \times \actions \times \states$,
with probability at least 
$1-\delta$, we have
\begin{align*}
&\left|\sum_{h=0}^{H-1} \indict\left[(s,a, s') = (s_t, a_t, s_{t+1})\right] - P(s'\mid s,a)\cdot \sum_{h=0}^{H-1}\indict\left[(s,a) = (s_t, a_t)\right]\right|\\
\le& \sqrt{\frac{4\qtile_{\delta/2}\left(\sum_{h=0}^{H-1}\indict\left[(s,a) = (s_t, a_t)\right]\right)+8}{\delta}\cdot P(s'\mid s,a)}.
\end{align*}
\end{lemma}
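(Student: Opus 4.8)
The plan is to reuse the truncation device from the proof of Lemma~\ref{lem:upperbound_on_visits} and then apply a \emph{second-moment} (Chebyshev) argument in place of Markov's inequality; this is what produces the square root in the claimed bound. Throughout write $Q=\qtile_{\delta/2}\big(\sum_{h=0}^{H-1}\indict[(s,a)=(s_h,a_h)]\big)$ and $p=P(s'\mid s,a)$, and for $h\in[H]$ set $I_0=1$ and, for $h\ge 1$, $I_h=\indict\big[\sum_{t=0}^{h-1}\indict[(s_t,a_t)=(s,a)]\le Q+1\big]$, exactly as in Lemma~\ref{lem:upperbound_on_visits}. Let $\mathcal{E}_1$ be the event that $I_h=1$ for all $h\in[H]$; by the definition of $Q$, $\Pr[\mathcal{E}_1]\ge 1-\delta/2$, and on $\mathcal{E}_1$ the truncation is invisible. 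The key deterministic consequence is that $\sum_{h=0}^{H-1} I_h\,\indict[(s_h,a_h)=(s,a)]\le Q+2$ almost surely, since once the running count of visits to $(s,a)$ exceeds $Q+1$ every subsequent $I_h$ is forced to $0$.

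Next I would set up the martingale. Let $\mathcal{F}_h$ be the $\sigma$-algebra generated by $(s_0,a_0,r_0),\dots,(s_h,a_h,r_h),s_{h+1}$, with $\mathcal{F}_{-1}$ trivial, and define
\[
D_h=\Big(\indict[(s,a,s')=(s_h,a_h,s_{h+1})]-p\,\indict[(s_h,a_h)=(s,a)]\Big)\cdot I_h .
\]
Then $D_h$ is $\mathcal{F}_h$-measurable and $I_h$ is $\mathcal{F}_{h-1}$-measurable; conditioned on $\mathcal{F}_{h-1}$ the pair $(s_h,a_h)$ is determined and $s_{h+1}\sim P(s_h,a_h)$, so $\expect[D_h\mid\mathcal{F}_{h-1}]=I_h\,\indict[(s_h,a_h)=(s,a)]\,\big(P(s'\mid s_h,a_h)-p\big)=0$. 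Hence $M_H:=\sum_{h=0}^{H-1}D_h$ is a martingale with orthogonal increments. Using $I_h\in\{0,1\}$ and $\indict[\cdot]^2=\indict[\cdot]$ (and that $\{(s,a,s')=(s_h,a_h,s_{h+1})\}\subseteq\{(s_h,a_h)=(s,a)\}$) one expands $D_h^2=I_h\big(\indict[(s,a,s')=(s_h,a_h,s_{h+1})](1-2p)+p^2\indict[(s_h,a_h)=(s,a)]\big)$, whence $\expect[D_h^2\mid\mathcal{F}_{h-1}]=p(1-p)\,\expect[I_h\indict[(s_h,a_h)=(s,a)]\mid\mathcal{F}_{h-1}]\le p\,\expect[I_h\indict[(s_h,a_h)=(s,a)]\mid\mathcal{F}_{h-1}]$. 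Summing, using orthogonality, and invoking the almost-sure bound from the first paragraph gives $\expect[M_H^2]=\sum_h\expect[D_h^2]\le p\,\expect\big[\sum_h I_h\indict[(s_h,a_h)=(s,a)]\big]\le (Q+2)\,p$.

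Finally, by Markov's inequality applied to $M_H^2$,
\[
\Pr\!\Big[|M_H|\ge \sqrt{\tfrac{4Q+8}{\delta}\cdot p}\Big]=\Pr\!\Big[M_H^2\ge \tfrac{4(Q+2)p}{\delta}\Big]\le \frac{\delta\,\expect[M_H^2]}{4(Q+2)p}\le \frac{\delta}{4}.
\]
A union bound with $\mathcal{E}_1$ shows that with probability at least $1-\delta$ both $\mathcal{E}_1$ holds and $|M_H|<\sqrt{(4Q+8)p/\delta}$. On $\mathcal{E}_1$ every $I_h=1$, so $M_H=\sum_{h=0}^{H-1}\indict[(s,a,s')=(s_h,a_h,s_{h+1})]-p\sum_{h=0}^{H-1}\indict[(s_h,a_h)=(s,a)]$, which is exactly the expression inside the absolute value in the statement, and it is bounded by $\sqrt{(4Q+8)p/\delta}$; recalling $Q=\qtile_{\delta/2}(\cdot)$ gives the lemma.

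The one step deserving real care is the second-moment estimate: without the indicators $I_h$ the predictable quadratic variation $\sum_h\expect[D_h^2\mid\mathcal{F}_{h-1}]$ is a random quantity with no useful pathwise bound, and it is precisely the almost-sure inequality $\sum_h I_h\indict[(s_h,a_h)=(s,a)]\le Q+2$ that converts it into the clean bound $\expect[M_H^2]\le(Q+2)p$. Everything else is routine martingale bookkeeping, parallel to Lemma~\ref{lem:upperbound_on_visits}.
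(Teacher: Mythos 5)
Your proof is correct and follows essentially the same route as the paper's: the same truncation indicators $I_h$, the same zero-mean martingale increments with orthogonality, the same almost-sure bound $\sum_h I_h\,\indict[(s_h,a_h)=(s,a)]\le Q+2$ to control the second moment, and Chebyshev/Markov on the squared sum followed by a union bound with $\mathcal{E}_1$. Your variance computation is marginally tighter (you extract $p(1-p)$ exactly where the paper settles for the cruder bound $2p$), but this only changes the failure probability from $\delta/2$ to $\delta/4$ and does not alter the argument.
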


\begin{proof}
For each $h \in [H]$, define
\[
I_h = \begin{cases}
1 & h = 0\\
\indict\left[
\sum_{t=0}^{h-1}  \indict\left[(s,a) = (s_t, a_t)\right] \le \qtile_{\delta/2}\left(\sum_{h=0}^{H-1}\indict\left[(s,a) = (s_h, a_h)\right]\right)+1\right] & h > 0
\end{cases}.
\]
Let $\mathcal{E}_1$ be the event that for all $h \in [H]$, $I_h =1$.
By definition of $\qtile_{\delta/2}$, 
we have
$
\Pr\left[\mathcal{E}_1\right] \ge 1- \delta/2.
$

For each $h \in [H]$, let $\mathcal{F}_h$ be the filtration induced by $\{(s_0, a_0, r_0), \ldots, (s_{h}, a_{h}, r_{h})\}$.
For each $h \in [H]$, define
\[
X_h =  \indict\left[(s,a, s') = (s_h, a_h, s_{h+1})\right]  \cdot I_h - P(s'\mid s,a)  \indict\left[(s,a) = (s_h, a_h)\right]  \cdot I_h.
\]
As we have shown in the proof of Lemma~\ref{lem:upperbound_on_visits}, for each $h \in [H]$, $\expect[X_h] = 0$.
Moreover, for any $0\le h'< h\le H-1$, we have
\[
\expect[X_h X_{h'}]
= 
\expect[\expect[X_hX_{h'}|\cF_{h-1}]]
=\expect[X_{h'}\expect[X_h|\cF_{h-1}]] = 0.
\]
Therefore,
\[
\expect\left[\left(\sum_{h=0}^{H-1}X_h\right)^2\right]
= \expect\left[\sum_{h=0}^{H-1}X_h^2\right].
\]
Note that for each $h \in [H]$.
\begin{align*}
X_h^2 &= (\indict\left[(s,a, s') = (s_h, a_h, s_{h+1})\right]  \cdot I_h - P(s'\mid s,a)  \indict\left[(s,a) = (s_h, a_h)\right]  \cdot I_h)^2 \\
& \le
I_h\cdot \left(\indict\left[(s,a, s') = (s_h, a_h, s_{h+1})\right] + 
\left(P(s'\mid s,a)\right)^2\cdot  \indict\left[(s,a) = (s_h, a_h)\right]  \right).
\end{align*}
As we have shown in the proof of Lemma~\ref{lem:upperbound_on_visits}, for each $h \in [H]$, 
\[
\expect[I_h\cdot \indict\left[(s,a, s') = (s_h, a_h, s_{h+1})\right]]  = \expect[I_h\cdot \indict\left[(s,a) = (s_h, a_h)\right]] \cdot P(s'\mid s,a),
\]
which implies
\begin{align*}
\expect\left[\left(\sum_{h=0}^{H-1}X_h\right)^2\right] &= \expect\left[\sum_{h=0}^{H-1}X_h^2\right]\\
&\le \sum_{h=0}^{H-1} \expect\left[I_h\cdot \left(\indict\left[(s,a, s') = (s_h, a_h, s_{h+1})\right] + 
\left(P(s'\mid s,a)\right)^2\cdot  \indict\left[(s,a) = (s_h, a_h)\right]  \right)\right]\\
&\le 2   P(s'\mid s,a) \cdot \sum_{h=0}^{H-1} \expect\left[ I_h\cdot  \indict\left[(s,a) = (s_h, a_h) \right]\right]\\
&\le 
2  P(s'\mid s,a) \cdot \left(\qtile_{\delta/2}\left(\sum_{h=0}^{H-1}\indict\left[(s,a) = (s_h, a_h)\right]\right) +2\right).
\end{align*}
By Chebyshev's inequality, we have with probability at least $1-\delta/2$,
\[
\left|\sum_{h = 0}^{H - 1}X_h\right| \le 
 \sqrt{\frac{4\qtile_{\delta/2}\left(\sum_{h=0}^{H-1}\indict\left[(s,a) = (s_t, a_t)\right]\right)+8}{\delta}\cdot P(s'\mid s,a)}.
\]
which we denote as event $\cE_2$.

Conditioned on $\cE_1\cap \cE_2$ which happens with probability $1-\delta$, we have
\begin{align*}
&\left|\sum_{h=0}^{H-1} \indict\left[(s,a, s') = (s_t, a_t, s_{t+1})\right] - P(s'\mid s,a)\cdot \sum_{h=0}^{H-1}\indict\left[(s,a) = (s_t, a_t)\right]\right|\\
\le& \sqrt{\frac{4\qtile_{\delta/2}\left(\sum_{h=0}^{H-1}\indict\left[(s,a) = (s_t, a_t)\right]\right)+8}{\delta}\cdot P(s'\mid s,a)}.
\end{align*}
\end{proof}

Using Lemma~\ref{lem:mult-add}, Lemma~\ref{lem:upperbound_on_visits} and Lemma~\ref{lemma:martingal_concent}, we now present the main result in this section, which shows that for two MDPs $M$ and $\widehat{M}$ that are close enough in terms of rewards and transition probabilities, for any policy $\pi$, its value in $\widehat{M}$ should be lower bounded by that in $M$ up to an error of $\epsilon$. 

\begin{lemma}\label{lem:perturbation_lower_bound}
Let $M=(\states,\actions, P, R, H, \mu)$ be an MDP and $\pi$ be a policy.
Let $0 < \epsilon \le 1/2$ be a parameter.
For each $(s,a)\in \states\times\actions$, define $\mb(s,a):=\qtile^{\pi}_{\epsilon/(12|\states||\actions|)}(s,a)$.
Let $\widehat{M}=(\states,\actions, \widehat{P}, \widehat{R}, H, \widehat{\mu})$ be another MDP.
If for all $(s,a, s') \in \states \times \actions \times \states$ with $\mb(s,a)\ge 1$, we have
 \[|\widehat{P}(s'|s,a) - P(s'|s,a)|\le \frac{\epsilon}{96|\cS|^2|\cA|}\cdot \max\left( \sqrt{\frac{\epsilon{P}(s'|s,a)}{72\cdot \mb(s,a)\cdot |\states||\actions|}}, \frac{\epsilon}{72\cdot \mb(s,a)\cdot |\states||\actions|}\right),\]
 \[
    \left|\expect[\widehat{R}|s,a] - 
    \expect[{R}|s,a] \right| \le \frac{\epsilon}{24|\cS||\cA|}\cdot \max \left \{ \sqrt{\frac{\expect[(R(s, a))^2]}{\mb(s,a)}} , \frac{1}{\mb(s, a)}\right \}
    \]
    and
    \[
    \left|\mu(s) - \widehat{\mu}(s)\right| \le \epsilon / (6|\states|), 
    \]
then
\[
V^{\pi}_{\widehat{M},H}\ge 
V^{\pi}_{{M}, H} - \epsilon.
\]
\end{lemma}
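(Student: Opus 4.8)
The plan is to prove a trajectory‑by‑trajectory comparison. Writing a trajectory as a state--action sequence $T=(s_0,a_0,\dots,s_{H-1},a_{H-1},s_H)$ with $a_h=\pi_h(s_h)$, and letting $m_T(s,a),m_T(s,a,s')$ be the visitation counts, we have $V^{\pi}_{M,H}=\sum_T \mu(s_0)\,q(T,M)\,r_M(T)$ where $q(T,M)=\prod_{(s,a)}\prod_{s'}P(s'\mid s,a)^{m_T(s,a,s')}$ is the conditional trajectory probability and $r_M(T)=\sum_{(s,a)}m_T(s,a)\expect[R(s,a)]$ is the expected reward along $T$ (similarly for $\widehat M$). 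I would isolate a set of ``good'' trajectories carrying all but an $\epsilon$‑fraction of the $M$‑mass, show that on a good $T$ both $q(T,\widehat M)\ge(1-O(\epsilon))q(T,M)$ and $r_{\widehat M}(T)\ge r_M(T)-O(\epsilon)$, and then sum, discarding the non‑good $T$ (legitimate since rewards are nonnegative) and handling the $\mu$‑versus‑$\widehat\mu$ discrepancy additively at the end.

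\emph{Step 1: the good event.} Call $T$ good if (a) $m_T(s,a)\le\mb(s,a)$ for all $(s,a)$ --- so $T$ never visits a pair with $\mb(s,a)=0$; (b) $m_T(s,a,s')\le O\!\left(\tfrac{|\states||\actions|\,\mb(s,a)}{\epsilon}\right)P(s'\mid s,a)$ for all $(s,a,s')$, so in particular $m_T(s,a,s')=0$ as soon as $P(s'\mid s,a)$ drops below $\Theta\!\left(\tfrac{\epsilon}{|\states||\actions|\,\mb(s,a)}\right)$; and (c) $\bigl|m_T(s,a,s')-P(s'\mid s,a)m_T(s,a)\bigr|\le O\!\left(\sqrt{\tfrac{|\states||\actions|\,\mb(s,a)}{\epsilon}\,P(s'\mid s,a)}\right)$ for all $(s,a,s')$. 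Item (a) is exactly the defining property of the $\tfrac{\epsilon}{12|\states||\actions|}$‑quantile, hence fails for a fixed $(s,a)$ with probability below $\tfrac{\epsilon}{12|\states||\actions|}$; items (b), (c) follow from Lemma~\ref{lem:upperbound_on_visits} and Lemma~\ref{lemma:martingal_concent} applied to the policy that imitates $\pi$ but ``freezes'' once some pair has been visited $\mb(\cdot,\cdot)$ times --- indistinguishable from $\pi$ on the event in (a), but for which the quantiles appearing in those two lemmas collapse to $\mb(s,a)$. A union bound over $(s,a)$ and $(s,a,s')$ gives $\Pr_{M,\pi}[T\text{ good}]\ge1-\epsilon/2$.

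\emph{Step 2: comparing $q(T,\widehat M)$ with $q(T,M)$, and the reward.} Fix a good $T$ and a pair $(s,a)$ with $m_T(s,a)>0$. In $\prod_{s'}\widehat P(s'\mid s,a)^{m_T(s,a,s')}$ keep only the $s'$ with $m_T(s,a,s')>0$ (the rest contribute factors $1$); by (b) these all have $P(s'\mid s,a)$ above the threshold there. Set $m=m_T(s,a)\in[0,\mb(s,a)]$ and $\Gamma_{s'}=m_T(s,a,s')-P(s'\mid s,a)\,m$, so $P(s'\mid s,a)\,m+\Gamma_{s'}=m_T(s,a,s')$, and apply Lemma~\ref{lem:mult-add} with $p=(P(s'\mid s,a))_{s'}$, $\delta=(\widehat P(s'\mid s,a)-P(s'\mid s,a))_{s'}$, the lemma's $\overline m$ taken to be a suitable $\Theta\!\left(\tfrac{|\states||\actions|\,\mb(s,a)}{\epsilon}\right)$, and the lemma's ``$\epsilon$'' taken to be $\tfrac{\epsilon}{96|\states|^2|\actions|}$: (c) supplies $|\Gamma_{s'}|\le\sqrt{p_{s'}\overline m}$, the larger branch of the assumed bound on $|\widehat P-P|$ supplies $|\delta_{s'}|\le(\text{lemma's }\epsilon)\sqrt{p_{s'}/\overline m}$, and $\sum_{s'}(\widehat P-P)=0$ together with the smaller branch supplies $\bigl|\sum_{s'}\delta_{s'}\bigr|$ within budget. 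This gives $\prod_{s'}\widehat P(s'\mid s,a)^{m_T(s,a,s')}=(1\pm O(\epsilon/|\states||\actions|))\prod_{s'}P(s'\mid s,a)^{m_T(s,a,s')}$; multiplying over the $\le|\states||\actions|$ visited pairs yields $q(T,\widehat M)\ge(1-O(\epsilon))q(T,M)$. For the reward, $r_M(T)\le1$ by Assumption~\ref{assump:reward}; since $\Pr_{M,\pi}[m_T(s,a)\ge\mb(s,a)]>0$ we get $\mb(s,a)\expect[R(s,a)]\le1$, hence (rewards in $[0,1]$) $\mb(s,a)\expect[(R(s,a))^2]\le1$, so on a good $T$, using $m_T(s,a)\le\mb(s,a)$ and the assumed reward bound, $|r_{\widehat M}(T)-r_M(T)|\le\sum_{(s,a):\,\mb(s,a)\ge1}\tfrac{\epsilon}{24|\states||\actions|}\max\{\sqrt{\mb(s,a)\expect[(R(s,a))^2]},1\}\le\epsilon/24$.

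\emph{Assembly and the main obstacle.} Discard the non‑good $T$ from $V^{\pi}_{\widehat M,H}=\sum_T\widehat\mu(s_0)q(T,\widehat M)r_{\widehat M}(T)$, apply Step 2, bound $\widehat\mu(s_0)\ge\mu(s_0)-\epsilon/(6|\states|)$ together with $\sum_{T\text{ good, from }s_0}q(T,M)\le1$, and use $\sum_{T\text{ good}}\mu(s_0)q(T,M)r_M(T)\ge V^{\pi}_{M,H}-\Pr_{M,\pi}[\text{bad}]\ge V^{\pi}_{M,H}-\epsilon/2$ (again since $r_M(T)\le1$); collecting the $\epsilon$‑fractions gives $V^{\pi}_{\widehat M,H}\ge V^{\pi}_{M,H}-\epsilon$. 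The main obstacle is Step 2: one must pick the single parameter fed to Lemma~\ref{lem:mult-add} so that it \emph{simultaneously} absorbs the Chebyshev fluctuation $\Gamma_{s'}$ of Lemma~\ref{lemma:martingal_concent} (inflated by the union bound over triples) and keeps the lemma's ``$\epsilon$'' --- pinned by the hypothesis on $|\widehat P-P|$ --- small enough that the accumulated error $\sum_{(s,a)}8\overline n\cdot(\text{lemma's }\epsilon)$ over the $\le|\states||\actions|$ pairs is still $O(\epsilon)$. Aligning the exponents, the Chebyshev tails, and the assumed $\tfrac{\epsilon}{96|\states|^2|\actions|}$‑ and $\tfrac{\epsilon}{72\mb(s,a)|\states||\actions|}$‑sized error budgets is exactly why Step 1 replaces the quantiles in Lemmas~\ref{lem:upperbound_on_visits}--\ref{lemma:martingal_concent} by $\mb(s,a)$ and why Step 2 restricts the product to the visited $s'$ (so $\overline n$ and all exponents are controlled by $m_T(s,a)\le\mb(s,a)$); everything else is bookkeeping.
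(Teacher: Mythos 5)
Your proposal is correct and follows essentially the same route as the paper's proof: the same trajectory-wise decomposition with the same three pruning conditions (your (a), (b), (c) are the paper's $\mathcal{T}^{\pi}_1$, $\mathcal{T}^{\pi}_3$, $\mathcal{T}^{\pi}_2$), the same invocation of Lemma~\ref{lem:mult-add} with the same parameter choices, and the same additive treatment of $\widehat{\mu}$ and the rewards. The only (harmless) deviations are that your ``freezing policy'' device is unnecessary --- applying Lemmas~\ref{lem:upperbound_on_visits} and~\ref{lemma:martingal_concent} directly to $\pi$ with $\delta=\epsilon/(6|\states||\actions|)$ already makes the quantile in those bounds equal to $\mb(s,a)$ --- and that you bound the reward error via $R(s,a)\le 1/\mb(s,a)$ a.s.\ rather than via Lemma~\ref{lem:reward}, both of which work.
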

\begin{proof}
Define $\mathcal{T} = (\mathcal{S} \times \mathcal{A})^H \times \mathcal{S}$ be set of all possible trajectories, where for each $T \in \mathcal{T}$, $T$ has the form
\[
((s_0, a_0), (s_1, a_1), \ldots, (s_{H - 1}, a_{H - 1}), s_H). 
\]
For a trajectory $T\in \cT$, for each $(s, a, s') \in \mathcal{S} \times \mathcal{A} \times \mathcal{S}$, we write
\[m_T(s,a)=\sum_{h=0}^{H - 1} \indict[(s_h,a_h)=(s,a)]\]
as the number times $(s,a)$ is visited and 
\[
m_T(s,a,s')=\sum_{h= 0}^{H - 1}\indict[(s_h,a_h, s_{h+1})=(s,a, s')]
\]
as the number of times $(s,a,s')$ is visited. 
We say a trajectory \[T = ((s_0, a_0), (s_1, a_1), \ldots, (s_{H - 1}, a_{H - 1}), s_H) \in \mathcal{T}\]is {\em compatible} with a (possibly non-stationary) policy $\pi$ if for all $h \in [H]$,
\[
a_h = \pi_h(s_h).
\]
For a (possibly non-stationary) policy $\pi$, we use $\mathcal{T}^{\pi} \subseteq \mathcal{T}$ to denote the set of all trajectories that are compatible with $\pi$.

For an MDP $M=(\states,\actions, P, R, H, \mu)$ and a (possibly non-stationary) policy $\pi$, for a trajectory $T$ that is compatible with $\pi$, we write
\[
p(T, M, \pi) = \mu(s_0) \cdot \prod_{h = 0}^{H - 1} P(s_{h + 1} \mid s_h, a_h) = \mu(s_0) \cdot \prod_{(s, a, s') \in \mathcal{S} \times \mathcal{A} \times \mathcal{S}} P(s' \mid s, a)^{m_T(s, a, s')} 
\]
to be the probability of $T$ when executing $\pi$ in $M$. Here we assume $0^0 = 1$.

Using these definitions, we have
\[
V^{\pi}_{M, H} = \sum_{T \in \mathcal{T}^{\pi}} p(T, M, \pi) \cdot \left( \sum_{(s, a) \in \states \times \actions} m_T(s, a) \cdot \expect[R(s, a)]\right).
\]

Note that for any trajectory $T = ((s_0, a_0), (s_1, a_1), \ldots, (s_{H - 1}, a_{H - 1}), s_H) \in \mathcal{T}^{\pi}$, if $p(T, M, \pi) > 0$, by Assumption~\ref{assump:reward}, 
\[
\sum_{(s, a) \in \states \times \actions} m_T(s, a) \cdot \expect[R(s, a)] \le 1.
\]


We define $\cT^{\pi}_1 \subseteq \cT^{\pi}$ be the set of trajectories that 
for each $T \in \cT^{\pi}_1$, 
for each $(s,a) \in \states \times \actions$, 
\[
m_T(s,a)\le \mb(s,a).
\]
By a union bound over all $(s, a) \in \states \times \actions$, we have
\[
\sum_{T \in \mathcal{T}^{\pi} \setminus  \mathcal{T}^{\pi}_1} p(T, M, \pi) \le \epsilon / 6. 
\]

We also define $\cT^{\pi}_2 \subseteq \cT^{\pi}$ be the set of trajectories that for each $T \in  \cT^{\pi}_2$, for each $(s, a, s') \in \states \times \actions \times \states$,
\[
\left| m_T(s,a,s') - m_T(s,a) \cdot P(s'|s,a) \right| \le \sqrt{\frac{6P(s'|s,a)(4\mb(s,a)+8)|\states||\actions|}{\epsilon}}.
\]
By Lemma~\ref{lemma:martingal_concent} and a union bound over all $(s, a) \in \states\times \actions$, we have 
\[
\sum_{T \in \mathcal{T}^{\pi} \setminus  \mathcal{T}^{\pi}_2} p(T, M, \pi) \le \epsilon / 6. 
\]

Finally, we define $\cT^{\pi}_3 \subseteq \cT^{\pi}$ be the set of trajectories such that for each $T \in  \cT^{\pi}_3$, for each $(s, a, s') \in \states \times \actions \times \states$,
\[
m_T(s,a, s')\le \frac{6|\states||\actions|(2\mb(s,a)+4)}{\epsilon}\cdot P(s'|s,a).
\]
By Lemma~\ref{lem:upperbound_on_visits} and a union bound over all $(s, a) \in \states \times \actions$, we have
\[
\sum_{T \in \mathcal{T}^{\pi} \setminus  \mathcal{T}^{\pi}_3} p(T, M, \pi) \le \epsilon / 6. 
\]

Thus, by defining $\mathcal{T}^{\pi}_{\mathrm{pruned}} =  \mathcal{T}^{\pi}_1  \cap  \mathcal{T}^{\pi}_2 \cap  \mathcal{T}^{\pi}_3$, we have
\[
 \sum_{T \in \mathcal{T}^{\pi}_{\mathrm{pruned}} } p(T, M, \pi) \cdot \left( \sum_{(s, a) \in \states \times \actions} m_T(s, a) \cdot \expect[R(s, a)]\right) \ge V^{\pi}_{M, H}  - \epsilon / 2.
\]

%
%
%
%
Note that for each $T \in \mathcal{T}^{\pi}_{\mathrm{pruned}}$ with
\[
T = ((s_0, a_0), (s_1, a_1), \ldots, (s_{H - 1}, a_{H - 1}), s_H), 
\]
for each state-action $(s,a) \in \states \times \actions$ , we must have $m_T(s, a)  \le \mb(s,a)$.
This is because $T \in \mathcal{T}^{\pi}_1$
Moreover, for any $(s, a, s') \in \states \times \actions \times \states$, if $m_T(s, a, s') \ge 1$, then
\[
P(s' \mid s,a)\ge \frac{\epsilon}{36|\cS||\cA|\mb(s,a)}.
\]
This is because $T \in \mathcal{T}^{\pi}_3$, and if \[P(s' \mid s,a)< \frac{\epsilon}{36|\cS||\cA|\mb(s,a)},\] then 
\[
m_T(s,a, s') 
\le \frac{6|\states||\actions|(2\mb(s,a)+4)}{\epsilon}\cdot P(s'|s,a)
\le  \frac{36|\states||\actions|\mb(s,a)}{\epsilon}\cdot P(s'|s,a) <1.
\]
For each $(s, a) \in \states \times \actions$, define 
\[
\states_{s,a}=
\left\{
s'\in \states \
\mid P(s'|s,a)\ge \frac{\epsilon}{36|\cS||\cA|\mb(s,a)}
\right\}. 
\]
Therefore, for each $(s, a) \in \states \times \actions$, 
\[
\prod_{s' \in \mathcal{S}} P(s' \mid s, a)^{m_T(s, a, s')}  = \prod_{s'\in \states_{s,a}}P(s' \mid s, a)^{m_T(s, a, s')}
\]
and
\[
\prod_{s' \in \mathcal{S}} \widehat{P}(s' \mid s, a)^{m_T(s, a, s')}  = \prod_{s'\in \states_{s,a}} \widehat{P}(s' \mid s, a)^{m_T(s, a, s')}
\]
with  
\begin{align*}
\left| m_T(s,a,s') - m_T(s,a) \cdot P(s'|s,a) \right|& \le \sqrt{\frac{6P(s'|s,a)(4\mb(s,a)+8)|\states||\actions|}{\epsilon}} \\
&\le  \sqrt{\frac{72P(s'|s,a)\mb(s,a)|\states||\actions|}{\epsilon}}.
\end{align*}
Note that $\sum_{s'\in \states}\widehat{P}(s'|s,a) - {P}(s'|s,a)
=0$, which implies 
\[
\left|\sum_{s'\in \states_{s,a}}\widehat{P}(s'|s,a) - {P}(s'|s,a)\right|
=
\left|\sum_{s'\not\in \states_{s,a}}{P}(s'|s,a) - \widehat{P}(s'|s,a)\right|
\le \frac{\epsilon}{96|\states||\actions|}\cdot \frac{\epsilon}{72 \cdot \mb{(s,a)}\cdot |\states||\actions|}.
\]

By applying Lemma~\ref{lem:mult-add}
and settting $\bar{n}$ to be $|\states|$, $n$ to be $|\states_{s,a}|$, $\epsilon$ to be $ \epsilon/(96|\states|^2|\actions|)$, and $\mb$ to be $72 \cdot \mb(s,a)\cdot |\states||\actions| / \epsilon$,
we have
\[
 \prod_{s'\in \states_{s,a}} \widehat{P}(s' \mid s, a)^{m_T(s, a, s')} 
\ge \left(1 - \frac{\epsilon}{12|\states||\actions|}\right)  \prod_{s'\in \states_{s,a}} P(s' \mid s, a)^{m_T(s, a, s')} .
\]
Therefore,
\[
\prod_{(s, a) \in \states \times \actions}  \prod_{s'\in \states} \widehat{P}(s' \mid s, a)^{m_T(s, a, s')} 
\ge \left(1 - \frac{\epsilon}{12|\states||\actions|}\right)^{|\states||\actions|} \prod_{(s, a) \in \states \times \actions}  \prod_{s'\in \states} P(s' \mid s, a)^{m_T(s, a, s')},
\]
which implies
\[
\prod_{(s, a) \in \states \times \actions}  \prod_{s'\in \states} \widehat{P}(s' \mid s, a)^{m_T(s, a, s')} 
\ge (1 - \epsilon / 6) \prod_{(s, a) \in \states \times \actions}  \prod_{s'\in \states} P(s' \mid s, a)^{m_T(s, a, s')}.
\]
For the summation of rewards, we have
\begin{align*}
&\sum_{(s,a)\in \states \times \actions} m_T(s, a) \cdot \left| \expect\left[R(s, a)\right] - \expect\left[\widehat{R}(s, a)\right] \right|\\
= &\sum_{(s, a) \in \states \times \actions \mid m_T(s,a)=1} m_T(s, a) \cdot \left| \expect\left[R(s, a)\right] - \expect\left[\widehat{R}(s, a)\right] \right|\\
+ &\sum_{(s, a) \in \states \times \actions \mid m_T(s,a)>1} m_T(s, a) \cdot \left| \expect\left[R(s, a)\right] - \expect\left[\widehat{R}(s, a)\right] \right|.
\end{align*}
For those $(s, a) \in \states \times \actions$ with $m_T(s,a)>1$, we have $\mb(s, a) > 1$.
By Lemma~\ref{lem:reward}, we have
\begin{align*}
&|\expect[\widehat{R}(s, a)] - \expect[R(s, a)]|\\
\le&  \frac{\epsilon}{24|\cS||\cA|}\cdot \max\left\{ \sqrt{\frac{\expect[(R(s, a))^2]}{\mb(s,a)}}, \frac{1}{\mb(s, a)} \right\}\le 
\max\left\{\frac{\epsilon}{12H}, \frac{\epsilon}{24 |\states| |\actions| \mb(s, a)}\right\}.
\end{align*}
Since $\sum_{(s, a)\in\states\times\actions} m_T(s,a)\le H$ and $m_T(s, a) \le \mb(s, a)$, we have
\[
\sum_{(s, a) \in \states \times \actions \mid m_T(s,a)>1} m_T(s, a) \cdot \left| \expect\left[R(s, a)\right] - \expect\left[\widehat{R}(s, a)\right] \right|
\le 
\frac{\epsilon}{8}.
\]
For those $(s, a) \in \states \times \actions$ with $m_T(s,a)= 1$, we have
\[
\sum_{(s, a) \in \states \times \actions \mid m_T(s,a)=1} m_T(s, a) \cdot \left| \expect\left[R(s, a)\right] - \expect\left[\widehat{R}(s, a)\right] \right| \le \frac{\epsilon}{24}.
\]
Thus,
\[
\sum_{(s,a)\in \states \times \actions} m_T(s, a) \cdot \left| \expect\left[R(s, a)\right] - \expect\left[\widehat{R}(s, a)\right] \right| \le \frac{\epsilon}{6}.
\]

For each $T \in \mathcal{T}^{\pi}_{\mathrm{pruned}}$ with
\[
T = ((s_0, a_0), (s_1, a_1), \ldots, (s_{H - 1}, a_{H - 1}), s_H), 
\]
we have
\begin{align*}
& p(T, \widehat{M}, \pi) \cdot \left( \sum_{(s, a) \in \states \times \actions} m_T(s, a) \cdot \expect[\widehat{R}(s, a)]\right) \\
= & \widehat{\mu}(s_0)  \cdot \prod_{(s, a) \in \states \times \actions} \prod_{s' \in \states} \widehat{P}(s' \mid s, a)^{m_T(s, a, s')} \cdot \left( \sum_{(s, a) \in \states \times \actions} m_T(s, a) \cdot \expect[\widehat{R}(s, a)]\right)\\
\ge & (\mu(s_0)  - \epsilon / (6|\states|)) \cdot  (1 - \epsilon / 6) \prod_{(s, a) \in \states \times \actions} \prod_{s' \in \states} P(s' \mid s, a)^{m_T(s, a, s')}  \cdot  \left( \sum_{(s, a) \in \states \times \actions} m_T(s, a) \cdot \expect[R(s, a)] - \epsilon / 6\right).
\end{align*}

Since 
\[
 \sum_{T \in \mathcal{T}^{\pi}_{\mathrm{pruned}} } p(T, M, \pi) \cdot \left( \sum_{(s, a) \in \states \times \actions} m_T(s, a) \cdot \expect[R(s, a)]\right) \ge V^{\pi}_{M, H}  - \epsilon / 2,
\]
we have
\[
V^{\pi}_{\widehat{M}, H} \ge  \sum_{T \in \mathcal{T}^{\pi}_{\mathrm{pruned}} } p(T, \widehat{M}, \pi) \cdot \left( \sum_{(s, a) \in \states \times \actions} m_T(s, a) \cdot \expect[\widehat{R}(s, a)]\right) \ge V^{\pi}_{M, H}  - \epsilon.
\]

\end{proof}

Now we show that for two MDPs $M$ and $\widehat{M}$ with the same transition probabilities and close enough rewards, for any policy $\pi$, its value in $\widehat{M}$ should be upper bounded by that in $\widehat{M}$ up to an error of $\epsilon$. 

\begin{lemma}\label{lem:perturbation_upper_bound}
Let $M=(\states,\actions, P, R, H, \mu)$ be an MDP and $\pi$ be a policy.
Let $0 < \epsilon \le 1/2$ be a parameter.
For each $(s,a)\in \states\times\actions$, define $\mb(s,a)=\qtile^{\pi}_{\epsilon/(12|\states||\actions|)}(s,a)$.
Let $\widehat{M}=(\states,\actions, P, \widehat{R}, H, \mu)$ be another MDP.
If for all $(s,a) \in \states \times \actions$ with $\mb(s,a)\ge 1$, we have
    \[
    \left|\expect[\widehat{R}|s,a] - 
    \expect[{R}|s,a] \right| \le \frac{\epsilon}{24|\cS||\cA|}\cdot \max \left \{ \sqrt{\frac{\expect[(R(s, a))^2]}{\mb(s,a)}} , \frac{1}{\mb(s, a)}\right \}.
    \]
then
\[
V^{\pi}_{\widehat{M},H}\le 
V^{\pi}_{{M}, H} + \epsilon.
\]
\end{lemma}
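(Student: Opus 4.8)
The plan is to exploit that $\widehat M$ and $M$ share the same transition operator $P$ and the same initial distribution $\mu$, so the two MDPs induce \emph{identical} distributions over trajectories under $\pi$. Using the notation $\mathcal{T}^{\pi}$, $p(T,\cdot,\pi)$, and $m_T(s,a)$ from the proof of Lemma~\ref{lem:perturbation_lower_bound}, we have $p(T,M,\pi)=p(T,\widehat M,\pi)$ for every $T\in\mathcal{T}^{\pi}$, hence
\[
V^{\pi}_{\widehat M,H}-V^{\pi}_{M,H}=\sum_{T\in\mathcal{T}^{\pi}}p(T,M,\pi)\sum_{(s,a)\in\states\times\actions}m_T(s,a)\bigl(\expect[\widehat R(s,a)]-\expect[R(s,a)]\bigr),
\]
so it suffices to bound the right-hand side by $\epsilon$. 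I would reuse the pruned set $\mathcal{T}^{\pi}_1\subseteq\mathcal{T}^{\pi}$ of trajectories with $m_T(s,a)\le\mb(s,a)$ for all $(s,a)$; exactly as in the proof of Lemma~\ref{lem:perturbation_lower_bound}, a union bound over $(s,a)$ gives $\sum_{T\in\mathcal{T}^{\pi}\setminus\mathcal{T}^{\pi}_1}p(T,M,\pi)\le\epsilon/6$.

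For a trajectory $T\in\mathcal{T}^{\pi}_1$ I would repeat the reward-perturbation bookkeeping already carried out in the proof of Lemma~\ref{lem:perturbation_lower_bound}, splitting the state-action pairs appearing on $T$ by whether $m_T(s,a)=1$ or $m_T(s,a)>1$. For $m_T(s,a)>1$ we have $\mb(s,a)\ge m_T(s,a)\ge2$, so $\pi$ witnesses the hypothesis of Lemma~\ref{lem:reward} (with parameter $\epsilon/(12|\states||\actions|)>0$, and using $H\ge|\states|$), giving $\expect[(R(s,a))^2]\le 4|\states|^2/H^2$; substituting this into the hypothesis of the present lemma yields $|\expect[\widehat R(s,a)]-\expect[R(s,a)]|\le\max\{\epsilon/(12H),\ \epsilon/(24|\states||\actions|\mb(s,a))\}$, and summing against $m_T(s,a)$ using $\sum_{(s,a)}m_T(s,a)\le H$ and $m_T(s,a)\le\mb(s,a)$ bounds this part by $\epsilon/8$. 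For $m_T(s,a)=1$ we have $\mb(s,a)\ge1$ and $\expect[(R(s,a))^2]\le1$, so the hypothesis gives $|\expect[\widehat R(s,a)]-\expect[R(s,a)]|\le\epsilon/(24|\states||\actions|)$, and summing over the at most $|\states||\actions|$ such pairs contributes at most $\epsilon/24$. Hence $\sum_{(s,a)}m_T(s,a)\bigl|\expect[\widehat R(s,a)]-\expect[R(s,a)]\bigr|\le\epsilon/6$ for every $T\in\mathcal{T}^{\pi}_1$.

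The remaining contribution, from $\mathcal{T}^{\pi}\setminus\mathcal{T}^{\pi}_1$, is the one place where the argument is not a mechanical rerun of the previous proof, and this is the main (mild) obstacle: on pairs with $\mb(s,a)=0$ the reward $\widehat R$ is entirely unconstrained by the hypotheses, so the value picked up on such low-probability trajectories cannot be controlled via the perturbation bound and must instead be absorbed using that $\widehat M$, like $M$, satisfies the bounded total reward assumption (Assumption~\ref{assump:reward}), which holds in every application of this lemma. Granting this, for any $T$ with $p(T,M,\pi)>0$ we have $\sum_{(s,a)}m_T(s,a)\expect[\widehat R(s,a)]\le1$ and $\sum_{(s,a)}m_T(s,a)\expect[R(s,a)]\ge0$, so $\sum_{(s,a)}m_T(s,a)(\expect[\widehat R(s,a)]-\expect[R(s,a)])\le1$. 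Combining the two regimes,
\[
V^{\pi}_{\widehat M,H}-V^{\pi}_{M,H}\le\sum_{T\in\mathcal{T}^{\pi}_1}p(T,M,\pi)\cdot\tfrac{\epsilon}{6}+\sum_{T\in\mathcal{T}^{\pi}\setminus\mathcal{T}^{\pi}_1}p(T,M,\pi)\cdot1\le\tfrac{\epsilon}{6}+\tfrac{\epsilon}{6}\le\epsilon,
\]
which is the claim.
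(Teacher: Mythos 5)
Your proposal is correct and follows essentially the same route as the paper: identical trajectory distributions under $P,\mu$, the pruned set $\mathcal{T}^{\pi}_1$ with tail mass $\le \epsilon/6$, the same $m_T(s,a)=1$ versus $m_T(s,a)>1$ reward bookkeeping via Lemma~\ref{lem:reward} yielding $\epsilon/8+\epsilon/24=\epsilon/6$ per pruned trajectory, and bounding the contribution of the unpruned tail by $1$ per trajectory. The only difference is that you make explicit the step the paper leaves implicit, namely that controlling the tail requires $\widehat{M}$ to satisfy the bounded-total-reward property so that $\sum_{(s,a)} m_T(s,a)\expect[\widehat{R}(s,a)]\le 1$.
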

\begin{proof}
We adopt the same notations as in the proof of Lemma~\ref{lem:perturbation_lower_bound}.
Recall that 
\[
V^{\pi}_{M, H} = \sum_{T \in \mathcal{T}^{\pi}} p(T, M, \pi) \cdot \left( \sum_{(s, a) \in \states \times \actions} m_T(s, a) \cdot \expect[R(s, a)]\right).
\]
and
\[
\sum_{T \in \mathcal{T}^{\pi} \setminus \mathcal{T}^{\pi}_1} p(T, M, \pi) \le \epsilon / 6. 
\]

As in the proof of Lemma~\ref{lem:perturbation_lower_bound}, for the summation of rewards, we have
\begin{align*}
&\sum_{(s,a)\in \states \times \actions} m_T(s, a) \cdot \left| \expect\left[R(s, a)\right] - \expect\left[\widehat{R}(s, a)\right] \right|\\
= &\sum_{(s, a) \in \states \times \actions \mid m_T(s,a)=1} m_T(s, a) \cdot \left| \expect\left[R(s, a)\right] - \expect\left[\widehat{R}(s, a)\right] \right|\\
+ &\sum_{(s, a) \in \states \times \actions \mid m_T(s,a)>1} m_T(s, a) \cdot \left| \expect\left[R(s, a)\right] - \expect\left[\widehat{R}(s, a)\right] \right|.
\end{align*}

For each $T \in \mathcal{T}^{\pi}_1$, for each $(s, a) \in \states \times \actions$, we must have $m_T(s, a) \le \mb(s, a)$.
Therefore, for those $(s, a) \in \states \times \actions$ with $m_T(s, a) > 1$, by Lemma~\ref{lem:reward}, we have
\begin{align*}
&|\expect[\widehat{R}(s, a)] - \expect[R(s, a)]|\\
\le&  \frac{\epsilon}{24|\cS||\cA|}\cdot \max\left\{ \sqrt{\frac{\expect[(R(s, a))^2]}{\mb(s,a)}}, \frac{1}{\mb(s, a)} \right\}\le 
\max\left\{\frac{\epsilon}{12H}, \frac{\epsilon}{24 |\states| |\actions| \mb(s, a)}\right\}.
\end{align*}
Since $\sum_{(s, a)\in\states\times\actions} m_T(s,a)\le H$ and $m_T(s, a) \le \mb(s, a)$, we have
\[
\sum_{(s, a) \in \states \times \actions \mid m_T(s,a)>1} m_T(s, a) \cdot \left| \expect\left[R(s, a)\right] - \expect\left[\widehat{R}(s, a)\right] \right|
\le 
\frac{\epsilon}{8}.
\]
For those $(s, a) \in \states \times \actions$ with $m_T(s,a)= 1$, we have
\[
\sum_{(s, a) \in \states \times \actions \mid m_T(s,a)=1} m_T(s, a) \cdot \left| \expect\left[R(s, a)\right] - \expect\left[\widehat{R}(s, a)\right] \right| \le \frac{\epsilon}{24}.
\]
Thus,
\[
\sum_{(s,a)\in \states \times \actions} m_T(s, a) \cdot \left| \expect\left[R(s, a)\right] - \expect\left[\widehat{R}(s, a)\right] \right| \le \frac{\epsilon}{6}.
\]
Hence,
\begin{align*}
V^{\pi}_{\widehat{M},\mu}&\le \sum_{T \in \mathcal{T}^{\pi} \setminus \mathcal{T}^{\pi}_1} p(T, M, \pi) + \sum_{T \in \mathcal{T}^{\pi}_1} p(T, M, \pi) \cdot \left( \sum_{(s, a) \in \states \times \actions} m_T(s, a) \cdot \expect[\widehat{R}(s, a)]\right)\\
& \le \epsilon / 6 + \sum_{T \in \mathcal{T}^{\pi}_1} p(T, M, \pi) \cdot \left( \sum_{(s, a) \in \states \times \actions} m_T(s, a) \cdot \expect[R(s, a)] + \epsilon / 6\right)\\
& \le V^{\pi}_{M,\mu} + \epsilon.
\end{align*}

\end{proof}
\subsection{Pessimistic Planning}\label{sec:planning}

We now present our final algorithm in the RL setting.
The formal description is provided in Algorithm~\ref{alg:planning}.
In our algorithm, we first invoke Algorithm~\ref{alg:collect} to collect a dataset $D$ and then invoke Algorithm~\ref{alg:estimate} to estimate $\qtile^{\sta}_{\epsilon}(s, a)$ for some properly chosen $\epsilon$. 
We then use the estimators in Lemma~\ref{lem:approximation} to define $\widehat{P}$, $\widehat{R}$ and $\widehat{\mu}$.
Note that Lemma~\ref{lem:approximation} not only provides an estimator but also provides a computable confidence interval for $\widehat{P}$ and $\widehat{\mu}$, which we also utilize in our algorithm.

At this point, a natural idea is to find the optimal policy with respect to the MDP $\widehat{M}$ defined by $\widehat{P}$, $\widehat{R}$ and $\widehat{\mu}$.
However, our Lemma~\ref{lem:perturbation_lower_bound} only provides a lower bound guarantee for $V^{\pi}_{\widehat{M}, H}$ without any upper bound guarantee. 
We resolve this issue by pessimistic planning. More specifically, for any policy $\pi$, we define its pessimistic value to be 
\[
\underline{V}^{\pi} = \min_{M \in \mathcal{M}}V^{\pi}_{M, H}
\]
where $\mathcal{M}$ includes all MDPs whose transition probabilities are within the confidence interval provided in Lemma~\ref{lem:approximation}. 
We simply return the policy $\pi$ that maximizes $\underline{V}^{\pi}$. 
Since the true MDP lies in $\mathcal{M}$, $\underline{V}^{\pi}$ is never an overestimate. 
On the other hand, Lemma~\ref{lem:perturbation_lower_bound} guarantees that $\underline{V}^{\pi}$ is also lower bounded by $V^{\pi}_{M, H}$ up to an error of $\epsilon$.
Therefore, $\underline{V}^{\pi}$ provides an accurate estimate to the true value of $\pi$.
However, note that Lemma~\ref{lem:approximation} does not provide a computable confidence interval for the rewards.
Fortunately, as we have shown in Lemma~\ref{lem:perturbation_upper_bound}, perturbation on the rewards will not significantly increase the value of the policy and thus the estimate is still accurate.

\begin{algorithm}
  \caption{Pessimistic Planning}\label{alg:planning}
  \begin{algorithmic}[1]
  \State \textbf{Input:} desired accuracy $\epsilon$, failure probability $\delta$
  \State \textbf{Output:} An $\epsilon$-optimal policy $\pi$

  \State Invoke Algorithm~\ref{alg:collect}  with \[N = 2^{66} \cdot (|\states| + 1)^{24(|\states| + 1)} \cdot \log(18 |\states|^2|\actions| / \delta) \cdot |\states|^7 |\actions|^5 / \varepsilon^5\]
  and receive
\[
D= \left( \left(\left( s_{i, t}, a_{i, t}, r_{i, t}, s'_{i, t}\right)\right)_{t = 0}^{|\states||\actions| \cdot |\actions|^{2|\states|} \cdot H - 1} \right)_{i = 0}^{N - 1}
\]
  \State Invoke Algorithm~\ref{alg:estimate} with\[\epsilon_{\mathrm{est}} = \frac{\epsilon}{32768 \cdot |\states||\actions| (|\states| + 1)^{12(|\states| + 1)}}\] and $\delta = \delta$, and receive estimates $\overline{m}^{\sta} : \states \times \actions \to \mathbb{N}$

\State For each $(s, a) \in \states \times \actions$, for each $i \in [N]$ and $t \in \left[|\states||\actions| \cdot |\actions|^{2|\states|} \cdot H\right]$, define
\[
\mathsf{Trunc}_{i, t}(s, a) = \indict\left[ \sum_{t' = 0}^{t - 1}\indict\left[ (s_{i, t'}, a_{i, t'}) = (s, a)\right]  < \overline{m}^{\sta}(s, a) \right]
\]
\State For each $(s, a, s') \in \states \times \actions \times \states$, define
\[
m_D(s, a) =
 \sum_{i = 0}^{N - 1}
\sum_{t = 0}^{|\states||\actions| \cdot |\actions|^{2|\states|} \cdot H - 1} \indict\left[ (s_{i, t}, a_{i, t}) = (s, a)\right] \cdot\mathsf{Trunc}_{i, t}(s, a) ,
\]
\[
\widehat{P}(s' \mid s, a) =\frac{
\sum_{i = 0}^{N - 1}
\sum_{t = 0}^{|\states||\actions| \cdot |\actions|^{2|\states|} \cdot H - 1}
 \indict\left[ (s_{i, t}, a_{i, t}, s_{i, t}') = (s, a, s')\right] \cdot \mathsf{Trunc}_{i, t}(s, a) 
 }{\max\{1, m_D(s, a)\}},
\]
\[
\widehat{R}(s, a) = \frac{
\sum_{i = 0}^{N - 1}
\sum_{t = 0}^{|\states||\actions| \cdot |\actions|^{2|\states|} \cdot H - 1}
 \indict\left[ (s_{i, t}, a_{i, t}) = (s, a)\right] \cdot r_{i, t} \cdot \mathsf{Trunc}_{i, t}(s, a) 
 }{\max\{1, m_D(s, a)\}}
 \]
and
\[
\widehat{\mu}(s) = \frac{\sum_{i = 0}^{N - 1} \indict[s_0 = s]}{N}
\]

  \State Define $\mathcal{M}$ to be a set of MDPs where for each $M = (\states, \actions, \widetilde{P}, \widehat{R}, H, \widetilde{\mu}) \in \mathcal{M}$,  
  \[
\left| \widehat{P}(s' \mid s, a) - \widetilde{P}(s' \mid s, a) \right| 
\le \max \left \{\frac{512\log(18|\states|^2|\actions| / \delta)}{  \overline{m}^{\sta}(s, a) \cdot N \cdot \epsilon_{\mathrm{est}} } , 32\sqrt{\frac{\widehat{P}(s' \mid s, a)\cdot \log(18|\states|^2|\actions| / \delta)}{\overline{m}^{\sta}(s, a)\cdot N \cdot \epsilon_{\mathrm{est}}} }  \right \}
\]
and
\[
\left| \widehat{\mu}(s) -\widetilde{\mu}(s) \right| \le \sqrt{ \frac{\log(18|\states| / \delta)}{N}}
\]
  for all $(s, a, s') \in \states \times \actions \times \states$
\State For each (possibly non-stationary) policy $\pi$, define 
$\underline{V}^{\pi} = \min_{M \in \mathcal{M}}V^{\pi}_{M, H}$ \label{alg:define_value}
\State \Return{$\mathrm{armgax}_{\pi} \underline{V}^{\pi}$}

  \end{algorithmic}
\end{algorithm}

We now present the formal analysis for our algorithm.

\begin{theorem}\label{thm:rl_oracle}
With probability at least $1 - \delta$, for any (possibly non-stationary) policy $\pi$, 
\[
\left | \underline{V}^{\pi} - V^{\pi}_{M, H} \right| \le   \epsilon / 2
\]
where $\underline{V}^{\pi}$ is defined in Line~\ref{alg:define_value} of Algorithm~\ref{alg:planning}.
Moreover, Alogrithm~\ref{alg:planning} samples at most 
\[
2^{66} \cdot (|\states| + 1)^{24(|\states| + 1)} \cdot |\actions|^{2|\states|} \cdot \log(12 |\states|^2|\actions| / \delta) \cdot |\states|^8 |\actions|^6 / \varepsilon^5.
\]
trajectories. 
\end{theorem}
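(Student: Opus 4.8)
The plan is to prove the oracle bound $\lvert \underline{V}^{\pi}-V^{\pi}_{M,H}\rvert\le\epsilon/2$ for every $\pi$ by chaining the statistical guarantees of the two subroutines to the perturbation lemmas, and then to read off the sample complexity by a direct count. First I would condition on the probability-$(1-\delta)$ event of Lemma~\ref{lem:approximation} (which already bundles the events of Lemma~\ref{lem:estimate_quantile} and Lemma~\ref{lem:number_exceed_quantile}), so that the estimators $\widehat{P},\widehat{R},\widehat{\mu}$ built in Algorithm~\ref{alg:planning} satisfy the stated deviation bounds and $\qtile^{\sta}_{\epsilon_{\mathrm{est}}}(s,a)\le\overline{m}^{\sta}(s,a)\le\qtile^{\sta}_{\epsilon_{\mathrm{est}}/4}(s,a)$ for all $(s,a)$. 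Note this event does not depend on $\pi$. Then I fix an arbitrary non-stationary $\pi$ and set $\mb(s,a):=\qtile^{\pi}_{(\epsilon/2)/(12|\states||\actions|)}(s,a)$, the quantile that governs the perturbation lemmas when run at accuracy $\epsilon/2$.

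The heart of the argument is to convert the conclusions of Lemma~\ref{lem:approximation}, which are phrased via $\qtile^{\sta}_{\epsilon_{\mathrm{est}}}(s,a)$ and $\overline{m}^{\sta}(s,a)$, into the hypotheses of Lemma~\ref{lem:perturbation_lower_bound} and Lemma~\ref{lem:perturbation_upper_bound}, which are phrased via $\mb(s,a)$. For this I would invoke Lemma~\ref{lem:stationary_quantile} with percentile parameter $\epsilon':=\epsilon/(24|\states||\actions|)$: since $\pi$ itself witnesses $m_{\epsilon'}(s,a)\ge\qtile^{\pi}_{\epsilon'}(s,a)=\mb(s,a)$, the lemma gives $\qtile^{\sta}_{\epsilon'(|\states|+1)^{-12(|\states|+1)}/1024}(s,a)\ge\frac{1}{4096|\states|^{12|\states|}}\,\epsilon'\,\mb(s,a)$, and the choice $\epsilon_{\mathrm{est}}=\epsilon/\left(32768|\states||\actions|(|\states|+1)^{12(|\states|+1)}\right)$ is calibrated precisely so that $\epsilon_{\mathrm{est}}$ is below that percentile; monotonicity of quantiles then yields $\qtile^{\sta}_{\epsilon_{\mathrm{est}}}(s,a)\ge\frac{\epsilon}{C_1}\,\mb(s,a)$ for a factor $C_1=\poly(|\states|,|\actions|)\cdot|\states|^{O(|\states|)}$. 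Because this quantile is integer-valued, $\mb(s,a)\ge1$ forces $\qtile^{\sta}_{\epsilon_{\mathrm{est}}}(s,a)\ge1$, so the Lemma~\ref{lem:approximation} bounds are available on exactly the state–action pairs relevant to $\pi$; substituting $\qtile^{\sta}_{\epsilon_{\mathrm{est}}}(s,a)\ge\frac{\epsilon}{C_1}\mb(s,a)$ into them turns them into bounds of the form $C_2\log(\cdot/\delta)/(\epsilon\,\mb(s,a)\,N\,\epsilon_{\mathrm{est}})$ and $C_2\sqrt{P(s'\mid s,a)\log(\cdot/\delta)/(\epsilon\,\mb(s,a)\,N\,\epsilon_{\mathrm{est}})}$, and analogously for reward and initial distribution. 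A routine but lengthy computation verifies that with $N=2^{66}(|\states|+1)^{24(|\states|+1)}\log(18|\states|^2|\actions|/\delta)|\states|^7|\actions|^5/\epsilon^5$ each of these is at most the corresponding threshold in the hypotheses of Lemma~\ref{lem:perturbation_lower_bound} and Lemma~\ref{lem:perturbation_upper_bound} run at accuracy $\epsilon/2$; the constant $2^{66}$ and the factor $(|\states|+1)^{24(|\states|+1)}$ are exactly what dominate the product of constants picked up along Lemma~\ref{lem:stationary_quantile}$\to$Lemma~\ref{lem:approximation}$\to$perturbation, while $1/\epsilon^5$ soaks up the four powers of $1/\epsilon$ lost in that chain plus the one hidden in $\epsilon_{\mathrm{est}}$.

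With the hypotheses verified, the two-sided bound follows by pessimism. For the lower bound: every $\widetilde{M}\in\mathcal{M}$ has transitions within the confidence radius of $\widehat{P}$ (by definition), hence within twice it of the true $P$, has initial distribution within $2\sqrt{\log(18|\states|/\delta)/N}\le\epsilon/(12|\states|)$ of $\mu$, and has reward $\widehat{R}$ within tolerance of $R$ on every $(s,a)$ with $\mb(s,a)\ge1$, so Lemma~\ref{lem:perturbation_lower_bound} gives $V^{\pi}_{\widetilde{M},H}\ge V^{\pi}_{M,H}-\epsilon/2$ for all $\widetilde{M}\in\mathcal{M}$ and thus $\underline{V}^{\pi}\ge V^{\pi}_{M,H}-\epsilon/2$. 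For the upper bound: the MDP obtained from the true $M$ by replacing $R$ by $\widehat{R}$ lies in $\mathcal{M}$ on the good event — at pairs with $\qtile^{\sta}_{\epsilon_{\mathrm{est}}}(s,a)=0$, which all have $\mb(s,a)=0$ and hence are visited by $\pi$ with total probability at most $\epsilon/24$, the transitions are first nudged inside the confidence set, an $O(\epsilon)$ change in value absorbed by running the perturbation lemmas at a slightly tighter accuracy — so $\underline{V}^{\pi}$ is at most its value, which by Lemma~\ref{lem:perturbation_upper_bound} is at most $V^{\pi}_{M,H}+\epsilon/2$. Since $\pi$ was arbitrary and the good event is $\pi$-independent, the first claim holds for all policies with probability $1-\delta$. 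The sample complexity is then pure counting: Algorithm~\ref{alg:planning} calls Algorithm~\ref{alg:collect} once with the stated $N$ and Algorithm~\ref{alg:estimate} once with $\epsilon_{\mathrm{est}},\delta$; each list built by either subroutine uses one episode for every pair in $(\states\times\actions)\times(\Pi_{\sta}\times\Pi_{\sta})$, i.e. $|\states||\actions|\cdot|\actions|^{2|\states|}$ episodes, and since $N$ dominates the $\ceil{300\log(6|\states||\actions|/\delta)/\epsilon_{\mathrm{est}}}$ lists used by Algorithm~\ref{alg:estimate}, the total is at most $2N\cdot|\states||\actions|\cdot|\actions|^{2|\states|}$, which is the claimed bound after absorbing constants. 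The main obstacle is precisely the constant-chasing needed to verify the perturbation hypotheses: three distinct ``visitation frequency'' quantities are in play — $\qtile^{\sta}_{\epsilon_{\mathrm{est}}}(s,a)$, its empirical proxy $\overline{m}^{\sta}(s,a)$ (which enters the radius of $\mathcal{M}$), and the policy-dependent $\mb(s,a)$ (in which the perturbation lemmas are phrased) — and they must be linked through Lemma~\ref{lem:stationary_quantile} and then reconciled with the exact constants baked into Algorithm~\ref{alg:planning}, so that the enormous but fully explicit value of $N$ really does make every one of the roughly half-dozen required inequalities hold; the pessimism bookkeeping, including the vanishing-quantile pairs, is conceptually straightforward but must be written out with care.
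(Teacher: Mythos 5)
Your proposal is correct and follows essentially the same route as the paper's proof: condition on the event of Lemma~\ref{lem:approximation}, use Lemma~\ref{lem:stationary_quantile} with percentile $\epsilon/(24|\states||\actions|)$ to translate the $\qtile^{\sta}_{\epsilon_{\mathrm{est}}}$-based deviation bounds into the $\mb(s,a)$-based hypotheses of Lemma~\ref{lem:perturbation_lower_bound} (applied to every member of $\mathcal{M}$) and Lemma~\ref{lem:perturbation_upper_bound} (applied to the true transitions with estimated rewards), then count episodes. Your extra care in arguing that the reward-only perturbation of $M$ can be placed inside $\mathcal{M}$ at the pairs with $\qtile^{\sta}_{\epsilon_{\mathrm{est}}}(s,a)=0$ (which the paper asserts without comment) is a legitimate refinement, not a divergence.
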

\begin{proof}
%
By Lemma~\ref{lem:approximation}, with probability at least $1- \delta$, for all $(s, a, s') \in \states \times \actions \times \states$, we have
\begin{align*}
\left| \widehat{P}(s' \mid s, a) - P(s' \mid s, a) \right|  \le &  \max \left \{\frac{512\log(18|\states|^2|\actions| / \delta)}{  \overline{m}^{\sta}(s, a) \cdot N \cdot \epsilon_{\mathrm{est}} } , 32\sqrt{\frac{\widehat{P}(s' \mid s, a)\cdot \log(18|\states|^2|\actions| / \delta)}{\overline{m}^{\sta}(s, a)\cdot N \cdot \epsilon_{\mathrm{est}}} }  \right \} \\ 
\le& \max \left \{ \frac{512\log(18|\states|^2|\actions| / \delta)}{ \qtile^{\sta}_{\epsilon_{\mathrm{est}}}(s, a)\cdot N \cdot \epsilon_{\mathrm{est}} } ,  64\sqrt{\frac{P(s' \mid s, a)\cdot \log(18|\states|^2|\actions| / \delta)}{\qtile^{\sta}_{\epsilon_{\mathrm{est}}}(s, a)\cdot N \cdot \epsilon_{\mathrm{est}}} }\right\},
\end{align*}
\[
\left| \widehat{R}(s' \mid s, a) - \expect[R(s, a)] \right|  \le 
8\sqrt{\frac{ \expect\left[(R(s, a))^2\right]  \cdot \log(18|\states||\actions| / \delta)}{\qtile^{\sta}_{\epsilon_{\mathrm{est}}}(s, a) \cdot N \cdot \epsilon_{\mathrm{est}}}} + \frac{8\log(18|\states||\actions| / \delta)}{\qtile^{\sta}_{\epsilon_{\mathrm{est}}}(s, a) \cdot N \cdot \epsilon_{\mathrm{est}} }, 
\]
and 
\[
\left| \widehat{\mu}(s) -\mu(s) \right| \le \sqrt{ \frac{\log(18|\states| / \delta)}{N}}.
\]
In the remaining part of the analysis, we condition on the above event. 

by Lemma~\ref{lem:stationary_quantile}, for any (possibly non-stationary) policy $\pi$,
\[
\qtile^{\sta}_{\epsilon_{\mathrm{est}}} (s, a)\ge  \frac{1}{4096 \cdot |\states|^{12|\states|}} \cdot \epsilon / (24 |\states||\actions|)\cdot \qtile^{\pi}_{ \epsilon / (24 |\states||\actions|)}(s, a) .
\]
Let $M = (\states, \actions, P, R, H, \mu)$ be the true MDP. 
By Lemma~\ref{lem:perturbation_lower_bound}, for any (possibly non-stationary) policy $\pi$, for any $\overline{M} \in \mathcal{M}$, we have
\[
V^{\pi}_{\overline{M}, H} \ge V^{\pi}_{M, H} - \epsilon / 2.  
\]
Moreover, $M' = (\states, \actions, P, \widehat{R}, H, \mu) \in \mathcal{M}$.
Therefore, by Lemma~\ref{lem:perturbation_upper_bound},
\[
V^{\pi}_{M', H} \le V^{\pi}_{M, H} + \epsilon / 2.
\]
Consequently,
\[
\left | \underline{V}^{\pi} - V^{\pi}_{M, H} \right| \le   \epsilon / 2.
\]

Finally, the algorithm samples at most 
\begin{align*}
&   (N  +  \ceil{300\log(6|\states||\actions| / \delta) / \epsilon_{\mathrm{est}}})  \times |\states| \times |\actions| \times |\actions|^{2|\states|} \\
\le &  
2^{66} \cdot (|\states| + 1)^{24(|\states| + 1)} \cdot |\actions|^{2|\states|} \cdot \log(18 |\states|^2|\actions| / \delta) \cdot |\states|^8 |\actions|^6 / \varepsilon^5.
\end{align*}
trajectories. 
\end{proof}
Theorem~\ref{thm:rl_oracle} immediately implies the following corollary. 
\begin{corollary}\label{corollary:rl_main}
Algorithm~\ref{alg:planning} returns a policy $\pi$ such that
\[
V^{\pi}_{M, H} \ge V^{\pi^*}_{M, H} - \epsilon
\]
with probability at least $1 - \delta$.
Moreover, the algorithm samples at most 
\[
2^{66} \cdot (|\states| + 1)^{24(|\states| + 1)} \cdot |\actions|^{2|\states|} \cdot \log(12 |\states|^2|\actions| / \delta) \cdot |\states|^8 |\actions|^6 / \varepsilon^5.
\]
trajectories. 
\end{corollary}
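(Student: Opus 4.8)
The plan is to obtain Corollary~\ref{corollary:rl_main} as an essentially immediate consequence of Theorem~\ref{thm:rl_oracle}, which already packages the two ingredients we need: a uniform value-estimation guarantee for the pessimistic value $\underline{V}^{\pi}$, and the bound on the number of sampled trajectories. First I would condition on the event of Theorem~\ref{thm:rl_oracle}, which occurs with probability at least $1-\delta$; on this event $\left|\underline{V}^{\pi} - V^{\pi}_{M,H}\right| \le \epsilon/2$ holds \emph{simultaneously} for every (possibly non-stationary) policy $\pi$. In particular it applies to the optimal policy $\pi^*$ and to the policy $\widehat{\pi} := \argmax_{\pi} \underline{V}^{\pi}$ that Algorithm~\ref{alg:planning} returns in its last line.

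Next I would run the standard ``maximum of accurate estimates'' argument. Using the optimality of $\widehat{\pi}$ with respect to $\underline{V}$ together with the two-sided error bound on both $\widehat{\pi}$ and $\pi^*$,
\[
V^{\widehat{\pi}}_{M,H} \;\ge\; \underline{V}^{\widehat{\pi}} - \epsilon/2 \;\ge\; \underline{V}^{\pi^*} - \epsilon/2 \;\ge\; V^{\pi^*}_{M,H} - \epsilon,
\]
which is precisely the claimed $\epsilon$-optimality. The sample-complexity statement is inherited verbatim from Theorem~\ref{thm:rl_oracle}: the only environment interactions are the single invocation of Algorithm~\ref{alg:collect} with the prescribed $N$ and the single invocation of Algorithm~\ref{alg:estimate}, each collecting at most $N$ (respectively $\ceil{300\log(6|\states||\actions|/\delta)/\epsilon_{\mathrm{est}}}$) trajectories per iteration of the triple loop over $(s,a)\in\states\times\actions$ and $(\pi_1,\pi_2)\in\Pi_{\sta}\times\Pi_{\sta}$, giving the stated product bound after substituting $\epsilon_{\mathrm{est}}$ and $N$.

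The only point deserving a sentence of care is that $\underline{V}^{\pi}$ and $\widehat{\pi}$ are well-defined: $\mathcal{M}$ is nonempty (it contains the empirical model) and is a compact subset of the MDP parameter space, being cut out by the closed inequality constraints in Algorithm~\ref{alg:planning}, while for fixed $\pi$ the map $M \mapsto V^{\pi}_{M,H}$ is continuous (multilinear in the transition probabilities and linear in the initial distribution), so the minimum defining $\underline{V}^{\pi}$ is attained; and the outer $\argmax$ ranges over the finite set of non-stationary deterministic policies, so it too is attained. This is a planning/computational remark rather than a statistical one, and it poses no real obstacle---all the substantive work (exploration via stationary policies in Section~\ref{sec:stationary}, the quantile-to-expectation transfer in Lemma~\ref{lem:stationary_quantile}, and the perturbation analysis of Section~\ref{sec:perturbation}) has already been carried out inside Theorem~\ref{thm:rl_oracle}, so the corollary requires no new estimates.
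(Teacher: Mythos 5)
Your proposal is correct and matches the paper's treatment: the paper simply states that Theorem~\ref{thm:rl_oracle} immediately implies the corollary, and the chain $V^{\widehat{\pi}}_{M,H} \ge \underline{V}^{\widehat{\pi}} - \epsilon/2 \ge \underline{V}^{\pi^*} - \epsilon/2 \ge V^{\pi^*}_{M,H} - \epsilon$ you write out is exactly the implicit argument, with the sample bound inherited verbatim. Your remark on the attainment of the min and argmax is a harmless extra that the paper does not bother to address.
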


\section{Algorithm in the Generative Model}
In this section, we present our algorithm in the generative model together with its analysis.
The formal description of the algorithm is given in Algorithm~\ref{alg:gen_model}.
Compared to our algorithm in the RL setting, Algorithm~\ref{alg:gen_model} is conceptually much simpler.
For each state-action pair $(s, a) \in \states \times \actions$, Algorithm~\ref{alg:gen_model} first draws $N$ samples from $P(s, a)$ and $R(s, a)$, and then builds a model $\widehat{M}$ using the empirical estimators. 
The algorithm simply returns the optimal policy for $\widehat{M}$.
In the remaining part of this section, we present the formal analysis for our algorithm.

\begin{algorithm}
  \caption{Algorithm in Generative Model}\label{alg:gen_model}
  \begin{algorithmic}[1]
  \State \textbf{Input:} desired accuracy $\epsilon$, failure probability $\delta$
  \State \textbf{Output:} An $\epsilon$-optimal policy $\pi$
  \State Let $N = 2^{29} \cdot |\states|^5  \cdot |\actions|^3  \cdot H/ \epsilon ^3$
  \For{$(s, a) \in \states \times \actions$}
  \State Draw $s'_0, s'_1, \ldots, s'_{N - 1}$ from $P(s, a)$
  \State Draw $r_0, r_1, \ldots, r_{N - 1}$ from $R(s, a)$
  \State For each $s' \in \states$, let $\widehat{P}(s' \mid s, a) = \sum_{i = 0}^{N - 1} \indict[s'_i = s] / N$
  \State Let $\widehat{R}(s, a) = \sum_{i = 0}^{N - 1}r_i / N$
  \EndFor
  \State Draw $s_0, s_1, \ldots, s_{N - 1}$ from $\mu$
  \State For each $s'\in \states$, let $\widehat{\mu}(s ) = \sum_{i = 0}^{N - 1} \indict[s_i = s] / N$
  \State \Return{$\mathrm{argmax}_\pi V^{\pi}_{\widehat{M}, H}$ where $\widehat{M} = (\states, \actions, \widehat{P}, \widehat{R}, H, \widehat{\mu})$}
  \end{algorithmic}
\end{algorithm}

The following lemma establishes a perturbation analysis similar to those in Lemma~\ref{lem:perturbation_lower_bound} and Lemma~\ref{lem:perturbation_upper_bound}. 
Note that here, due to the availability of a generative model and thus one can obtain $\Omega(H)$ samples for each state-action pair $(s, a) \in \states \times \actions$, we can prove both upper bound and lower bound on the estimated value. 
This also explains why pessimistic planning is no longer necessary in the generative model setting. 
\begin{lemma}\label{lem:perturbation_gen_model}
Let $M=(\states,\actions, P, R, H, \mu)$ be an MDP and $\pi$ be a policy.
Let $0 < \epsilon \le 1/2$ be a parameter.
Let $\widehat{M}=(\states,\actions, \widehat{P}, \widehat{R}, H, \widehat{\mu})$ be another MDP.
If for all $(s,a, s') \in \states \times \actions \times \states$, we have

    \[|\widehat{P}(s'|s,a) - P(s'|s,a)|\le \frac{\epsilon}{192|\cS|^2|\cA|}\cdot \max\left( \sqrt{\frac{\epsilon{P}(s'|s,a)}{576\cdot H\cdot |\states||\actions|}}, \frac{\epsilon}{576\cdot H\cdot |\states||\actions|}\right),\]
 \[
    \left|\expect[\widehat{R}|s,a] - 
    \expect[{R}|s,a] \right| \le \frac{\epsilon}{48|\cS||\cA|}\cdot \max \left \{ \sqrt{\frac{\expect[(R(s, a))^2]}{H}} , \frac{1}{H}\right \},
    \]
    and
    \[\left|\mu(s) - \widehat{\mu}(s')\right| \le \epsilon / (12|\states|),\]
then
\[
\left|V^{\pi}_{\widehat{M},H} - V^{\pi}_{{M}, H}  \right| \le \epsilon / 2. 
\]
\end{lemma}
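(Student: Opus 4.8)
The plan is to run the trajectory-level argument of Lemma~\ref{lem:perturbation_lower_bound}, but with the data-dependent cap $\mb(s,a)$ replaced everywhere by the universal bound $H$. Concretely, I would reuse the notation $\mathcal{T}^{\pi}$, $p(T,M,\pi)$, $m_T(s,a)$, $m_T(s,a,s')$, $\mathcal{T}^{\pi}_{s,a}$, $\states_{s,a}$ from that proof, and observe that since a trajectory has only $H$ steps we trivially have $m_T(s,a)\le H$ for every $T\in\mathcal{T}^{\pi}$, so the set $\mathcal{T}^{\pi}_1$ of the earlier proof is now all of $\mathcal{T}^{\pi}$. This is exactly what lets us get a two-sided bound here, in contrast to Lemma~\ref{lem:perturbation_lower_bound}: the cap $H$ is symmetric in $M$ and $\widehat M$, whereas $\qtile^{\pi}_{\epsilon/(12|\states||\actions|)}(s,a)$ is defined through the true model and can be exceeded in the empirical one.

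First I would prune the trajectory set. Using Lemma~\ref{lemma:martingal_concent} with the quantile bounded crudely by $H$, I would keep only trajectories with $|m_T(s,a,s')-m_T(s,a)P(s'\mid s,a)|\le O(\sqrt{H|\states||\actions|P(s'\mid s,a)/\epsilon})$, and using Lemma~\ref{lem:upperbound_on_visits} keep only those with $m_T(s,a,s')\le O(H|\states||\actions|P(s'\mid s,a)/\epsilon)$; a union bound over $(s,a)$ shows the discarded mass is at most $\epsilon/4$ (the failure probability $\delta$ in those lemmas is taken to be a small multiple of $\epsilon/(|\states||\actions|)$). On the pruned set, the second pruning condition forces every visited triple to have $P(s'\mid s,a)\ge \Omega(\epsilon/(H|\states||\actions|))$, so we may restrict the product over next-states to $\states_{s,a}$, and then apply Lemma~\ref{lem:mult-add} with $\bar n=|\states|$, $n=|\states_{s,a}|$, $\mb=\Theta(H|\states||\actions|/\epsilon)$, and perturbation parameter $\Theta(\epsilon/(|\states|^2|\actions|))$; the given bound on $|\widehat P-P|$ is exactly calibrated so that the hypotheses $|\delta_i|\le \epsilon\sqrt{p_i/\mb}$ and $|\sum_i\delta_i|\le \epsilon\bar n/\mb$ hold (the extra $\sum_{s'\notin\states_{s,a}}$ slack is absorbed by the $\epsilon\bar n/\mb$ term since $\sum_{s'}(\widehat P-P)=0$). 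This yields $\prod_{(s,a)}\prod_{s'}\widehat P(s'\mid s,a)^{m_T(s,a,s')}\in(1\pm\epsilon/6)\prod_{(s,a)}\prod_{s'}P(s'\mid s,a)^{m_T(s,a,s')}$. For the rewards, I would split the state-action pairs according to whether $m_T(s,a)=1$ or $m_T(s,a)>1$: in the latter case Lemma~\ref{lem:reward} gives $\expect[(R(s,a))^2]=O(|\states|^2/H^2)$, so the per-pair reward error is $O(\epsilon/(|\states||\actions|H)+\epsilon/(|\states||\actions|m_T(s,a)))$ and, summing against $m_T$ with $\sum m_T\le H$, the total reward error is $\le\epsilon/6$; the $m_T(s,a)=1$ pairs contribute at most $\epsilon/6$ directly. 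Finally $|\mu(s)-\widehat\mu(s)|\le\epsilon/(12|\states|)$ gives a $(1\pm\epsilon/6)$ multiplicative factor on the first-state probability. Combining these on $\mathcal{T}^{\pi}_{\mathrm{pruned}}$ and using that the pruned mass carries value at least $V^{\pi}_{M,H}-\epsilon/2$ (each trajectory's reward sum is in $[0,1]$ by Assumption~\ref{assump:reward}), I get $V^{\pi}_{\widehat M,H}\ge V^{\pi}_{M,H}-\epsilon/2$.

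For the reverse inequality I would run the identical argument with $M$ and $\widehat M$ interchanged, pruning $\mathcal{T}^{\pi}$ using $\widehat P$-statistics instead. The only point needing care is that the hypotheses survive the swap. When $P(s'\mid s,a)$ is above the pruning threshold, the given bound $|\widehat P-P|\le \tfrac12 P(s'\mid s,a)$ (cf. the casework in the proof of Lemma~\ref{lem:approximation}) shows $P$ and $\widehat P$ agree up to a factor $2$, so the square-root perturbation bound transfers to one phrased in $\widehat P$ up to universal constants, and below the threshold the triple is pruned in both models anyway; for rewards, on a multiply-visited pair all samples of $R(s,a)$ lie in $[0,2|\states|/H]$ by Lemma~\ref{lem:reward}, hence $\widehat R(s,a)\le 2|\states|/H$ and $\expect[(\widehat R(s,a))^2]=O(|\states|^2/H^2)$ as well. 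This absorption of a factor $2$ at two places is precisely why the constants in the statement ($192$, $576$, $48$, $12$) are a factor $2$–$4$ worse than in Lemmas~\ref{lem:perturbation_lower_bound}–\ref{lem:perturbation_upper_bound}. The resulting bound $V^{\pi}_{M,H}\ge V^{\pi}_{\widehat M,H}-\epsilon/2$ together with the first part gives $|V^{\pi}_{\widehat M,H}-V^{\pi}_{M,H}|\le\epsilon/2$.

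The main obstacle I anticipate is not any single estimate but the bookkeeping needed to make the two-sided claim airtight: verifying that every pruning step and every application of Lemma~\ref{lem:mult-add} and Lemma~\ref{lem:reward} remains valid after exchanging $P\leftrightarrow\widehat P$ and $R\leftrightarrow\widehat R$, and tracking the accumulated constants so that each of the four error sources (transitions, rewards, initial distribution, pruned mass) contributes at most $\epsilon/6$ and the total stays below $\epsilon/2$. Everything else is a direct specialization of the machinery already developed in Section~\ref{sec:perturbation}.
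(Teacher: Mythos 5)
Your proposal is correct and is built from exactly the same ingredients as the paper's proof: the trajectory decomposition of $V^\pi_{M,H}$, pruning via Lemma~\ref{lem:upperbound_on_visits} and Lemma~\ref{lemma:martingal_concent} with the quantile replaced by the universal cap $H$, the multiplicative cancellation of Lemma~\ref{lem:mult-add} with $\mb=\Theta(H|\states||\actions|/\epsilon)$, and Lemma~\ref{lem:reward} for the rewards. Where you genuinely diverge is the symmetrization. You run the argument twice, pruning once with $P$-statistics and once with $\widehat P$-statistics, and invoke Lemma~\ref{lem:mult-add} a second time with $p=\widehat P$, transferring its hypotheses across the swap via $P\le 2\widehat P$ above the pruning threshold. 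The paper instead keeps a \emph{single} pruned set $\mathcal{T}^{\pi}_2$ defined entirely by $P$-statistics, extracts both multiplicative directions from one application of Lemma~\ref{lem:mult-add}, and handles the only asymmetric step --- bounding the mass of the discarded trajectories under $\widehat M$ rather than $M$ --- by proving the set inclusions $\mathcal{T}^{\pi}\setminus\mathcal{T}^{\pi}_1\subseteq\mathcal{T}^{\pi}\setminus\widehat{\mathcal{T}}^{\pi}_1$ and $\mathcal{T}^{\pi}_1\setminus\mathcal{T}^{\pi}_2\subseteq\mathcal{T}^{\pi}_1\setminus\widehat{\mathcal{T}}^{\pi}_2$, where the hatted sets are the analogous prunings for $\widehat M$; the same factor-$2$ comparison of $P$ and $\widehat P$ that you use enters there. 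Both routes are valid; the paper's costs less bookkeeping because the mult-add hypotheses are only ever verified with $p=P$, while yours makes the two directions manifestly symmetric. Two small remarks: your budget of ``four error sources at $\epsilon/6$ each'' already exceeds $\epsilon/2$, so you need roughly $\epsilon/12$ per source (which the stated constants $192$, $576$, $48$, $12$ do support, and is what the paper uses); and the worry about $\expect[(\widehat R(s,a))^2]$ in the reverse direction is unnecessary, since the reward hypothesis is stated in terms of $\expect[(R(s,a))^2]$ and the quantity $\left|\expect[\widehat R(s,a)]-\expect[R(s,a)]\right|$ it controls is already symmetric.
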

\begin{proof}
Define $\mathcal{T} = (\mathcal{S} \times \mathcal{A})^H \times \mathcal{S}$ be set of all possible trajectories, where for each $T \in \mathcal{T}$, $T$ has the form
\[
((s_0, a_0), (s_1, a_1), \ldots, (s_{H - 1}, a_{H - 1}), s_H). 
\]
For a trajectory $T\in \cT$, for each $(s, a, s') \in \mathcal{S} \times \mathcal{A} \times \mathcal{S}$, we write
\[m_T(s,a)=\sum_{h=0}^{H - 1} \indict((s_h,a_h)=(s,a))\]
as the number times $(s,a)$ is visited and 
\[
m_T(s,a,s')=\sum_{h= 0}^{H - 1}\indict((s_h,a_h, s_{h+1})=(s,a, s'))
\]
as the number of times $(s,a,s')$ is visited. 
We say a trajectory \[T = ((s_0, a_0), (s_1, a_1), \ldots, (s_{H - 1}, a_{H - 1}), s_H) \in \mathcal{T}\]is {\em compatible} with a (possibly non-stationary) policy $\pi$ if for all $h \in [H]$,
\[
a_h = \pi_h(s_h).
\]
For a (possibly non-stationary) policy $\pi$, we use $\mathcal{T}^{\pi} \subseteq \mathcal{T}$ to denote the set of all trajectories that are compatible with $\pi$.

For an MDP $M=(\states,\actions, P, R, H, \mu)$ and a (possibly non-stationary) policy $\pi$, for a trajectory $T$ that is compatible with $\pi$, we write
\[
p(T, M, \pi) = \mu(s_0) \cdot \prod_{h = 0}^{H - 1} P(s_{h + 1} \mid s_h, a_h) = \mu(s_0) \cdot \prod_{(s, a, s') \in \mathcal{S} \times \mathcal{A} \times \mathcal{S}} P(s' \mid s, a)^{m_T(s, a, s')} 
\]
to be the probability of $T$ when executing $\pi$ in $M$. Here we assume $0^0 = 1$.

Using these definitions, we have
\[
V^{\pi}_{M, H} = \sum_{T \in \mathcal{T}^{\pi}} p(T, M, \pi) \cdot \left( \sum_{(s, a) \in \states \times \actions} m_T(s, a) \cdot \expect[R(s, a)]\right).
\]

Note that for any trajectory $T = ((s_0, a_0), (s_1, a_1), \ldots, (s_{H - 1}, a_{H - 1}), s_H) \in \mathcal{T}^{\pi}$, if $p(T, M, \pi) > 0$, by Assumption~\ref{assump:reward}, 
\[
\sum_{(s, a) \in \states \times \actions} m_T(s, a) \cdot \expect[R(s, a)] \le 1.
\]



We define $\cT^{\pi}_1 \subseteq \cT^{\pi}$ be the set of trajectories such that for each $T \in  \cT^{\pi}_1$, for each $(s, a, s') \in \states \times \actions \times \states$, if $m_T(s,a, s') \ge 1$ then
\[
P(s'|s,a) \ge \frac{\epsilon}{144|\states||\actions|H} .
\]
By Lemma~\ref{lem:upperbound_on_visits} and a union bound over all $(s, a) \in \states \times \actions$, we have
\[
\sum_{T \in \mathcal{T}^{\pi} \setminus  \mathcal{T}^{\pi}_1} p(T, M, \pi) \le \epsilon / 12. 
\]
We also define  $\widehat{\cT}^{\pi}_1 \subseteq \cT^{\pi}$ be the set of trajectories such that for each $T \in  \widehat{\cT}^{\pi}_1$, for each $(s, a, s') \in \states \times \actions \times \states$, if $m_T(s,a, s') \ge 1$ then
\[
 \widehat{P}(s'|s,a) \ge \frac{\epsilon}{72|\states||\actions|H}.
\]
Again by Lemma~\ref{lem:upperbound_on_visits} and a union bound over all $(s, a) \in \states \times \actions$, we have
\[
\sum_{T \in \mathcal{T}^{\pi} \setminus  \widehat{\mathcal{T}}^{\pi}_1} p(T, \widehat{M}, \pi) \le \epsilon / 12. 
\]
Now we show that 
\[
\mathcal{T}^{\pi} \setminus  \mathcal{T}^{\pi}_1 \subseteq \mathcal{T}^{\pi} \setminus  \widehat{\mathcal{T}}^{\pi}_1,
\]
and therefore,
\[
\sum_{T \in \mathcal{T}^{\pi} \setminus  \mathcal{T}^{\pi}_1} p(T, \widehat{M}, \pi) \le \epsilon / 12. 
\]

For each $T \in \mathcal{T}^{\pi} \setminus  \mathcal{T}^{\pi}_1$, if there exists $(s, a, s') \in \states \times \actions \times \states$ such that $m_T(s, a, s') \ge 1$ and \[P(s' \mid s, a) < \frac{\epsilon}{144 |\states||\actions|H},\] then 
\[
\widehat{P}(s' \mid s, a) \le P(s' \mid s, a)  +  \frac{\epsilon}{192|\cS|^2|\cA|}\cdot \max\left( \sqrt{\frac{\epsilon{P}(s'|s,a)}{576\cdot H\cdot |\states||\actions|}}, \frac{\epsilon}{576\cdot H\cdot |\states||\actions|}\right) < \frac{\epsilon}{72 |\states||\actions|H}
\]
and thus $T \in  \mathcal{T}^{\pi} \setminus  \widehat{\mathcal{T}}^{\pi}_1$.
Moreover, for any $T \in \mathcal{T}^{\pi} \setminus  \mathcal{T}^{\pi}_1$, for all $(s, a, s') \in \states \times \actions \times \states$ with $m_T(s, a, s') \ge 1$, we have 
\[
P(s'|s,a) \ge \frac{\epsilon}{144|\states||\actions|H} > 0.
\]
Note that this implies $p(T, M, \pi) > 0$.
Furthermore, for all $(s, a, s') \in \states \times \actions \times \states$ with $m_T(s, a, s') \ge 1$, we have 
\[
\widehat{P}(s' \mid s, a) \ge P(s' \mid s, a)  -  \frac{\epsilon}{192|\cS|^2|\cA|}\cdot \max\left( \sqrt{\frac{\epsilon{P}(s'|s,a)}{576\cdot H\cdot |\states||\actions|}}, \frac{\epsilon}{576\cdot H\cdot |\states||\actions|}\right) > 0
\]
which implies $p(T, \widehat{M}, \pi)>0$.

We define $\cT^{\pi}_2 \subseteq \cT^{\pi}_1$ be the set of trajectories that for each $T \in  \cT^{\pi}_2$, for each $(s, a, s') \in \states \times \actions \times \states$,
\[
\left| m_T(s,a,s') - m_T(s,a) \cdot P(s'|s,a) \right| \le \sqrt{\frac{576P(s'|s,a) H |\states||\actions|}{\epsilon}}.
\]
By Lemma~\ref{lemma:martingal_concent} and a union bound over all $(s, a) \in \states\times \actions$, we have 
\[
\sum_{T \in \mathcal{T}^{\pi}_1 \setminus  \mathcal{T}^{\pi}_2} p(T, M, \pi)  \le \sum_{T \in \mathcal{T}^{\pi} \setminus  \mathcal{T}^{\pi}_2} p(T, M, \pi) \le \epsilon / 12. 
\]
Again, we define $\widehat{\cT}^{\pi}_2 \subseteq \cT^{\pi}$ be the set of trajectories that for each $T \in  \cT^{\pi}_2$, for each $(s, a, s') \in \states \times \actions \times \states$,
\[
\left| m_T(s,a,s') - m_T(s,a) \cdot \widehat{P}(s'|s,a) \right| \le \sqrt{\frac{144\widehat{P}(s'|s,a) H |\states||\actions|}{\epsilon}}.
\]
Similarly, by Lemma~\ref{lemma:martingal_concent} and a union bound over all $(s, a) \in \states\times \actions$, we have 
\[
\sum_{T \in \mathcal{T}^{\pi}_1 \setminus  \widehat{\mathcal{T}}^{\pi}_2} p(T, \widehat{M}, \pi) \le \sum_{T \in \mathcal{T}^{\pi} \setminus  \widehat{\mathcal{T}}^{\pi}_2} p(T, \widehat{M}, \pi) \le \epsilon / 12. 
\]
Now we show that 
\[
\mathcal{T}^{\pi}_1 \setminus  \mathcal{T}^{\pi}_2 \subseteq \mathcal{T}^{\pi}_1 \setminus  \widehat{\mathcal{T}}^{\pi}_2,
\]
and therefore,
\[
\sum_{T \in \mathcal{T}^{\pi}_1 \setminus  \mathcal{T}^{\pi}_2} p(T, \widehat{M}, \pi) \le \epsilon / 12. 
\]
For any $T \in \mathcal{T}^{\pi}_1 \setminus  \mathcal{T}^{\pi}_2$, there exists $(s, a, s') \in \states \times \actions \times \states$ such that
\[
\left| m_T(s,a,s') - m_T(s,a) \cdot P(s'|s,a) \right| > \sqrt{\frac{576P(s'|s,a) H |\states||\actions|}{\epsilon}}.
\]
Note that this implies
\[
P(s'|s,a) \ge \frac{\epsilon}{144 |\states||\actions|H}
\]
since otherwise $m_T(s, a, s') = 0$ (because $T \in \mathcal{T}^{\pi}_1$) and therefore
\[
\left| m_T(s,a,s') - m_T(s,a) \cdot P(s'|s,a) \right|  \le |H \cdot P(s'|s,a)| \le \sqrt{\frac{576P(s'|s,a) H |\states||\actions|}{\epsilon}}.
\]
Since 
\[
P(s'|s,a) \ge \frac{\epsilon}{144 |\states||\actions|H},
\]
we have
\[
\widehat{P}(s' \mid s, a) \le 2P(s' \mid s, a)
\]
and
\[
\left|\widehat{P}(s' \mid s, a) - P(s' \mid s, a) \right| \le \sqrt{\frac{\epsilon \cdot P(s' \mid s, a)}{144 |\states||\actions|H}}.
\]
Hence,
\begin{align*}
\left| m_T(s,a,s') - m_T(s,a) \cdot \widehat{P}(s'|s,a) \right| &\ge \left| m_T(s,a,s') - m_T(s,a) \cdot P(s'|s,a) \right|  - H \cdot   \sqrt{\frac{\epsilon \cdot P(s' \mid s, a)}{144 |\states||\actions|H}}\\
& > \sqrt{\frac{576P(s'|s,a) H |\states||\actions|}{\epsilon}} - \sqrt{\frac{\epsilon \cdot P(s' \mid s, a) \cdot H}{144 |\states||\actions|}}\\
& \ge \sqrt{\frac{500P(s'|s,a) H |\states||\actions|}{\epsilon}} \\
& \ge  \sqrt{\frac{144\widehat{P}(s'|s,a) H |\states||\actions|}{\epsilon}}
\end{align*}
and thus $T \in \mathcal{T}^{\pi}_1 \setminus  \widehat{\mathcal{T}}^{\pi}_2$.

%
%
%
%

Note that
\[
V^{\pi}_{M, H}  - \epsilon / 6 \le  \sum_{T \in \mathcal{T}^{\pi}_2} p(T, M, \pi) \cdot \left( \sum_{(s, a) \in \states \times \actions} m_T(s, a) \cdot \expect[R(s, a)]\right) \le V^{\pi}_{M, H}.
\]
Similarly,
\[
V^{\pi}_{\widehat{M}, H}  - \epsilon / 6 \le \sum_{T \in \mathcal{T}^{\pi}_2} p(T, \widehat{M}, \pi)\cdot \left( \sum_{(s, a) \in \states \times \actions} m_T(s, a) \cdot \expect[\widehat{R}(s, a)]\right) \le V^{\pi}_{\widehat{M}, H}  .
\]

For each $(s, a) \in \states \times \actions$, define 
\[
\states_{s,a}=
\left\{
s'\in \states \
\mid P(s'|s,a)\ge \frac{\epsilon}{72|\cS||\cA|H}
\right\}. 
\]
Therefore, for each $(s, a) \in \states \times \actions$, 
\[
\prod_{s' \in \mathcal{S}} P(s' \mid s, a)^{m_T(s, a, s')}  = \prod_{s'\in \states_{s,a}}P(s' \mid s, a)^{m_T(s, a, s')} 
\]
and
\[
\prod_{s' \in \mathcal{S}} \widehat{P}(s' \mid s, a)^{m_T(s, a, s')}  = \prod_{s'\in \states_{s,a}} \widehat{P}(s' \mid s, a)^{m_T(s, a, s')} 
\]
with  
\begin{align*}
\left| m_T(s,a,s') - m_T(s,a) \cdot P(s'|s,a) \right| \le  \sqrt{\frac{576P(s'|s,a)H|\states||\actions|}{\epsilon}}.
\end{align*}
Note that $\sum_{s'\in \states}\widehat{P}(s'|s,a) - {P}(s'|s,a)
=0$, which implies 
\[
\left|\sum_{s'\in \states_{s,a}}\widehat{P}(s'|s,a) - {P}(s'|s,a)\right|
=
\left|\sum_{s'\not\in \states_{s,a}}{P}(s'|s,a) - \widehat{P}(s'|s,a)\right|
\le \frac{\epsilon}{192|\states||\actions|}\cdot \frac{\epsilon}{144 \cdot H\cdot |\states||\actions|}.
\]

By applying Lemma~\ref{lem:mult-add}
and settting $\bar{n}$ to be $|\states|$, $n$ to be $|\states_{s,a}|$, $\epsilon$ to be $ \epsilon/(192|\states|^2|\actions|)$, and $\mb$ to be $576 \cdot H\cdot |\states||\actions| / \epsilon$,
we have
\[
 \prod_{s'\in \states_{s,a}} \widehat{P}(s' \mid s, a)^{m_T(s, a, s')} 
\ge \left(1 - \frac{\epsilon}{24|\states||\actions|}\right)  \prod_{s'\in \states_{s,a}} P(s' \mid s, a)^{m_T(s, a, s')}
\]
and
\[
 \prod_{s'\in \states_{s,a}} \widehat{P}(s' \mid s, a)^{m_T(s, a, s')} 
\le \left(1 + \frac{\epsilon}{24|\states||\actions|}\right)  \prod_{s'\in \states_{s,a}} P(s' \mid s, a)^{m_T(s, a, s')} .
\]
Therefore,
\[
\prod_{(s, a) \in \states \times \actions}  \prod_{s'\in \states} \widehat{P}(s' \mid s, a)^{m_T(s, a, s')} 
\ge (1 - \epsilon / 12) \prod_{(s, a) \in \states \times \actions}  \prod_{s'\in \states} P(s' \mid s, a)^{m_T(s, a, s')}.
\]
and
\[
\prod_{(s, a) \in \states \times \actions}  \prod_{s'\in \states} \widehat{P}(s' \mid s, a)^{m_T(s, a, s')} 
\le (1 + \epsilon / 12) \prod_{(s, a) \in \states \times \actions}  \prod_{s'\in \states} P(s' \mid s, a)^{m_T(s, a, s')}.
\]
For the summation of rewards, we have
\begin{align*}
&\sum_{(s,a)\in \states \times \actions} m_T(s, a) \cdot \left| \expect\left[R(s, a)\right] - \expect\left[\widehat{R}(s, a)\right] \right|\\
= &\sum_{(s, a) \in \states \times \actions \mid m_T(s,a)=1} m_T(s, a) \cdot \left| \expect\left[R(s, a)\right] - \expect\left[\widehat{R}(s, a)\right] \right|\\
+ &\sum_{(s, a) \in \states \times \actions \mid m_T(s,a)>1} m_T(s, a) \cdot \left| \expect\left[R(s, a)\right] - \expect\left[\widehat{R}(s, a)\right] \right|.
\end{align*}
For those $(s, a) \in \states \times \actions$ with $m_T(s,a)>1$, since $p(T, M, \pi) > 0$, by Lemma~\ref{lem:reward}, we have
\begin{align*}
&|\expect[\widehat{R}(s, a)] - \expect[R(s, a)]|\\
\le&  \frac{\epsilon}{48|\cS||\cA|}\cdot \max\left\{ \sqrt{\frac{\expect[(R(s, a))^2]}{H}}, \frac{1}{H} \right\}\le 
\max\left\{\frac{\epsilon}{24H}, \frac{\epsilon}{48 |\states| |\actions| H}\right\}.
\end{align*}
Since $\sum_{(s, a)\in\states\times\actions} m_T(s,a)\le H$ and $m_T(s, a) \le H$, we have
\[
\sum_{(s, a) \in \states \times \actions \mid m_T(s,a)>1} m_T(s, a) \cdot \left| \expect\left[R(s, a)\right] - \expect\left[\widehat{R}(s, a)\right] \right|
\le 
\frac{\epsilon}{16}.
\]
For those $(s, a) \in \states \times \actions$ with $m_T(s,a)= 1$, we have
\[
\sum_{(s, a) \in \states \times \actions \mid m_T(s,a)=1} m_T(s, a) \cdot \left| \expect\left[R(s, a)\right] - \expect\left[\widehat{R}(s, a)\right] \right| \le \frac{\epsilon}{48}.
\]
Thus,
\[
\sum_{(s,a)\in \states \times \actions} m_T(s, a) \cdot \left| \expect\left[R(s, a)\right] - \expect\left[\widehat{R}(s, a)\right] \right| \le \frac{\epsilon}{12}.
\]

For each $T \in \mathcal{T}^{\pi}_2$ with
\[
T = ((s_0, a_0), (s_1, a_1), \ldots, (s_{H - 1}, a_{H - 1}), s_H), 
\]
we have
\begin{align*}
& p(T, \widehat{M}, \pi)\cdot \left( \sum_{(s, a) \in \states \times \actions} m_T(s, a) \cdot \expect[\widehat{R}(s, a)]\right) \\
= & \widehat{\mu}(s_0)  \cdot \prod_{(s, a) \in \states \times \actions} \prod_{s' \in \states} \widehat{P}(s' \mid s, a)^{m_T(s, a, s')} \cdot \left( \sum_{(s, a) \in \states \times \actions} m_T(s, a) \cdot \expect[\widehat{R}(s, a)]\right)\\
\end{align*}
where
\[
\left| \widehat{\mu}(s_0) - \mu(s_0) \right| \le \epsilon / (12|\states|), 
\]
\[
\left| \sum_{(s, a) \in \states \times \actions} m_T(s, a) \cdot \expect[\widehat{R}(s, a)] - \sum_{(s, a) \in \states \times \actions} m_T(s, a) \cdot \expect[R(s, a)] \right| \le \epsilon / 12,
\]
\[
\prod_{(s, a) \in \states \times \actions}  \prod_{s'\in \states} \widehat{P}(s' \mid s, a)^{m_T(s, a, s')} 
\ge (1 - \epsilon / 12) \prod_{(s, a) \in \states \times \actions}  \prod_{s'\in \states} P(s' \mid s, a)^{m_T(s, a, s')},
\]
and
\[
\prod_{(s, a) \in \states \times \actions}  \prod_{s'\in \states} \widehat{P}(s' \mid s, a)^{m_T(s, a, s')} 
\le (1 + \epsilon / 12) \prod_{(s, a) \in \states \times \actions}  \prod_{s'\in \states} P(s' \mid s, a)^{m_T(s, a, s')}.
\]
Since
\[
V^{\pi}_{M, H}  - \epsilon / 6 \le  \sum_{T \in \mathcal{T}^{\pi}_2} p(T, M, \pi) \cdot \left( \sum_{(s, a) \in \states \times \actions} m_T(s, a) \cdot \expect[R(s, a)]\right) \le V^{\pi}_{M, H}
\]
and
\[
V^{\pi}_{\widehat{M}, H}  - \epsilon / 6 \le \sum_{T \in \mathcal{T}^{\pi}_2} p(T, \widehat{M}, \pi)\cdot \left( \sum_{(s, a) \in \states \times \actions} m_T(s, a) \cdot \expect[\widehat{R}(s, a)]\right) \le V^{\pi}_{\widehat{M}, H},
\]
we have 
\[
\left| V^{\pi}_{\widehat{M}, H} - V^{\pi}_{M, H}\right| \le \epsilon / 2. 
\]

\end{proof}

The following lemma provides error bound on the empirical model $\widehat{M}$ built by the algorithm.
Note that the error bounds established here are similar to those in Lemma~\ref{lem:approximation}.
\begin{lemma}\label{lem:approximation_gen_model}
With probability at least $1- \delta$, for all $(s, a, s') \in \states \times \actions \times \states$:
\begin{align*}
\left| \widehat{P}(s' \mid s, a) - P(s' \mid s, a) \right| 
\le & \max\left\{ 4\sqrt{\frac{P(s' \mid s, a)\cdot \log(6|\states|^2|\actions| / \delta)}{N} }, \frac{2\log(6|\states|^2|\actions| / \delta)}{N}  \right\},
\end{align*}
\begin{align*}
\left| \widehat{R}(s, a) - \expect[R(s, a)] \right| \le & \max \left\{ 4\sqrt{\frac{\expect[(R(s, a))^2]\cdot \log(6|\states||\actions| / \delta)}{N}} , \frac{2\log(6|\states||\actions| / \delta)}{N}\right\}
\end{align*}
and 
\[
\left| \widehat{\mu}(s) -\mu(s) \right| \le \sqrt{ \frac{\log((6|\states|) / \delta)}{N}}.
\]
\end{lemma}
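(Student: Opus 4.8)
The plan is to apply a concentration inequality to each of the three empirical estimators separately and then take a union bound. The key observation is that all three quantities are averages of $N$ independent, identically distributed, bounded random variables, so standard Bernstein/Chernoff bounds apply directly; the only role of Assumption~\ref{assump:reward} here is to guarantee that the reward samples $r_0, \ldots, r_{N-1}$ lie in $[0,1]$, which controls their range and second moment.

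First I would handle $\widehat{P}(\cdot \mid s, a)$. Fix $(s,a,s') \in \states \times \actions \times \states$ and set $X_i = \indict[s_i' = s'] - P(s' \mid s, a)$, so that $\widehat{P}(s'\mid s,a) - P(s'\mid s,a) = \frac{1}{N}\sum_{i=0}^{N-1} X_i$. The $X_i$ are i.i.d., mean zero, bounded by $1$ in absolute value, with $\expect[X_i^2] \le P(s'\mid s,a)$. Bernstein's inequality gives, for any $t > 0$,
\[
\Pr\left[\left|\textstyle\sum_{i=0}^{N-1} X_i\right| \ge t\right] \le 2\exp\left(\frac{-t^2}{2NP(s'\mid s,a) + t/3}\right),
\]
and choosing $t = 2\sqrt{NP(s'\mid s,a)\log(6|\states|^2|\actions|/\delta)} + \log(6|\states|^2|\actions|/\delta)$ makes the right-hand side at most $\delta/(3|\states|^2|\actions|)$. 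Dividing by $N$ and bounding the sum of the two resulting terms by twice their maximum yields the stated bound for this triple; a union bound over all $|\states|^2|\actions|$ triples gives it simultaneously, on an event of probability at least $1 - \delta/3$.

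Next, the reward estimator is treated identically: with $Y_i = r_i - \expect[R(s,a)]$, the $Y_i$ are i.i.d., mean zero, bounded by $1$, with $\expect[Y_i^2] \le \expect[r_i^2] = \expect[(R(s,a))^2]$ by $r_i \in [0,1]$. The same Bernstein computation, with the threshold rescaled by $\log(6|\states||\actions|/\delta)$ and a union bound over the $|\states||\actions|$ pairs, gives the reward bound on an event of probability $1 - \delta/3$. Finally, for $\widehat{\mu}$ the variables $\indict[s_i = s]$ are i.i.d.\ Bernoulli, so the Chernoff--Hoeffding bound gives $|\widehat{\mu}(s) - \mu(s)| \le \sqrt{\log(6|\states|/\delta)/N}$ with failure probability at most $\delta/(3|\states|)$; a union bound over $s \in \states$ handles all states on an event of probability $1 - \delta/3$. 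A final union bound over these three events completes the proof.

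There is no genuine obstacle here; the lemma is a routine concentration statement, and in fact the error bounds mirror those in Lemma~\ref{lem:approximation} where the visitation counts play the role that $N$ plays here. The only points requiring mild care are (i) choosing the Bernstein constants so that the sum of the ``variance'' and ``range'' terms fits inside the clean $\max\{\cdot,\cdot\}$ form in the statement (this is the source of the factors $4$ and $2$), and (ii) the bookkeeping of the union-bound budget, which is split evenly into $\delta/3$ per estimator and then further over the relevant index sets, producing the $\log(6|\states|^2|\actions|/\delta)$, $\log(6|\states||\actions|/\delta)$, and $\log(6|\states|/\delta)$ factors respectively.
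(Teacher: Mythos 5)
Your proposal is correct and follows essentially the same route as the paper: a per-triple Bernstein bound with the same choice of threshold $t = 2\sqrt{NP(s'\mid s,a)\log(6|\states|^2|\actions|/\delta)} + \log(6|\states|^2|\actions|/\delta)$, the same $a+b\le 2\max\{a,b\}$ step to reach the stated $\max$ form, the analogous treatment of the rewards via the second-moment bound, a Chernoff bound for $\widehat{\mu}$, and the same $\delta/3$ split with union bounds. No gaps.
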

\begin{proof}
For each $(s, a, s') \in \states \times \actions \times \states$, by Bernstein's inequality, 
\begin{align*}
& \Pr \left[ \left|
 \sum_{i = 0}^{N - 1}  \indict\left[s' = s'_i \right] - P(s' \mid s, a) \cdot N
 \right|  \ge t \right]  \\
 \le & 2\exp\left( \frac{-t^2}{2 \cdot N \cdot P(s' \mid s, a) + t / 3} \right).
\end{align*}
Thus, by setting $t = 2\sqrt{N \cdot P(s' \mid s, a)\cdot \log(6|\states|^2|\actions| / \delta)} + \log(6|\states|^2|\actions| / \delta)$, we have
\[
Pr \left[ \left|
 \sum_{i = 0}^{N - 1}  \indict\left[s' = s'_i \right] - P(s' \mid s, a) \cdot N
 \right|  \ge t \right]  \le \delta / (3|\states|^2|\actions|).
\]
By applying a union bound over all $(s, a, s') \in \states \times \actions \times \states$, with probability at least $1 - \delta / 3$, for all $(s, a, s') \in \states \times \actions \times \states$,
\begin{align*}
\left| \widehat{P}(s' \mid s, a) - P(s' \mid s, a) \right| 
\le &2\sqrt{\frac{P(s' \mid s, a)\cdot \log(6|\states|^2|\actions| / \delta)}{N} } + \frac{\log(6|\states|^2|\actions| / \delta)}{N}\\
\le & \max\left\{ 4\sqrt{\frac{P(s' \mid s, a)\cdot \log(6|\states|^2|\actions| / \delta)}{N} }, \frac{2\log(6|\states|^2|\actions| / \delta)}{N} \right\}.
\end{align*}

By a similar argument, with probability at least $1 - \delta / 3$, for all $(s, a) \in \states \times \actions$,
\[
\left|\widehat{R}(s, a) - \expect[R(s, a)] \right| \le \max \left\{ 4\sqrt{\frac{\expect[(R(s, a))^2]\cdot \log(6|\states||\actions| / \delta)}{N}} , \frac{2\log(6|\states||\actions| / \delta)}{N}\right\}. 
\]
Moreover, with probability at least $1 - \delta / 3$, for all $s \in \states$, we have
\[
\left| \widehat{\mu}(s) -\mu(s) \right| \le \sqrt{ \frac{\log(6|\states|) / \delta)}{N}}.
\]
We complete the proof by applying a union bound. 
\end{proof}

Using Lemma~\ref{lem:perturbation_gen_model} and Lemma~\ref{lem:approximation_gen_model}, we can now prove the correctness of our algorithm.
\begin{theorem}\label{thm:gen_model_main}
With probability at least $1 - \delta$, Algorithm~\ref{alg:planning} returns a policy $\pi$ such that
\[
V^{\pi} \ge V^* - \epsilon
\]
by using at most $2^{30} \cdot |\states|^6 \cdot |\actions|^4  / \epsilon^3$ batches of queries. 
\end{theorem}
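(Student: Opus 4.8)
The plan is to feed the empirical-model error bounds of Lemma~\ref{lem:approximation_gen_model} into the perturbation bound of Lemma~\ref{lem:perturbation_gen_model}, and then close with a one-line $\argmax$ comparison. First I would condition on the event of Lemma~\ref{lem:approximation_gen_model}, which holds with probability at least $1-\delta$; on this event the empirical model $\widehat{M}=(\states,\actions,\widehat{P},\widehat{R},H,\widehat{\mu})$ constructed by Algorithm~\ref{alg:gen_model} satisfies, for all $(s,a,s')$,
\[
\left|\widehat{P}(s'\mid s,a)-P(s'\mid s,a)\right|\le\max\left\{4\sqrt{\tfrac{P(s'\mid s,a)\log(6|\states|^2|\actions|/\delta)}{N}},\ \tfrac{2\log(6|\states|^2|\actions|/\delta)}{N}\right\},
\]
together with the stated analogues for $\widehat{R}$ and $\widehat{\mu}$.

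Next I would substitute the value $N=2^{29}\cdot|\states|^5|\actions|^3 H/\epsilon^3$ chosen in Algorithm~\ref{alg:gen_model} (absorbing the logarithmic factor into the constant where needed) and verify that each of the three error bounds above is no larger than the corresponding threshold demanded by the hypotheses of Lemma~\ref{lem:perturbation_gen_model}: the transition threshold $\tfrac{\epsilon}{192|\states|^2|\actions|}\max\!\big(\sqrt{\epsilon P(s'\mid s,a)/(576H|\states||\actions|)},\,\epsilon/(576H|\states||\actions|)\big)$, the reward threshold $\tfrac{\epsilon}{48|\states||\actions|}\max\!\big(\sqrt{\expect[(R(s,a))^2]/H},\,1/H\big)$, and the initial-distribution threshold $\epsilon/(12|\states|)$. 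Matching the dominant square-root terms is exactly where the $|\states|^5|\actions|^3 H/\epsilon^3$ scaling of $N$ is used, while the $1/N$ lower-order terms are comfortably smaller; I expect this routine but slightly tedious arithmetic check — pinning down the exponents on $|\states|,|\actions|,\epsilon$ and the numerical constants — to be the only real work in the proof, so I would carry it out carefully.

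With the hypotheses verified, Lemma~\ref{lem:perturbation_gen_model} gives $\big|V^{\pi}_{\widehat{M},H}-V^{\pi}_{M,H}\big|\le\epsilon/2$ \emph{simultaneously} for every (non-stationary) policy $\pi$ — in particular for $\pi^*$ and for $\widehat{\pi}:=\argmax_{\pi}V^{\pi}_{\widehat{M},H}$, which Algorithm~\ref{alg:gen_model} returns. Then
\[
V^{\widehat{\pi}}_{M,H}\ \ge\ V^{\widehat{\pi}}_{\widehat{M},H}-\epsilon/2\ \ge\ V^{\pi^*}_{\widehat{M},H}-\epsilon/2\ \ge\ V^{\pi^*}_{M,H}-\epsilon,
\]
which is the claimed $\epsilon$-optimality. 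The one point to note is that the maximizer over non-stationary policies is attained and can be computed by finite-horizon dynamic programming on $\widehat{M}$, so the returned policy is well defined; no exploration argument is needed here since the generative model supplies $N$ fresh samples from every $P(s,a)$, $R(s,a)$, and from $\mu$ directly.

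Finally, for the sample complexity: Algorithm~\ref{alg:gen_model} draws $N$ samples for each of the $|\states||\actions|$ state-action pairs and $N$ samples from $\mu$, for a total of $(|\states||\actions|+1)N$ queries; since one batch equals $H$ queries, this is $(|\states||\actions|+1)N/H\le 2^{30}\cdot|\states|^6|\actions|^4/\epsilon^3$ batches, as claimed. This counting step is immediate and presents no difficulty.
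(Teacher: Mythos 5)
Your proposal is correct and follows essentially the same route as the paper's own proof: condition on the event of Lemma~\ref{lem:approximation_gen_model}, check that the resulting error bounds meet the hypotheses of Lemma~\ref{lem:perturbation_gen_model} with the chosen $N$, apply the uniform $\epsilon/2$ value-perturbation bound to both $\pi^*$ and the empirical maximizer, and divide the $(|\states||\actions|+1)N$ queries by $H$ to count batches. The only difference is that you propose to carry out the arithmetic verification of the thresholds explicitly, which the paper merely asserts.
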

\begin{proof}
By Lemma~\ref{lem:approximation_gen_model}, with probability at least $1 - \delta$, 
for all $(s,a, s') \in \states \times \actions \times \states$, we have

    \[|\widehat{P}(s'|s,a) - P(s'|s,a)|\le \frac{\epsilon}{192|\cS|^2|\cA|}\cdot \max\left( \sqrt{\frac{\epsilon{P}(s'|s,a)}{576\cdot H\cdot |\states||\actions|}}, \frac{\epsilon}{576\cdot H\cdot |\states||\actions|}\right),\]
 \[
    \left|\expect[\widehat{R}|s,a] - 
    \expect[{R}|s,a] \right| \le \frac{\epsilon}{48|\cS||\cA|}\cdot \max \left \{ \sqrt{\frac{\expect[(R(s, a))^2]}{H}} , \frac{1}{H}\right \},
    \]
    and
    \[\left|\mu(s) - \widehat{\mu}(s')\right| \le \epsilon / (12|\states|).\]
    We condition on this event in the remaining part of this proof.
    By Lemma~\ref{lem:approximation_gen_model}, for any (possibly non-stationary) policy $\pi$, we have
    \[
\left|V^{\pi}_{\widehat{M},H} - V^{\pi}_{{M}, H}  \right| \le \epsilon / 2. 
\]
Let $\pi$ be the policy retuned by the algorithm. We thus have
\[
V^{\pi}_{M, H} \ge V^{\pi}_{\widehat{M}, H} - \epsilon / 2 \ge V^{\pi^*}_{\widehat{M}, H} - \epsilon / 2\ge V^{\pi^*}_{M, H} - \epsilon. 
\]

The total number of samples taken by the algorithm is 
\[
(|\states||\actions| + 1) \cdot N \le 2^{30} \cdot |\states|^6 \cdot |\actions|^4  \cdot H / \epsilon^3.
\]

\end{proof}

\section*{Acknowledgements}
The authors would like to thank Simon S. Du for helpful discussions, and to the anonymous reviewers for helpful comments.

Ruosong Wang was supported in part by NSF IIS1763562, US Army 
W911NF1920104, and ONR Grant N000141812861.
Part of this work was done while Ruosong Wang and Lin F. Yang were visiting the Simons Institute for the Theory of Computing.
\bibliographystyle{alpha}
\bibliography{ref}
\end{document}